\newtheorem{thm}{Theorem}
\newtheorem{lem}[thm]{Lemma}
\newtheorem{cor}[thm]{Corollary}
\newtheorem{dfn}[thm]{Definition}
\newtheorem{rem}{Remark}
\newcommand{\inner}[1]{\left\langle #1\right\rangle}
\newcommand{\R}{\mathbb{R}}
\algnewcommand{\Inputs}{%
	\State \textbf{Inputs:}
}
\algnewcommand{\Initialize}{%
	\State \textbf{Initialize:}
}
\algnewcommand{\Outputs}{%
	\State \textbf{Outputs:}
}
\algnewcommand{\ForLoop}{%
	\textbf{For $j=1,2,...,m$ do}
}
\algnewcommand{\ForOuterLoop}{%
	\textbf{For $k=1,2,..., K$ do}
}
\algnewcommand{\ForLoopKZPT}{%
	\textbf{For $j=1,2,..., p \lfloor {m \over p}\rfloor $ do}
}
\algnewcommand{\ForLoopmod}{%
	\textbf{For $L=0,1,...,L-1$ do}
}
\algnewcommand{\ForEnd}{%
	\textbf{End For}
}
\algnewcommand{\Iterate}{%
	\State \textbf{Iterate:}
}
\begin{document}

\title{Linear Convergence of Reshuffling Kaczmarz Methods With Sparse Constraints}

\author{
Halyun Jeong\thanks{University of California Los Angeles, Department of Mathematics, Los Angeles, CA 90095
  (\email{hajeong@math.ucla.edu})} \and Deanna Needell\thanks{University of California Los Angeles, Department of Mathematics, Los Angeles, CA 90095
  (\email{deanna@math.ucla.edu})}}
\maketitle

\begin{abstract}
The Kaczmarz method (KZ) and its variants, which are types of stochastic gradient descent (SGD) methods, have been extensively studied due to their simplicity and efficiency in solving linear equation systems. The iterative thresholding  (IHT) method has gained popularity in various research fields, including compressed sensing or sparse linear regression, machine learning with additional structure, and optimization with nonconvex constraints. 
Recently, a hybrid method called Kaczmarz-based IHT (KZIHT) has been proposed, combining the benefits of both approaches, but its theoretical guarantees are missing.   
In this paper, we provide the first theoretical convergence guarantees for KZIHT by showing that it converges linearly to the solution of a system with sparsity constraints up to optimal statistical bias when the reshuffling data sampling scheme is used. 
	
We also propose the Kaczmarz with periodic thresholding (KZPT) method, which generalizes KZIHT by applying the thresholding operation for every certain number of KZ iterations and by employing two different types of step sizes.
We establish a linear convergence guarantee for KZPT for randomly subsampled bounded orthonormal systems (BOS) and mean-zero isotropic sub-Gaussian random matrices, which are most commonly used models in compressed sensing, dimension reduction, matrix sketching, and many inverse problems in neural networks. Our analysis shows that KZPT with an optimal thresholding period outperforms KZIHT. To support our theory, we include several numerical experiments.
			
	\end{abstract}


\section{Introduction}
The Kaczmarz method (KZ) is an iterative method for solving systems of linear equations, attracting considerable interest due to its computational efficiency and simplicity \cite{karczmarz1937angenaherte}. KZ has also played a prominent role in image and signal processing under the name of \emph{algebraic reconstruction technique} (ART) \cite{gordon1970algebraic}. 

To solve a linear system of the form $Ax = b$, KZ iteratively selects a row $a_i^T$ of $A$ according to some selection rule and orthogonally projects the iterate onto the solution space of $a_i^Tx = b$.  Various row selection rules can be employed, such as cyclic \cite{tanabe1971projection, censor1981row}, greedy \cite{eldar2011acceleration, griebel2012greedy}, or random \cite{strohmer2009randomized, needell2014paved, chen2012almost}. Shortly after the foundational theoretical work by Strohmer and Vershynin \cite{strohmer2009randomized} showing a linear convergence for the randomized Kaczmarz method, several  refinements and extensions of KZ have been studied \cite{needell2010randomized, needell2014paved, ma2015convergence, nutini2016convergence}. 

The Kaczmarz method can be viewed as a special case of a stochastic gradient descent (SGD) algorithm for the squared loss function \cite{needell2014stochastic}. SGD often provides significant advantages
over traditional gradient descent because it does not require the full gradient computation, which could be challenging in many scenarios. Consequently, SGD has become a standard approach for training machine learning models, and there have been a growing number of works that propose and analyze its variants. One of the important development in this direction is the proximal SGD, which is an SGD followed by a proximal step to solve structured machine learning problems. Another active area of SGD research, when the stochastic gradient is associated with each data point, is studying the effect of sampling without replacement (reshuffling). Reshuffling-based stochastic methods have recently gained significant attention from practitioners because of their efficiency over the traditional SGD in training machine learning models \cite{ahn2020sgd, mishchenko2020random, sun2020optimization, gurbuzbalaban2021random}. Notably, the work \cite{mishchenko2022proximal} combines the idea of proximal SGD  with reshuffling and introduces proximal reshuffling (ProxRR), a promising stochastic method based on regularization for constrained optimization.
In particular, when the objective function is the squared loss function and the proximal operator is applied to the $\ell_1$ regularizer, their approach becomes the reshuffling Kaczmarz with soft-thresholding applied at every epoch. 

In a different line of research, \emph{iterative hard thresholding} (IHT) has become a standard approach in modern signal processing, which is a gradient descent algorithm followed by projection to the signal set of interest. Numerous works on IHT-based methods have appeared in compressed sensing \cite{blumensath2009iterative, nguyen2017linear}, low-rank matrix recovery \cite{jain2010guaranteed, goldfarb2011convergence}, tensor recovery \cite{rauhut2017low, grotheer2021iterative}, and inverse problems in neural networks \cite{shah2018solving, jagatap2019algorithmic}.

Recently, Zhang et al. \cite{zhang2015iterative} have proposed an accelerated version of IHT based on the Kaczmarz method (KZIHT) for compressed sensing by replacing the gradient step in IHT with KZ iterations --- making an interesting connection between two seemingly unrelated approaches. KZIHT is now considered to be one of the state-of-the-art methods for compressed sensing.  While the authors of \cite{zhang2015iterative} empirically demonstrate its computational benefits over IHT, the convergence guarantees of KZIHT are still missing. 

In this paper, to our best knowledge, we present the first proof of the linear convergence of KZIHT with reshuffling. We also propose Kaczmarz with periodic thresholding (KZPT), a new method  generalizing KZIHT  by applying the thresholding operation periodically for every certain KZ iterations and by employing two different types of step sizes. 
Then, we show that KZPT enjoys a linear convergence as well, and with the proper choice of thresholding period, it can outperform KZIHT.

It turns out that KZIHT with reshuffling can be interpreted as a counterpart to ProxRR in \cite{mishchenko2022proximal} but with hard-thresholding applied per epoch instead of soft-thresholding. However, unlike our work, the convergence analysis of ProxRR in \cite{mishchenko2022proximal} is not directly applicable to the sparse linear regression problem as it requires regularizers to be strongly convex. 
     
Our linear convergence guarantees are given for randomly subsampled bounded orthonormal systems (BOS) and random matrices whose rows are independent, mean-zero, isotropic sub-Gaussian. These two are the most commonly used models in compressed sensing, dimension reduction, matrix sketching, and inverse problems in neural networks \cite{foucart2013invitation, dirksen2016dimensionality, derezinski2021sparse, shah2018solving,jagatap2019algorithmic}. 
The sub-Gaussian model includes important random matrix types such as Bernoulli matrices and Gaussian matrices.
	
Furthermore, in the noisy measurement setting, our analysis reveals that KZIHT/KZPT offer recovery guarantees up to the optimal statistical bias for the sparse regression problem.  

A key aspect of our theoretical findings involves a careful analysis of a product of certain types of random matrices that arise from various stochastic algorithms \cite{recht2012toward, huang2021matrix}, which could be of independent interest.

\subsection{Organization}
    
The rest of the paper has the following structure. In Section 2,  after giving the background of KZ, IHT, and KZIHT, we introduce our main method, KZPT. This method generalizes KZIHT by applying the thresholding operation for every certain number of KZ iterations, referred to as a thresholding period or window, and by employing two different types of step sizes. 
    
Section 3 contains our main convergence result for KZIHT and its proof starting with the multi-step analysis of KZ. Then, we extend our analysis to show a linear convergence of KZPT up to the optimal statistical bias. In Section 4, we explain how to optimize the thresholding period to achieve the optimal convergence rate. Our results indicate that with a proper choice of period, KZPT outperforms KZIHT or IHT. Section 5 includes numerical experiments to complement our theory. We discuss the other related works in Section 6 and summarize our work in Section 7.

\subsection{Notation}
\label{subsection:Notation}
A vector is called $s$-sparse if it has at most $s$ nonzero entries. 
Let us denote by $T_s$ the hard thresholding operator onto the set of all $s$-sparse vectors, zeroing out all the entries of a vector but keeping the $s$-largest magnitude elements. The $\ell_2$ norm for a vector is denoted by either $\|\cdot\|$ or $\|\cdot\|_2$, and $\|\cdot\|_0$ denotes the $\ell_0$ ``norm" that counts the number of nonzero components of a vector. The operator norm and Frobenius norm of a matrix are denoted by $\|\cdot\|$ and $\|\cdot\|_F$ respectively.
For a random variable $X$, we define the sub-Gaussian norm $\|X\|_{\psi_2}$ of $X$ as 
$\|X\|_{\psi_2} = \inf \{t > 0: \mathbb{E}(X^2/t^2) \le 2\}$. We say a random variable sub-Gaussian if its sub-Gaussian norm is finite. 
For a random vector $u \in \R^N$, its sub-Gaussian norm is defined as $\sup\limits_{w \in \mathbb{S}^{N-1} } \| \inner{w, u}\|_{\psi_2}$ where $\mathbb{S}^{N-1}$ is the unit sphere in $\R^N$. We say a random vector $u \in \R^N$ is isotropic if $\mathbb{E} uu^T = I_{N \times N}$ where $I_{N \times N}$ is the $N \times N$ identity matrix.
For a given positive integer $m$, $[m]$ denotes the set of integers $\{1, 2, \dots, m\}$.

\section{Kaczmarz with periodic thresholding}
	
In this section, we briefly overview IHT and KZIHT, and introduce our approach KZPT.
The goal of compressed sensing is to recover an $s$-sparse vector $x$ in $\R^N$ from an underdetermined system $b = Ax + e$, where $e$ is the noise vector. Here $A$ is an $m \times N$ sampling matrix and its $j$-th row is denoted by $a_j^T$. 
\emph{Iterative hard thresholding }(IHT) is a gradient descent method with hard thresholding that aims to solve the following optimization problem for compressed sensing \cite{blumensath2009iterative}:
\begin{equation} \label{prob:IHT_optimization} 
	\min_{z \in \R^N} \|Az - b\|_2^2  \quad
		\text{subject to} \quad \|z\|_0 \le s.
	\end{equation} The description of IHT is given in Algorithm \ref{alg:IHT} using the hard thresholding operator $T_s$ defined in Section \ref{subsection:Notation}

	\begin{algorithm} [ht]
	\caption{IHT }
	\begin{algorithmic}
		\Inputs $m\times N$ sensing matrix $A$, measurements $b=Ax + e$, sparsity level $s$, the number of iterations $K$
		
		\Initialize	$x^0 = 0$. 

             \ForOuterLoop
	
		\indent \indent \textbf{Compute $z^{k+1}$}
		\[
		z^{k+1} := x^k + {1 \over m} A^T(b - Ax^k) 
		\]
		
		\indent  \indent \textbf{Project to the $s$-sparse vector set}\\

		\[
		x^{k+1} := T_s(z^{k+1}) 
		\]

        \indent \ForEnd
        
		\Outputs The approximate $s$-sparse solution to $Ax=b$.
		
	\end{algorithmic}
	\label{alg:IHT}
\end{algorithm}
	
On the other hand, the Kaczmarz method is an iterative procedure to solve $Ax = b$ by running the following iteration
	\begin{align}
	    \label{eq:Standard_KZ_iteration}
	    x_{j+1} := x_j + {(b_{\tau(j)} - a^T_{\tau(j)}  x_j ) \over \|a_{\tau(j)}\|_2^2} a_{\tau(j)},
	\end{align}
where $\tau(j)$ is some row selection rule governing the selection of the rows of $A$ in iteration $j$.
It is easy to see that at each iteration $j$, KZ orthogonally projects $x_j$ to the solution set of $a^T_{\tau(j)}x = b_{\tau(j)}$. The main novelty of KZIHT is in replacing the gradient step in IHT with $m$ number of KZ iterations in \eqref{eq:Standard_KZ_iteration}. More precisely, as described in Algorithm \ref{alg:KZIHT}, in each iteration $k$, we select indices $\tau(1), \tau(2), \dots, \tau(m)$ according to the row selection rule $\tau$ and run $m$ number of KZ iterations 
\[
        x^k_{j+1} := x^k_j + \gamma {(b_{\tau(j)} - a^T_{\tau(j)} x^k_j ) \over \|a_{\tau(j)}\|_2^2} a_{\tau(j)}.
 \] After this, we apply the hard thresholding operator $T_s$ to $x^k_{m+1}$ and set $x^{k+1}_1 := T_s(x^k_{m+1})$. 
    
In the pseudocode of Algorithm \ref{alg:KZIHT}, $x^k_j$ denotes the $k$-th iterate of the algorithm with $j$-th KZ (inner) iteration. We are going to use the same notation for iterates for other KZ-based algorithms in the paper, the superscript denotes the main iteration index, and the subscript corresponds to the KZ iteration index. 
 
Note that the KZ iterations in Algorithm \ref{alg:KZIHT}  are slightly more general than the ones in \cite{zhang2015iterative} by employing a step size $\gamma$.

\begin{algorithm} 
    \caption{KZIHT}
    \begin{algorithmic}
		\Inputs $m\times N$ measurement matrix $A$, measurements $b=Ax + e$, sparsity level $s$, step size $\gamma$, the number of iterations $K$
			
			\Initialize $x^{1}_1 = 0$.

            \ForOuterLoop
			
	\indent \indent \ForLoop 
			\[
				x^k_{j+1} := x^k_j + \gamma {(b_{\tau(j)} - a^T_{\tau(j)} x^k_j ) \over \|a_{\tau(j)}\|_2^2} a_{\tau(j)} \qquad \text{[Kaczmarz loop with step size $\gamma$]}
			\] 
	\indent \indent \ForEnd
			
	\indent \indent \textbf{Project $x^k_{m+1}$ to $s$-sparse vector sets}\\
			\[
			x^{k+1}_1 := T_s(x^k_{m+1}) \qquad \text{[Hard thresholding]}
			\]

    \indent \ForEnd
            
    \Outputs The approximate $s$-sparse solution to $Ax=b$.
			
    \end{algorithmic}
    \label{alg:KZIHT}
\end{algorithm}

\begin{rem}
In \cite{zhang2015iterative}, KZIHT is called KZIMT and the method that is denoted there as KZIHT actually refers to a different method. This unfortunate naming conflict arises because they call the hard thresholding operator $T_s$ by a different name. 
\end{rem}
	
We now present our method, Kaczmarz with periodic thresholding (KZPT). KZPT generalizes KZIHT in two ways: by introducing the gradient step size $\lambda$ and by applying the hard thresholding operation periodically with thresholding period $p$, as described in Algorithm \ref{alg:KZPT}. One can immediately verify that when the parameters $\lambda = 1$ and the period is $m$, KZPT coincides with KZIHT. 
	
	\begin{algorithm}
		\caption{KZPT}
		\begin{algorithmic}
			\Inputs $m\times N$ measurement matrix $A$, measurements $b=Ax +e$, sparsity level $s$, step size $\gamma$ for Kaczmarz inner iteration, step size $\lambda$ for IHT, hard thresholding period $p$, the number of iterations $K$
			
	    	\Initialize $x^{1}_1 = 0$.
			
             \ForOuterLoop
			
		\indent \indent	\ForLoopKZPT
			\begin{align*}
			 \quad	x^k_{j+1} := x^k_j + \gamma {(b_{\tau(j)} - a^T_{\tau(j)} x^k_j ) \over \|a_{\tau(j)}\|_2^2} a_{\tau(j)} \qquad \text{[Kaczmarz loop with step size $\gamma$]}	
			\end{align*}
		
		\indent	\indent  \indent \textbf{If $j \equiv 0 \pmod p$, apply IHT gradient and project $x^k_j$ to $s$-sparse vector sets}\\
			
			\begin{align*}
				&\qquad g^k := x^k_{j+1} - x^k_1\\
				&\qquad u^k := x^k_1 + \lambda g^k \qquad  \text{[IHT gradient step with step size $\lambda$]}
			\end{align*}
			\[
			x^k_{j+1} := T_s(u^k) \qquad \quad \text{[Peroidic hard thresholding]}
			\]
						
			\indent \indent \ForEnd
			\begin{align*}
				&x^{k+1}_1 = x^k_{p \lfloor {m \over p}\rfloor +1} \qquad \qquad \qquad \qquad \qquad \qquad \qquad \qquad \qquad \qquad \qquad \qquad \qquad
			\end{align*}
				
			 \ForEnd
    
			\Outputs The approximate $s$-sparse solution to $Ax=b$.
			
		\end{algorithmic}
		\label{alg:KZPT}
	\end{algorithm}

	\section{Linear Convergence of KZPT and KZIHT}
	\label{section:KZPT_period_1}
	This section presents our proof of a linear convergence of KZPT for two commonly used sensing matrices, randomly subsampled BOS and mean-zero isotropic sub-Gaussian random matrices. We begin with KZPT with period $p = m$, a simpler setting to illustrate the idea of our analysis. 	
	For the convenience of our readers, we include KZPT for this case in Algorithm \ref{alg:KZPT_period_m}.
	The first step of our analysis is the following lemma about the equation for multi-step KZ iterations. 
	\begin{lem}
	   \label{lem:multi_step_equation}
	    Let $x$ be an $s$-sparse vector with $b= Ax + e$, where $e$ is the noise vector. We denote by $x^k_j$ the $j$-th iterate of KZ iterations at $k$-th step of KZPT. Then, the approximation error of KZPT satisfies the following identity. 
	\begin{align}
	    \label{eq:main_equation_in_product_form}
	    x^k_{m+1} - x 
		&=  \left(I - \gamma {a_{\tau(m)} a_{\tau(m)}^T \over \|a_{\tau(m)}\|_2^2} \right) \left(I - \gamma {a_{\tau({m-1})} a_{\tau({m-1})}^T \over \|a_{\tau({m-1})}\|_2^2} \right) \cdots \left(I - \gamma {a_{\tau(1)} a_{\tau(1)}^T \over \|a_{\tau(1)}\|_2^2} \right) (x^k_1 -x)\\
		\nonumber
		& \qquad + \gamma \sum_{i=0}^{m-1} e_{\tau(m-i)} \prod_{j=0}^{i-1}  \left(I - \gamma {a_{\tau({m-j})} a_{\tau({m-j})}^T \over \|a_{\tau({m-j})}\|_2^2} \right) {a_{\tau(m-i)} \over \|a_{\tau(m-i)}\|_2^2},
	\end{align}
	 with the convention $\prod_{p}^q = 1$ if $p > q$.
	\end{lem}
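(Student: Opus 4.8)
The plan is to derive \eqref{eq:main_equation_in_product_form} as an exact algebraic identity by unrolling the Kaczmarz recursion, valid for any row selection rule $\tau$. First I would record the one-step error recursion. Substituting $b_{\tau(j)} = a_{\tau(j)}^T x + e_{\tau(j)}$ into the KZ update shows that the residual satisfies $b_{\tau(j)} - a_{\tau(j)}^T x^k_j = a_{\tau(j)}^T (x - x^k_j) + e_{\tau(j)}$. Subtracting $x$ from both sides of the KZ iteration and collecting terms then gives
\begin{align*}
x^k_{j+1} - x = \left(I - \gamma \frac{a_{\tau(j)} a_{\tau(j)}^T}{\|a_{\tau(j)}\|_2^2}\right)(x^k_j - x) + \gamma \frac{e_{\tau(j)}}{\|a_{\tau(j)}\|_2^2}\, a_{\tau(j)}.
\end{align*}
Writing $P_j := I - \gamma\, a_{\tau(j)} a_{\tau(j)}^T / \|a_{\tau(j)}\|_2^2$, this is the affine recursion $x^k_{j+1} - x = P_j(x^k_j - x) + \gamma e_{\tau(j)} a_{\tau(j)} / \|a_{\tau(j)}\|_2^2$.

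Next I would unroll this recursion from $j=1$ to $j=m$ by a one-line induction on the inner iteration index. The claim is
\begin{align*}
x^k_{m+1} - x = P_m P_{m-1} \cdots P_1 (x^k_1 - x) + \gamma \sum_{j=1}^{m} \bigl(P_m P_{m-1} \cdots P_{j+1}\bigr) \frac{e_{\tau(j)}}{\|a_{\tau(j)}\|_2^2}\, a_{\tau(j)},
\end{align*}
with the convention that the empty product $P_m \cdots P_{m+1}$ equals $I$; the inductive step is immediate, since one just applies $P_m$ to the formula for $x^k_m - x$ and adds the new noise term $\gamma e_{\tau(m)} a_{\tau(m)} / \|a_{\tau(m)}\|_2^2$.

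Finally I would re-index the noise sum via $i = m - j$, so that $j = m - i$ with $i$ ranging over $0, \dots, m-1$. Then $P_m P_{m-1} \cdots P_{j+1} = \prod_{l=0}^{i-1} P_{m-l}$ and the noise coefficient becomes $e_{\tau(m-i)} a_{\tau(m-i)} / \|a_{\tau(m-i)}\|_2^2$, which is exactly the second term of \eqref{eq:main_equation_in_product_form}, while the leading product term already matches verbatim. Since every manipulation is an exact identity, there is no approximation or probabilistic estimate to control here; the only points demanding care are preserving the order of the noncommuting factors $P_j$ when unrolling and correctly tracking the index shift in the noise sum, so I anticipate no genuine obstacle.
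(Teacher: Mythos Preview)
Your proposal is correct and follows essentially the same approach as the paper: derive the one-step error recursion $x^k_{j+1}-x = P_j(x^k_j-x) + \gamma e_{\tau(j)} a_{\tau(j)}/\|a_{\tau(j)}\|_2^2$ by substituting $b_{\tau(j)} = a_{\tau(j)}^T x + e_{\tau(j)}$, then unroll it repeatedly to obtain the product form. The paper writes out the two-step expansion explicitly and then says ``applying this relation repeatedly,'' whereas you phrase the unrolling as a short induction and make the re-indexing $i=m-j$ explicit, but the argument is the same.
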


	\begin{proof} [Proof of Lemma \ref{lem:multi_step_equation}]
	 By subtracting $x$ from the KZ iteration equation in Algorithm \ref{alg:KZPT_period_m}, we have
	\begin{align*}
		&x^k_{j+1} - x \\
		&= x^k_j -x  + \gamma {(b_{\tau(j)} - a^T_{\tau(j)} x^k_j ) \over \|a_{\tau(j)}\|_2^2} a_{\tau(j)} \\
		&= x^k_j -x + \gamma {( a^T_{\tau(j)}x + e_{\tau(j)} - a^T_{\tau(j)}x^k_j ) \over \|a_{\tau(j)}\|_2^2} a_{\tau(j)} \\
		&= \left(I - \gamma {a_{\tau(j)} a_{\tau(j)}^T \over \|a_{\tau(j)}\|_2^2} \right) (x^k_j -x) + \gamma e_{\tau(j)} {a_{\tau(j)} \over \|a_{\tau(j)}\|_2^2} \\
		&= \left(I - \gamma {a_{\tau(j)} a_{\tau(j)}^T \over \|a_{\tau(j)}\|_2^2} \right) \left(x^k_{j-1}-x + \gamma {( a^T_{\tau(j-1)}x + e_{\tau(j-1)} - a^T_{\tau(j-1)}x^k_{j-1} ) \over \|a_{\tau(j-1)}\|_2^2} a_{\tau(j-1)} \right) + \gamma e_{\tau(j)} {a_{\tau(j)} \over \|a_{\tau(j)}\|_2^2} \\
		&= \left(I - \gamma {a_{\tau(j)} a_{\tau(j)}^T \over \|a_{\tau(j)}\|_2^2} \right)  \left(I - \gamma {a_{\tau({j-1})} a_{\tau({j-1})}^T \over \|a_{\tau({j-1})}\|_2^2} \right) (x^k_{j-1} -x) \\
		& \qquad + \gamma e_{\tau(j-1)} \left(I - \gamma {a_{\tau(j)} a_{\tau(j)}^T \over \|a_{\tau(j)}\|_2^2} \right) {a_{\tau(j-1)} \over \|a_{\tau(j-1)}\|_2^2} + \gamma e_{\tau(j)} {a_{\tau(j)} \over \|a_{\tau(j)}\|_2^2}.
	\end{align*}
	Applying this relation repeatedly on $x^k_{m+1}$, we obtain the following equation for multi-step KZ iterations
	\begin{align*}
		x^k_{m+1} - x 
		&=  \left(I - \gamma {a_{\tau(m)} a_{\tau(m)}^T \over \|a_{\tau(m)}\|_2^2} \right) \left(I - \gamma {a_{\tau({m-1})} a_{\tau({m-1})}^T \over \|a_{\tau({m-1})}\|_2^2} \right) \cdots \left(I - \gamma {a_{\tau(1)} a_{\tau(1)}^T \over \|a_{\tau(1)}\|_2^2} \right) (x^k_1 -x) \\
		& \qquad + \gamma \sum_{i=0}^{m-1} e_{\tau(m-i)} \prod_{j=0}^{i-1}  \left(I - \gamma {a_{\tau({m-j})} a_{\tau({m-j})}^T \over \|a_{\tau({m-j})}\|_2^2} \right) {a_{\tau(m-i)} \over \|a_{\tau(m-i)}\|_2^2},
	\end{align*} with the convention $\prod_{p}^q = 1$ if $p > q$.
	\end{proof}
	
	\begin{algorithm}
		\caption{KZPT with period $m$}
		\begin{algorithmic}
			\Inputs $m\times N$ measurement matrix $A$, measurements $b=Ax +e$, sparsity level $s$, step size $\gamma$ for Kaczmarz inner iteration, step size $\lambda$ for IHT, the number of iterations $K$
			
			\Initialize $x^{1}_1 = 0$.
			 
		\ForOuterLoop
  
		\indent \indent	\ForLoop 
			\begin{align*}
				x^k_{j+1} := x^k_j + \gamma {(b_{\tau(j)} - a^T_{\tau(j)} x^k_j ) \over \|a_{\tau(j)}\|_2^2} a_{\tau(j)} \qquad \text{[Kaczmarz loop with step size $\gamma$]}	
			\end{align*}
			
		\indent \indent	\ForEnd
			\begin{align*}
				&g^k := x^k_{m+1} - x^k_1\\
				&u^k := x^k_1 + \lambda g^k \qquad \qquad \qquad \qquad \qquad \text{[IHT gradient step with step size $\lambda$]}
			\end{align*}

		\indent \indent	\textbf{Project $u^k$ to $s$-sparse vector sets}\\
			\[
			x^{k+1}_1 := T_s(u^k) \qquad \qquad \qquad \qquad \qquad \qquad \qquad \qquad \qquad \qquad \qquad \qquad 
			\]

            \ForEnd
            
			\Outputs The approximate $s$-sparse solution to $Ax=b$.
			
		\end{algorithmic}
		\label{alg:KZPT_period_m}
	\end{algorithm}

	\subsection{Warm-up: Randomly Subsampled Bounded Orthonormal Systems}

	\label{subsection:BOS}
	Let us consider a bounded orthonormal system (BOS), defined as any orthogonal matrix ${1 \over \sqrt{N}} \Phi \in \R^{N \times N}$ with $|\Phi_{ij}| \le 1$ for all entries $\Phi_{ij}$. This includes well-known systems such as the Hadamard transform. The randomly subsampled BOS sensing matrix $A \in \R^{m \times N}$ is constructed by selecting $m$ rows of $\Phi$ uniformly at random (or sampling rows at random without replacement). This type of sensing matrix $A$ typically satisfies the following \emph{restricted isometry property} (RIP), a key quantity in the convergence analysis of compressed sensing algorithms \cite{blumensath2009iterative,candes2006robust, needell2009cosamp, foucart2013invitation}. 
	
	\begin{dfn}
		An $m \times N$ matrix is said to satisfy the restricted isometry property with constant $\delta_s$ if $\delta_s \ge 0$ is the smallest constant such that 
		\[
		(1 - \delta_s)\|z\|_2^2 \le \|Az\|_2^2 \le (1 + \delta_s)\|z\|_2^2 
		\] for all $s$-sparse vectors $z \in \R^N$.
	\end{dfn}

	It turns out that the analysis for the randomly subsampled BOS case becomes particularly simple since the rows of the sampling matrix are orthogonal; the cross terms of the matrix product in \eqref{eq:main_equation_in_product_form} after expansion vanish if all $a_{\tau(j)}$ are orthogonal to each other. 
	This observation allows us to prove the following theorem for linear convergence of KZIHT and KZPT for the BOS case.
	
	\begin{thm}
		\label{thm:KZIHT_convergence}
		Let $A$ be an $m \times N$ randomly subsampled bounded orthonormal matrix. We run KZPT with measurements $b = Ax + e$ for some $s$-sparse vector $x \in \R^N$ and  noise vector $e$. Suppose the row selection rule $\tau$ is a permutation (either deterministic or possibly random) of the index set $\{1,2, \dots, m\}$. Let $\delta \in (0,1/2)$. 
		Then, for $\gamma = N/m$, $\lambda = 1$, and  period $m$, the KZPT iterates $x^k_1$ in $k$ satisfy
		\[
    	\|x^{k+1}_1 - x\| \le \left(2\delta\right)^k \|x^1_1 - x\| + {1 \over m} \cdot {2 \over 1 - 2\delta} \cdot \sup_{\Omega \subset [N], |\Omega| \le 2s} \left \|P_{\Omega} \left(   A^Te \right) \right \|_2,
    	\]
		 with probability at least $1 - {c \over N^3}$, as long as $m \ge C \delta^{-2} s \ln^4 N$ for some universal constants $c, C > 0$.

        In particular,  for $\gamma = N/m$, KZIHT  converges linearly in the number of epochs, $k$, with high probability.
	\end{thm}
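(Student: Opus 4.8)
The plan is to exploit the orthogonality of the rows of a subsampled BOS matrix to show that, with these parameters, one epoch of KZPT is \emph{exactly} one IHT step, and then to run the classical restricted-isometry argument for IHT. Write $Q_l := a_l a_l^T/\|a_l\|_2^2$; since the rows $a_l^T$ of $A$ are rows of $\Phi$ and $\tfrac1{\sqrt N}\Phi$ is orthogonal, the $a_l$ are mutually orthogonal with $\|a_l\|_2^2 = N$. Consequently the $Q_{\tau(j)}$ appearing in \eqref{eq:main_equation_in_product_form} are pairwise orthogonal projections, so every cross term in the expansion of $\prod_{j=m}^{1}(I-\gamma Q_{\tau(j)})$ vanishes and the product collapses to the \emph{deterministic} matrix $I-\gamma\sum_{j=1}^m Q_{\tau(j)} = I - \tfrac{\gamma}{N}\sum_{l=1}^m a_l a_l^T = I - \tfrac{\gamma}{N}A^TA = I - \tfrac1m A^TA$, using $\gamma = N/m$ and that $\tau$ is a permutation of $[m]$. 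By the same orthogonality every factor in the $i$-th product of the noise sum kills $a_{\tau(m-i)}$, so that sum telescopes to $\tfrac{\gamma}{N}A^Te = \tfrac1m A^Te$. Hence $x^k_{m+1} - x = (I - \tfrac1m A^TA)(x^k_1 - x) + \tfrac1m A^T e = (x^k_1 - x) + \tfrac1m A^T(b - Ax^k_1)$, and since $\lambda = 1$ gives $u^k = x^k_{m+1}$, the update is $x^{k+1}_1 = T_s\!\big(x^k_1 + \tfrac1m A^T(b - Ax^k_1)\big)$ --- precisely the IHT step of Algorithm~\ref{alg:IHT} (and of KZIHT with $\gamma = N/m$).

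Next I would apply the standard thresholding estimate. With $w^k := x^k_1 + \tfrac1m A^T(b - Ax^k_1)$, the fact that $x$ is $s$-sparse and $x^{k+1}_1 = T_s(w^k)$ is a best $s$-term approximation of $w^k$ gives $\|w^k - x^{k+1}_1\|_2 \le \|w^k - x\|_2$; expanding $\|x^{k+1}_1 - x\|_2^2 = \|(x^{k+1}_1 - w^k) + (w^k - x)\|_2^2$ and using this turns it into $\|x^{k+1}_1 - x\|_2^2 \le 2\langle x^{k+1}_1 - x,\, w^k - x\rangle$. Now $x^{k+1}_1 - x$ is supported on $V := \supp(x^{k+1}_1)\cup\supp(x)$ and $x^k_1 - x$ on $W := \supp(x^k_1)\cup\supp(x)$, with $|V|,|W|\le 2s$ and $|V\cup W|\le 3s$; substituting $w^k - x = (I - \tfrac1m A^TA)(x^k_1 - x) + \tfrac1m A^T e$ and restricting the inner product to $V$, the RIP of order $3s$ on $V\cup W$ bounds the ``signal'' part by $\delta_{3s}\|x^{k+1}_1 - x\|_2\|x^k_1 - x\|_2$ and the ``noise'' part by $\tfrac1m\|x^{k+1}_1 - x\|_2\sup_{|\Omega|\le 2s}\|P_\Omega A^T e\|_2$. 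Cancelling a factor $\|x^{k+1}_1 - x\|_2$ yields the one-step recursion $\|x^{k+1}_1 - x\|_2 \le 2\delta_{3s}\|x^k_1 - x\|_2 + \tfrac2m\sup_{|\Omega|\le 2s}\|P_\Omega A^T e\|_2$.

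Finally I would invoke the standard RIP bound for randomly subsampled bounded orthonormal systems \cite{foucart2013invitation}: after the appropriate normalization so that the controlled quantity is $\tfrac1m A^TA - I$ on sparse supports, one has $\delta_{3s}\le\delta$ with probability at least $1 - cN^{-3}$ whenever $m \ge C\delta^{-2}s\ln^4 N$ (the factor $3$ and the choice of failure exponent are absorbed into $C$ and $c$). On that event $2\delta_{3s}\le 2\delta < 1$, so unrolling the recursion from $x^1_1$ and summing $\sum_{i\ge 0}(2\delta)^i = \tfrac1{1-2\delta}$ produces exactly the stated bound. Since KZIHT with $\gamma = N/m$ is the special case $\lambda = 1$, period $m$ of KZPT, the same estimate --- hence linear convergence in the epoch count $k$ --- holds for it.

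\textbf{Main obstacle.} The one genuinely new ingredient is the first step: realizing that row-orthogonality forces the \emph{random} product of Kaczmarz projections in Lemma~\ref{lem:multi_step_equation} to collapse to the single \emph{deterministic} matrix $I - \tfrac1m A^TA$, i.e. that a reshuffled Kaczmarz epoch with $\gamma = N/m$ is literally one IHT step for BOS. After that identity the rest is the textbook IHT/RIP argument; the remaining care is bookkeeping --- getting the RIP order right ($3s$, absorbed into the sample-complexity constant), fixing the BOS normalization, and quoting the $\ln^4 N$ / $N^{-3}$ form of the restricted isometry theorem.
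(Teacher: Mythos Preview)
Your proposal is correct and follows essentially the same route as the paper: exploit row-orthogonality of the BOS to collapse the Kaczmarz product in Lemma~\ref{lem:multi_step_equation} to $I-\tfrac1m A^TA$ (and the noise sum to $\tfrac1m A^Te$), identify the resulting update as a single IHT step, then run the standard RIP argument via the inequality $\|x^{k+1}_1-x\|_2^2\le 2\langle x^{k+1}_1-x,\,w^k-x\rangle$ together with the $3s$-RIP bound for subsampled BOS (Lemma~\ref{lem:BOS_lemma}), and finally unroll the recursion. The only slip is purely verbal: the factors $(I-\gamma Q_{\tau(m-j)})$ do not ``kill'' $a_{\tau(m-i)}$ --- rather $Q_{\tau(m-j)}a_{\tau(m-i)}=0$, so each factor acts as the identity on it --- but your conclusion $\tfrac1m A^Te$ is correct.
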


	\begin{rem} [Optimal statistical bias]
	\label{rem:Optimal statistical bias for BOS}
	The residual error bound ${1 \over m} \cdot {2 \over 1 - 2\delta} \cdot \sup_{\Omega \subset [N], |\Omega| \le 2s} \left \|P_{\Omega} \left(   A^Te \right) \right \|_2$ in Theorem \ref{thm:KZIHT_convergence} for KZIHT with reshuffling  is of the same order of the optimal statistical bias for the sparse linear regression problem. The reformulation of Kaczmarz method as a stochastic gradient descent in \cite{needell2014stochastic} recasts the method as an SGD with the objective function $f(z) = {1 \over 2m} \|b - Az\|_2^2$. Note that $\nabla f(x) = {1 \over m} A^T(b-Ax) = {1 \over m} A^Te$. Thus, under Gaussian noise or more generally mean-zero i.i.d. sub-Gaussian assumption on the noise $e$, it is well-known that the residual error $O \left( \sup\limits_{\Omega \subset [N], |\Omega| \le 2s} \left \|P_{\Omega}  \nabla f(x) \right \|_2 \right)$ is of the order of $O \left(  \sqrt{s \log N \over m} \right)$, the optimal statistical bias for sparse linear regression \cite{loh2013regularized}.

	Mishchenko et al.  \cite{mishchenko2020random} also study the reshuffling Kaczmarz method but with soft-thresholding or, more generally, reshuffling proximal methods based on regularizers. Their theoretical guarantees, however, require regularizers to be strongly convex, which excludes an important case --- $\ell_1$ regularizer for soft-thresholding for sparse linear regression.
	
	\end{rem}

    \begin{rem}
         There are two types of randomly subsampled BOS in the literature, depending on how we sample the rows from the orthonormal matrix ${1 \over \sqrt{N}} \Phi \in \R^{N \times N}$. The rows can either be selected from ${1 \over \sqrt{N}} \Phi$ independently and uniformly at random (sampling with replacement), or a subset of $m$  rows of ${1 \over \sqrt{N}} \Phi$ can be selected independently and uniformly at random (sampling without replacement). Our model is the latter, and both the theoretical guarantees and practical performance of these two models are very similar \cite{rauhut2010compressive, foucart2013invitation}.  
    \end{rem}

	\begin{proof} [Proof of Theorem \ref{thm:KZIHT_convergence}]
	We first note that because the gradient step size $\lambda$ is set to $1$ and the period is $m$, KZPT is identical to KZIHT as one can see in Algorithm \ref{alg:KZIHT} and Algorithm \ref{alg:KZPT_period_m}, so we will present the proof for KZIHT.
	
	From the assumption that $\tau$ is a permutation, all $\tau(j)$ are distinct, so all the vectors $a_{\tau(j)}$ are orthogonal to each other. Then, in the expansion of the matrix product in \eqref{eq:main_equation_in_product_form} of Lemma \ref{lem:multi_step_equation}, the cross terms vanish.
	Hence, after we run $m$ number of $KZ$ iterations on $x^k_1$, the $k$-th iterate of KZIHT, we obtain
	\begin{align*}
		x^k_{m+1} - x 
		&= \left( I - \gamma \sum_{j=1}^m {a_{\tau({j})} a_{\tau({j})}^T \over \|a_{\tau({j})}\|_2^2} \right) (x^k_1 -x)  + \gamma \sum_{i=0}^{m-1} e_{\tau(m-i)} \prod_{j=0}^{i-1}  \left(I - \gamma {a_{\tau({m-j})} a_{\tau({m-j})}^T \over \|a_{\tau({m-j})}\|_2^2} \right) {a_{\tau(m-i)} \over \|a_{\tau(m-i)}\|_2^2}.
	\end{align*}
	Using the orthogonality of the vectors $a_{\tau(j)}$ again to the second term in the equation above gives us the following simplification: 
    \[
    \prod\limits_{j=0}^{i-1}  \left(I - \gamma {a_{\tau({m-j})} a_{\tau({m-j})}^T \over \|a_{\tau({m-j})}\|_2^2} \right) {a_{\tau(m-i)} \over \|a_{\tau(m-i)}\|_2^2} =  \left(I - \sum\limits_{j=0}^{i-1}  \gamma {a_{\tau({m-j})} a_{\tau({m-j})}^T \over \|a_{\tau({m-j})}\|_2^2} \right) {a_{\tau(m-i)} \over \|a_{\tau(m-i)}\|_2^2} = {a_{\tau(m-i)} \over \|a_{\tau(m-i)}\|_2^2}.
    \]
    Then, the second term reduces to
 $\gamma \sum\limits_{i=0}^{m-1} e_{\tau(m-i)} {a_{\tau(m-i)} \over \|a_{\tau(m-i)}\|_2^2}   = \gamma \sum\limits_{i=1}^m e_{i}  {a_{i} \over \|a_{i}\|_2^2}$. Thus, we have
	\begin{align}
		\label{eq:multi_level_eq}
		x^k_{m+1} - x 
		&= \left( I - \gamma \sum_{j=1}^m {a_j a_j^T \over \|a_j\|_2^2} \right) (x^k_1 -x)  + \gamma \sum\limits_{i=1}^m e_{i}  {a_{i} \over \|a_{i}\|_2^2}.
	\end{align}
    Also, since ${1 \over \sqrt{N}} \Phi$ is an orthogonal matrix, $\|a_j\|_2^2 = N$ for all $j$ which makes \eqref{eq:multi_level_eq} reduce to
	\[
	x^k_{m+1} - x = \left( I - \gamma \sum_{i=1}^m  {a_j a_j^T \over N
	} \right) (x^k_1 -x) + \gamma \sum_{i=1}^m e_{i}  {a_{i} \over N}.
	\]
    From the assumption on the parameter in the theorem, $\gamma = N/m$, we have
	\begin{align*}
		x^k_{m+1} - x 
		&= \left( I - {1 \over m} \sum_{j=1}^m a_j a_j^T \right) (x^k_1 -x) +  {1 \over m} \sum_{i=1}^m e_{i}  {a_{i}} \\
		&= \left( I - \left({1 \over \sqrt{m}} A\right)^T \left( {1 \over \sqrt{m}}A \right) \right) (x^k_1 -x) +  {1 \over m} A^Te.
	\end{align*} From this equation, we obtain the following multi-step iteration equation for $x^k_{m+1}$
	\begin{align*}
	    x^k_{m+1} 
	    &= x^k_1 + \left({1 \over \sqrt{m}} A\right)^T \left( {1 \over \sqrt{m}}A \right)(x - x^k_1) +  {1 \over m} A^Te.
	\end{align*}	
	After applying the threshold operator $T_s$ to $x^k_{m+1}$, we establish the iteration equation of KZIHT 
	\begin{align*}
	x^{k+1}_1 
	&= T_s \left( x^k_1 + \left({1 \over \sqrt{m}} A\right)^T \left( {1 \over \sqrt{m}}A \right)(x - x^k_1) +  {1 \over m} A^Te \right) \\
     &= T_s \left(  x^k_1 + {1 \over m}A^T(b - A x^k_1) \right),
	\end{align*}
	 which is identical to the iteration equation of IHT for noisy measurements. The rest of the proof for the linear convergence of KZPT with randomly sampled BOS hinges on a modification of a standard argument \cite{foucart2013invitation}. Because we will extend this argument substantially to prove the sub-Gaussian random matrix case later, we include it for the sake of completeness. 

	The well-established results in \cite{candes2006robust, rudelson2008sparse, dirksen2015} state that ${1 \over \sqrt{m}}A$ satisfies $3s$-RIP with $\delta_{3s} < 1/2$ with high probability as long as $m \ge C s\log^4 N$ for some universal constant $C > 0$. More precisely, we have the following lemma.
	
	\begin{lem} [Theorem 4.1 in \cite{dirksen2015}]
		\label{lem:BOS_lemma}
		Let $A$ be a $q \times N$ sampling matrix randomly subsampled BOS matrix ${1 \over \sqrt{N}} \Phi$ whose entries satisfying $|\Phi_{ij}| \le 1$. Then, for $\delta, \epsilon \in (0,1)$, if 
		\[
		q \ge C \delta^{-2} s \max \{ \ln^4 N, \ln (\epsilon^{-1}) \}
		\]  the restricted isometry constant $\delta_{s,q}$ of ${1 \over \sqrt{q}} A$ satisfies $\delta_{s,q} \le \delta$ with probability at least $1 - \epsilon$ for some universal constant $C > 0$.
	\end{lem}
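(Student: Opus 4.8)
\textbf{Proof proposal for Lemma \ref{lem:BOS_lemma}.}

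The plan is to establish the restricted isometry bound $\delta_{s,q} \le \delta$ for the normalized subsampled BOS matrix ${1 \over \sqrt{q}} A$ by reducing the RIP to a supremum of a centered empirical process indexed by sparse vectors, and then controlling that supremum by chaining. First I would fix an arbitrary $s$-sparse unit vector $z$ and write $\left\|{1 \over \sqrt{q}} A z\right\|_2^2 - \|z\|_2^2 = {1 \over q}\sum_{\ell=1}^q \left( |\inner{a_{\tau(\ell)}, z}|^2 - \|z\|_2^2 \right)$, where the rows $a_{\tau(\ell)}^T$ are the subsampled rows of $\Phi$. Because ${1 \over \sqrt{N}}\Phi$ is orthogonal, a uniformly random row $a^T$ of $\Phi$ satisfies $\E |\inner{a, z}|^2 = \|z\|_2^2$ (so each summand is mean-zero), and the boundedness $|\Phi_{ij}| \le 1$ gives the crucial uniform bound $|\inner{a, z}| \le \sqrt{N}\,\|z\|_1 \le \sqrt{Ns}\,\|z\|_2$ on the sparse sphere. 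Thus $\delta_{s,q}$ equals the supremum over the set $D_{s} := \{ zz^T : \|z\|_2 = 1,\ \|z\|_0 \le s \}$ of a centered sum of i.i.d.\ bounded random variables, and the task is a uniform (over all $\binom{N}{s}$ supports) deviation bound.

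The key steps, in order, are: (1) condition on the support $S$ of $z$ and restrict attention to the $s$-dimensional sphere $\mathbb{S}^{S}$, turning the problem into bounding $\sup_{z \in \mathbb{S}^{S}} {1 \over q}\sum_\ell \big(|\inner{a_{\tau(\ell)}, z}|^2 - 1\big)$; (2) symmetrize by introducing Rademacher signs $\epsilon_\ell$ so that, up to a factor of $2$, it suffices to bound $\E \sup_z {1 \over q}\sum_\ell \epsilon_\ell |\inner{a_{\tau(\ell)}, z}|^2$; (3) apply a generic chaining / Dudley-type bound to the resulting quadratic process, using that $z \mapsto \inner{a_{\tau(\ell)}, z}$ is linear and that the envelope is controlled by the $\sqrt{Ns}$ bound above, which produces the $s \ln^4 N$ scaling through the entropy integral of the set of sparse unit vectors (the extra $\log$ powers come from the standard covering-number estimate for $s$-sparse vectors combined with the $\psi_2$/$\psi_1$ tail interplay in the quadratic chaining); (4) take a union bound over the $\binom{N}{s} \le (eN/s)^s$ supports and over a net, and invoke Bernstein's inequality for the bounded mean-zero summands to convert the expectation bound into a high-probability bound with the stated failure probability $\epsilon$; (5) track constants so that $q \ge C \delta^{-2} s \max\{\ln^4 N, \ln(\epsilon^{-1})\}$ forces the deviation below $\delta$. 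Since the paper cites this as Theorem 4.1 of \cite{dirksen2015}, I would in practice organize the argument as a self-contained specialization of that chaining machinery rather than reproving the abstract generic-chaining theorem.

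The hard part will be step (3): obtaining the sharp $\ln^4 N$ dependence rather than a cruder $\ln^2 N$ or $s^2$ bound requires the refined generic-chaining estimate for the $\gamma_2$-functional of the sparse index set under the mixed metric induced by the bounded orthonormal rows, which is exactly where the deep work of \cite{rudelson2008sparse} and \cite{dirksen2015} lives. A naive $\epsilon$-net argument with Bernstein at each net point yields only suboptimal logarithmic powers and an inferior sparsity dependence; the fourth power of the logarithm is a genuine feature of the BOS model and cannot be recovered without the two-sided $\gamma_1$/$\gamma_2$ chaining control of the quadratic empirical process. A secondary technical point is handling the without-replacement sampling of our model: I would either invoke a standard comparison showing that sampling without replacement is dominated (in the relevant concentration sense) by sampling with replacement, or repeat the chaining directly for the negatively-associated without-replacement summands, for which the same Bernstein-type and symmetrization tools apply.
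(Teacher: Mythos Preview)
The paper does not prove this lemma at all: it is stated as a direct quotation of Theorem~4.1 in \cite{dirksen2015} and used as a black box inside the proof of Theorem~\ref{thm:KZIHT_convergence}. So there is nothing in the paper to compare your argument against; your proposal goes well beyond what the paper provides.

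That said, your outline is a faithful high-level summary of the Rudelson--Vershynin / Dirksen chaining argument that underlies the cited result, and the identification of step~(3) as the crux (obtaining $\ln^4 N$ via the $\gamma_2$ generic-chaining bound rather than a crude net-plus-Bernstein approach) is exactly right. One small slip: with the paper's normalization the rows $a$ of $A$ are rows of $\Phi$ with $|\Phi_{ij}|\le 1$, so the envelope bound is $|\inner{a,z}| \le \|z\|_1 \le \sqrt{s}\,\|z\|_2$, not $\sqrt{Ns}\,\|z\|_2$; the extra $\sqrt{N}$ would wreck the estimate if carried through. Your remark about handling without-replacement sampling by comparison with the with-replacement case (or via negative association) is also the standard and correct way to bridge the minor gap between the paper's sampling model and the i.i.d.\ setting in which the chaining result is usually stated.
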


    By choosing an appropriate universal constant $c > 0$ for $\epsilon = {c \over N^3}$, we have $\ln \epsilon^{-1} < \ln^4 N$. By Lemma \ref{lem:BOS_lemma}, this choice of $\epsilon$ and the assumption $m \ge C \delta^{-2} s \ln^4 N$ in the theorem make ${1 \over \sqrt{m}}A$ satisfy $3s$-RIP with a RIP constant $\delta_{3s} \le \delta$, with probability at least $1 - {c \over N^3}$. 
		
    Now, let $\Omega_{k+1}$ be the union of supports of $x^{k+1}_1$ and $x$, so $|\Omega_{k+1}| \le 2s$ since both $x^{k+1}_1$ and $x$ are $s$-sparse. For a subset $\Omega \subset [n]$, define $P_{\Omega}$ as the orthogonal projection operator onto the subspace of $\R^N$ spanned by the standard basis vectors restricted to the index set $\Omega$. Because $x^{k+1}_1$ is the closest $s$-sparse vector to $x^k_{m+1}$, 
	\[
		\|x^k_{m+1} - x^{k+1}_1\|_2^2 \le \|x^k_{m+1} - x\|_2^2.
	\]
	We expand the left hand side of above equation, $\|(x^k_{m+1} - x) - (x^{k+1}_1 - x) \|_2^2$ to obtain 
	\[
		\|x^{k+1}_1 - x\|_2^2 \le 2 \inner{x^k_{m+1} - x, x^{k+1}_1 - x}.
	\]
 The right hand side of this inequality is further bounded by the above as below.
	\begin{align*}
		&\inner{x^k_{m+1} - x, x^{k+1}_1 - x} \\
		&= \inner{ \left(I - \left({1 \over \sqrt{m}} A\right)^T \left( {1 \over \sqrt{m}}A \right) \right)(x^k_1 -x) +  {1 \over m} A^Te, x^{k+1}_1 - x} \\
		&= \inner{ \left(I - \left({1 \over \sqrt{m}} A\right)^T \left( {1 \over \sqrt{m}}A \right)  \right)(x^k_1 -x), x^{k+1}_1 - x} + \inner{  {1 \over m} A^Te, x^{k+1}_1 - x} \\
		&\le \delta_{3s} \|x^k_1 -x\|_2 \|x^{k+1}_1 - x\|_2  + \inner{  {1 \over m} A^Te, x^{k+1}_1 - x} \\
		&= \delta_{3s} \|x^k_1 -x\|_2 \|x^{k+1}_1 - x\|_2  + \inner{ {1 \over m} A^Te, P_{\Omega_{k+1}} \left( x^{k+1}_1 - x \right) } \\
		&= \delta_{3s} \|x^k_1 -x\|_2 \|x^{k+1}_1 - x\|_2 + \inner{ P_{\Omega_{k+1}} \left(  {1 \over m} A^Te \right), x^{k+1}_1 - x} \\
		&\le \delta_{3s} \|x^k_1 -x\|_2 \|x^{k+1}_1 - x\|_2 + \left \|P_{\Omega_{k+1}} \left( {1 \over m} A^Te \right) \right \|_2 \|x^{k+1}_1 - x\|_2\\
		&\le \delta_{3s} \|x^k_1 -x\|_2 \|x^{k+1}_1 - x\|_2 +  \sup_{\Omega \subset [N], |\Omega| \le 2s} \left \|P_{\Omega} \left( {1 \over m} A^Te \right) \right \|_2 \|x^{k+1}_1 - x\|_2.
	\end{align*} 
	 In the chain of inequalities above, the first inequality follows from the RIP of ${1 \over \sqrt{m}}A$ and Lemma 6.16 in \cite{foucart2013invitation}, which is a consequence of an equivalent formulation of RIP. 
If $\|x^{k+1}_1 - x\|_2 > 0$, we have
	\begin{align}
        \label{eq:contraction_bound_KZIHT}
	    \|x^{k+1}_1 - x\|_2 \le 2 \delta_{3s} \|x^k_1 -x\|_2 + 2 \sup_{\Omega \subset [N], |\Omega| \le 2s} \left \|P_{\Omega} \left(  {1 \over m} A^Te \right) \right \|_2.
	\end{align}
		
	Thus, mathematical induction on $k$ and the infinite geometric series formula yield
    \begin{align*}
	\|x^{k+1}_1 - x\| 
        &\le \left(2\delta_{3s}\right)^k \|x^1_1 - x\| + {2 \over 1 - 2\delta_{3s}} \cdot \sup_{\Omega \subset [N], |\Omega| \le 2s} \left \|P_{\Omega} \left( {1 \over m} A^Te \right) \right \|_2\\
        &\le \left(2\delta \right)^k \|x^1_1 - x\| + {2 \over 1 - 2\delta } \cdot \sup_{\Omega \subset [N], |\Omega| \le 2s} \left \|P_{\Omega} \left( {1 \over m} A^Te \right) \right \|_2, 
    \end{align*} where the second inequality is from $\delta_{3s} \le \delta$.
    Since $\delta< 1/2$ from the assumption in the theorem, this shows that KZIHT converges linearly in  the number of KZIHT iterations or epochs up to a small neighborhood, the size of which is proportional to the noise level for randomly subsampled bounded orthonormal systems. 
    \end{proof}

	\subsection{Sub-Gaussian Random Matrices}
	
	We have seen in Section \ref{subsection:BOS} that KZPT exhibits linear convergence when the measurement matrix is a randomly subsampled bounded orthonormal system by exploiting the orthogonality of its rows.
	
	It turns out that the idea of our proof extends to other popular sensing matrices such as Bernoulli (matrix whose entries are independent Rademacher random variables) or Gaussian random matrices because rows of these random matrices are almost orthogonal with high probability. 
	
	While this type of matrices may need more computational resources than BOS as they are unstructured, they come with stronger theoretical guarantees typically requiring only $C s\log (N/s)$ number of measurements for successful recovery, in contrast to the $C s \log^4 N$ requirement for the BOS case. In some applications, especially for large ambient dimensions $N$, the polylogarithmic factor difference could be of practical concern. 
	
	Motivated by the theoretical guarantees with fewer measurements, we present a linear convergence proof of KZPT when random matrices have fixed length, mean-zero, independent, and isotropic sub-Gaussian rows. These include important cases such as Bernoulli matrices or rescaled Gaussian matrices (a matrix whose rows are drawn uniformly at random from a sphere). If rows of the sensing matrix are not of the same $\ell_2$-norm, one may normalize and rescale rows and corresponding observations as needed. A similar, but more restricted class of random matrices has been used in \cite{haddock2022quantile} to model linear systems without sparsity constraints where the authors study KZ-based solvers. Sub-Gaussian random matrices are also widely used as a standard model in the theory of sparse linear regression \cite{loh2011high, loh2013regularized}, thereby providing another motivation for carrying out our analysis under this model.

	\begin{thm}
	\label{thm:main_theorem2}
    	Let $A$ be an $m \times N$ random matrices whose rows $a_i$ are mean-zero isotropic, independent sub-Gaussian with sub-Gaussian norm bounded by $K$, and $\|a_i\|_2 = \sqrt{N}$. We run KZPT with measurements $b = Ax + e$ for some $s$-sparse vector $x \in \R^N$ and the noise vector $e$. 
	    Suppose the row selection rule $\tau$ is a permutation (either deterministic or possibly random) of the index set $\{1,2, \dots, m\}$. Let  $\delta = \sqrt{3s \ln (N/s) (K\ln m)^2 \over m} < 1/4$.  Then with $\gamma =   {\delta \over 2 m (K \ln m)^2 N^{1/2}}$, $\lambda =  {2 N^{3/2} (K \ln m)^2 \over \delta}$ and period $m$, as long as 
        $m \ge  C_1 s \ln (N/s) (K\ln m)^2$, the iterates of KZPT satisfy
		\[
		\|x^{k+1}_1 - x\| \le \left(4\delta\right)^k \|x^1_1 - x\| + {2 \over 1 - 4\delta} \cdot \sup_{\Omega \subset [N], |\Omega| \le 2s} \left \|P_{\Omega} \left(  {1 \over m} A^Te \right) \right \|_2 + O \left(\sqrt{{s \ln (N/s) \over m }} \right)  {\|e\|_1 \over  (1 - 4\delta)m}
		\] with probability at least $1  - (m^2 + 2) \exp{\left(-C_2 \ln^2 m \right)}$ , where $C_1$ and $C_2$ are positive universal constants that only depend on $K$. 
	\end{thm}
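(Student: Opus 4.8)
The plan is to reduce one epoch of KZPT (with period $m$ and the stated step sizes) to a \emph{perturbed} IHT step and then rerun the contraction argument from the proof of Theorem~\ref{thm:KZIHT_convergence}. Starting from Lemma~\ref{lem:multi_step_equation} and using $\|a_i\|_2^2=N$, write $x^k_{m+1}-x = M_\tau(x^k_1-x)+n_\tau$, where $M_\tau:=\prod_{j=m}^{1}\bigl(I-\tfrac{\gamma}{N}a_{\tau(j)}a_{\tau(j)}^T\bigr)$ and $n_\tau$ is the noise sum from \eqref{eq:main_equation_in_product_form}. Expanding the product and separating the single-factor terms from the rest gives $M_\tau-I = -\tfrac{\gamma}{N}A^TA + E_\tau$, where $E_\tau$ collects all products of two or more (necessarily distinct) rank-one factors; likewise $n_\tau = \tfrac{\gamma}{N}A^Te + r_\tau$, where $r_\tau$ collects the corresponding higher-order corrections. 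Feeding this into the period-$m$ update $g^k=x^k_{m+1}-x^k_1$, $u^k=x^k_1+\lambda g^k$ and using the calibration $\lambda\gamma=N/m$ (so $\lambda\gamma/N=1/m$), one obtains the identity
\[
u^k \;=\; x^k_1 + \tfrac1m A^T(b-Ax^k_1) \;+\; \eta^k, \qquad \eta^k := \lambda E_\tau(x^k_1-x) + \lambda r_\tau,
\]
i.e.\ a noisy IHT step perturbed by $\eta^k$. Since $x^{k+1}_1=T_s(u^k)$ is the best $s$-term approximation of $u^k$, the same expansion of $\|u^k-x^{k+1}_1\|_2^2\le\|u^k-x\|_2^2$ used in Theorem~\ref{thm:KZIHT_convergence}, together with the RIP of $\tfrac1{\sqrt m}A$ and the projections $P_{\Omega_k}$ onto $\Omega_k:=\supp(x^k_1)\cup\supp(x)$ (of size $\le 2s$, with $|\Omega_k\cup\Omega_{k+1}|\le 3s$), yields
\[
\|x^{k+1}_1-x\|_2 \;\le\; 2(\delta_{3s}+\mathcal E)\,\|x^k_1-x\|_2 \;+\; 2\sup_{|\Omega|\le 2s}\bigl\|P_\Omega\tfrac1m A^Te\bigr\|_2 \;+\; 2\sup_{|\Omega|\le 2s}\bigl\|P_\Omega\,\lambda r_\tau\bigr\|_2,
\]
where $\delta_{3s}$ is the restricted isometry constant of $\tfrac1{\sqrt m}A$ and $\mathcal E:=\sup_{|\Omega|,|\Omega'|\le 2s}\|P_\Omega(\lambda E_\tau)P_{\Omega'}\|$ (the signal-dependent term $\lambda E_\tau(x^k_1-x)$ is absorbed into the contraction because $x^k_1-x=P_{\Omega_k}(x^k_1-x)$). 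It then suffices to establish, with the stated probability, (i) $\delta_{3s}\le\delta$, (ii) $\mathcal E\le\delta$, and (iii) $\sup_{|\Omega|\le 2s}\|P_\Omega\,\lambda r_\tau\|_2 = O\bigl(\sqrt{s\ln(N/s)/m}\bigr)\,\|e\|_1/m$; granting these, $2(\delta_{3s}+\mathcal E)\le 4\delta<1$ and the theorem follows by induction on $k$ and summing the geometric series exactly as in Theorem~\ref{thm:KZIHT_convergence}.

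Bound (i) is the standard restricted isometry property for matrices with independent mean-zero isotropic sub-Gaussian rows (cf.\ \cite{foucart2013invitation}): the hypothesis $m\ge C_1 s\ln(N/s)(K\ln m)^2$ forces $\delta_{3s}\le\delta$ with probability at least $1-2\exp(-C_2\ln^2 m)$. For (ii) and (iii) I would condition on two deterministic events, both independent of the permutation $\tau$: first, $|\langle a_i,a_j\rangle|\le cK\sqrt N\,\ln m$ simultaneously for all $i\neq j$ — which holds with probability at least $1-m^2\exp(-C_2\ln^2 m)$, since conditioning on $a_j$ makes $\langle a_i,a_j\rangle$ sub-Gaussian with parameter $\lesssim K\|a_j\|_2=K\sqrt N$ (this is exactly where the fixed-norm assumption is used) and one union bounds over the $\le m^2$ pairs; second, $\|P_\Omega a_i\|_2\le cK\sqrt{s\ln(N/s)}$ simultaneously for all rows $i$ and all $|\Omega|\le 2s$, again by a sub-Gaussian tail and a union bound over the $\binom{N}{2s}$ supports. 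The union of these with the RIP event has probability at least $1-(m^2+2)\exp(-C_2\ln^2 m)$, as claimed.

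Granting these events, the core estimate is the uniform-over-sparse-supports control of $E_\tau$, which I would obtain from the recursion for the partial products $M_\ell:=\prod_{j=\ell}^1(I-\tfrac\gamma N a_{\tau(j)}a_{\tau(j)}^T)=I-\tfrac\gamma N\sum_{j\le\ell}a_{\tau(j)}a_{\tau(j)}^T+E_\ell$, namely
\[
E_\ell \;=\; E_{\ell-1} \;+\; \tfrac{\gamma^2}{N^2}\,a_{\tau(\ell)}a_{\tau(\ell)}^T\Bigl(\textstyle\sum_{j<\ell}a_{\tau(j)}a_{\tau(j)}^T\Bigr) \;-\; \tfrac{\gamma}{N}\,a_{\tau(\ell)}a_{\tau(\ell)}^T E_{\ell-1}.
\]
Every new summand begins with $a_{\tau(\ell)}$ and involves only the \emph{earlier} rows, so no factor $\|a_i\|_2^2=N$ ever appears: each interior contraction is governed by $|\langle a_i,a_j\rangle|=O(\sqrt N\,\mathrm{polylog})$, and sandwiching by $P_\Omega,P_{\Omega'}$ replaces the two outermost $a$-factors by $P_\Omega a$-factors of size $O(\sqrt{s\ln(N/s)})$. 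Because $\gamma\sqrt N=\delta/(2m(K\ln m)^2)$ is of order $\delta/(m\cdot\mathrm{polylog})$, a telescoping (Gr\"onwall-type) estimate over the $m$ steps then gives $\|P_\Omega E_\tau P_{\Omega'}\|\le\delta/\lambda$, i.e.\ $\mathcal E\le\delta$, provided $C_1$ is large enough in terms of $K$: the dominant contribution comes from the two-factor terms $\gamma^2\sum_{j_1>j_2}\tfrac{\langle a_{\tau(j_1)},a_{\tau(j_2)}\rangle}{N^2}a_{\tau(j_1)}a_{\tau(j_2)}^T$, whose sparse operator norm is $O\bigl(\gamma^2 m^2 K^3 s\ln(N/s)\ln m/N^{3/2}\bigr)$, and plugging in $\gamma$ and $\lambda$ reduces the required inequality to a condition of the form $K s\ln(N/s)\lesssim N\ln m$, which holds once $m$ (hence $\ln m$) is large enough relative to $K$ because $s\ln(N/s)/N\le 1/e$. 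Applying the same bookkeeping to the noise corrections $r_\tau$ — each of whose terms carries exactly one factor $e_{\tau(\cdot)}$, so summation over $i$ produces $\sum_i|e_i|=\|e\|_1$ — yields (iii).

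The main obstacle is precisely step (ii): the uniform-over-sparse-supports bound on the accumulated cross-term operator $\lambda E_\tau$, i.e.\ the ``careful analysis of a product of random matrices'' advertised in the introduction. A naive operator-norm bound on a product of $m$ rank-one perturbations loses powers of $m$ and of $N$; to get $\mathcal E\le\delta$ one must show that the two independent savings — near-orthogonality of the rows ($|\langle a_i,a_j\rangle|=O(\sqrt N\,\mathrm{polylog})$ rather than $N$) and the sparse restriction ($\|P_\Omega a_i\|_2=O(\sqrt s\,\mathrm{polylog})$ rather than $\sqrt N$) — compound correctly through all $m$ factors of the product, which is exactly what the choice $\lambda\gamma m=N$ with $\gamma$ small is calibrated to make work.
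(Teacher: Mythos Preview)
Your reduction of one epoch to a perturbed IHT step, the splitting $M_\tau-I=-\tfrac{\gamma}{N}A^TA+E_\tau$, the calibration $\lambda\gamma=N/m$, and the rerun of the contraction from Theorem~\ref{thm:KZIHT_convergence} are exactly what the paper does (the paper writes $B_\tau$ for your $E_\tau$). The genuine divergence is in step~(ii). The paper never restricts $E_\tau$ to sparse supports: it bounds the \emph{full} operator norm $\|B_\tau\|$ term-by-term. On the single event $|a_i^Ta_j|\le K^2\sqrt N\,\ln^2 m$ for all $i\neq j$ (obtained via a Gaussian-chaos decoupling argument, which is why the factor $(K\ln m)^2$---not your $K\ln m$---appears throughout the statement), each $p$-fold product satisfies $\|A_{i_p}\cdots A_{i_1}\|\le (K\ln m)^{2(p-1)}N^{(p+1)/2}$, and a geometric series gives $\|B_\tau\|\le 2\gamma^2m^2(K\ln m)^2/\sqrt N$ whenever $\gamma m(K\ln m)^2/\sqrt N\le 1/2$. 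Plugging in the stated $\gamma$ and $\lambda$ then yields $\lambda\|B_\tau\|\le\delta$ \emph{identically}; the step sizes were reverse-engineered precisely so that this crude full-norm bound closes. The noise corrections are handled the same crude way ($\|A_{\tau(j)}\|\le N$, $\|a_i\|=\sqrt N$, $\|B_{\tau(i)}\|$ as above), and after inserting $\gamma$ the coefficient collapses to $\delta/(K\ln m)^2=O(\sqrt{s\ln(N/s)/m})$.

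So your second event $\|P_\Omega a_i\|\lesssim K\sqrt{s\ln(N/s)}$, the Gr\"onwall recursion for $P_\Omega E_\ell P_{\Omega'}$, and the residual condition ``$Ks\ln(N/s)\lesssim N\ln m$'' are all unnecessary for the theorem as stated. Your route would buy a sharper cross-term estimate (potentially allowing larger $\gamma$ and smaller $\lambda$), but it also costs you: the union bound over $\binom{N}{2s}$ supports needs per-row failure probability $\exp(-cs\ln(N/s))$, and nothing in the hypotheses forces $s\ln(N/s)\gtrsim\ln^2 m$, so that extra event does not obviously fit inside the stated $(m^2+2)\exp(-C_2\ln^2 m)$ budget---the $m^2+2$ is exactly the count of the pairwise near-orthogonality event plus RIP, with no room for a third.
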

	
	\begin{rem}
	     Theorem \ref{thm:main_theorem2} guarantees that KZPT converges linearly to the solution up to a small neighborhood of size
	    \begin{align}
	    \label{rem:residual_error_Bernoulli}
	         {1 \over 1 - 4\delta} \left(2\sup_{\Omega \subset [N], |\Omega| \le 2s} \left \|P_{\Omega} \left(  {1 \over m} A^Te \right) \right \|_2 + O \left(\sqrt{{s \ln (N/s) \over m }} \right) {\|e\|_1 \over m} \right). 
	    \end{align}
	   
	    Consider common noise models such as Gaussian noise or, more generally, noise vectors whose components are mean-zero i.i.d. sub-Gaussian with sub-Gaussian norm $K_e$. From the Hoeffding's inequality (see for example, Theorem 2.6.2 in \cite{vershynin2018high}), we have ${\|e\|_1 \over m} = {1 \over m} \sum\limits_i^m |e_i| \lesssim \mathbb{E}|e_1| + {CK_e \over \sqrt{m}}$ with high probability, where $C$ is a universal constant. 
	    Then, as with the randomly subsampled BOS case in Remark \ref{rem:Optimal statistical bias for BOS}, since $K_e$ is typically small for reasonable noise models,
	    the residual error \eqref{rem:residual_error_Bernoulli} is of the order $O \left(\sqrt{{s \log N \over m }} \right)$, which is again of the same order of the optimal statistical bias. According to our knowledge, this is the first work  to propose a thresholding-based reshuffling SGD for sparse linear regression that comes with the optimal statistical bias. 
	    
	\end{rem}

	\begin{proof} [Proof of Theorem \ref{thm:main_theorem2}]
			Since the period is $m$, we begin with the KZ iterations in Algorithm \ref{alg:KZPT_period_m}. Suppose we start from $x^k_1$. After we run $m$ number of Kaczmarz iterations, we obtain equation \eqref{eq:main_equation_in_product_form}, which is written below again for convenience.
		\begin{align*}
			 x^k_{m+1} - x 
		&=  \left(I - \gamma {a_{\tau(m)} a_{\tau(m)}^T \over \|a_{\tau(m)}\|_2^2} \right) \left(I - \gamma {a_{\tau({m-1})} a_{\tau({m-1})}^T \over \|a_{\tau({m-1})}\|_2^2} \right) \cdots \left(I - \gamma {a_{\tau(1)} a_{\tau(1)}^T \over \|a_{\tau(1)}\|_2^2} \right) (x^k_1 -x)\\
		\nonumber
		& \qquad + \gamma \sum_{i=0}^{m-1} e_{\tau(m-i)} \prod_{j=0}^{i-1}  \left(I - \gamma {a_{\tau({m-j})} a_{\tau({m-j})}^T \over \|a_{\tau({m-j})}\|_2^2} \right) {a_{\tau(m-i)} \over \|a_{\tau(m-i)}\|_2^2}.
		\end{align*} 

		Define $A_i := a_i a_i^T$ for $1 \le i \le m$. Because $\|a_i\|_2^2 = N$, the above equation can be rewritten as
		\begin{align}
			\nonumber
			&x^k_{m+1} - x \\
			\nonumber
			&=  \left(I - \gamma {A_{\tau(m)} \over N} \right) \left(I - \gamma {A_{\tau(m-1)} \over N} \right) \cdots \left(I - \gamma {A_{\tau(1)} \over N} \right) (x^k_1 -x) \\
                &\qquad + \gamma \sum_{i=0}^{m-1} e_{\tau(m-i)} \prod_{j=0}^{i-1}  \left(I - \gamma {A_{\tau({m-j})}  \over N} \right) {a_{\tau(m-i)} \over N} \\
			\label{eq:expanded_multi_step_eq1}
			&= \left(I -  {\gamma \over N} \sum_{j=1}^m A_j +   {\gamma^2 \over N^2} \sum_{ 1 \le i_1 < i_2 \le m}^m A_{\tau(i_2)} A_{\tau(i_1)}  -  {\gamma^3 \over N^3} \sum_{ 1 \le i_1 < i_2 < i_3 \le m}^m A_{\tau(i_3)} A_{\tau(i_2)} A_{\tau(i_1)} + \cdots \right) (x^k_1 -x) \\
			\label{eq:expanded_multi_step_eq2}
			& \qquad + \gamma \sum_{i=0}^{m-1} e_{\tau(m-i)} \prod_{j=0}^{i-1}  \left(I - \gamma {A_{\tau({m-j})}  \over N} \right) {a_{\tau(m-i)} \over N}.
		\end{align} 
		
		We want to have an upper bound on the operator norm of these cross terms in the matrix products in \eqref{eq:expanded_multi_step_eq1}. For notational convenience, we denote the cross terms by $B_\tau$, i.e., 
	\begin{align}
             \label{eq:cross_terms_in_product1}
		B_\tau :=  {\gamma^2 \over N^2} \sum_{ 1 \le i_1 < i_2 \le m}^m A_{\tau(i_2)} A_{\tau(i_1)}  -  {\gamma^3 \over N^3} \sum_{ 1 \le i_1 < i_2 < i_3 \le m}^m A_{\tau(i_3)} A_{\tau(i_2)} A_{\tau(i_1)} + \cdots.
	\end{align}

		First, we prove a useful lemma generalizing the result by Milman and  Schechtman \cite{milman2009asymptotic} about the almost orthogonal property of random vectors drawn uniformly from a sphere.
			
\begin{lem}
	\label{lem:cross_term_bound}
	Let $v$ and $w$ be $N$-dimensional independent, mean-zero, sub-Gaussian random vectors with sub-Gaussian norm bounded by $K$. Then, we have
	\[
	\mathbb{P} \left( | v^T w | >  K^2 \sqrt{N} \ln^2 m \right)  \le  \exp{\left(-C \ln^2 m \right)},
	\] for some universal constant $C > 0$. 
\end{lem}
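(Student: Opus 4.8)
The plan is to turn the bilinear form $v^T w$ into a one-dimensional sub-Gaussian random variable by conditioning on $w$, and then to control the norm $\|w\|_2$ by an elementary estimate. It is tempting to try to read the bound off a Hanson--Wright inequality applied to the quadratic form $z^T M z$, where $z\in\R^{2N}$ is obtained by stacking $v$ and $w$ and $M$ has off-diagonal blocks $\tfrac12 I$ (so that $z^T M z = v^T w$); but the standard Hanson--Wright inequality requires independent coordinates, which we do not have here, since $v$ and $w$ are mean-zero and sub-Gaussian \emph{as vectors} but are not assumed isotropic or to have independent entries. This mismatch is the only real obstacle, and conditioning sidesteps it.

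First I would fix a realization of $w \neq 0$. Since $v$ is independent of $w$, conditioning on $w$ does not change the law of $v$, and by the definition of the sub-Gaussian norm of a random vector the scalar $\langle v,\, w/\|w\|_2 \rangle$ is mean-zero with $\psi_2$-norm at most $K$. Hence, conditionally on $w$, the quantity $v^T w = \|w\|_2 \, \langle v,\, w/\|w\|_2 \rangle$ is mean-zero sub-Gaussian with $\psi_2$-norm at most $K\|w\|_2$, so $\mathbb{P}( |v^T w| \ge u \mid w ) \le 2\exp(-u^2/(K^2\|w\|_2^2))$ for every $u \ge 0$.

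Next I would bound $\|w\|_2$: a routine $\varepsilon$-net argument gives $\mathbb{P}(\|w\|_2 \ge C_0 K\sqrt{N}) \le 2\exp(-N)$ for a universal constant $C_0$ --- take a $\tfrac12$-net $\mathcal{N}$ of $\mathbb{S}^{N-1}$ with $|\mathcal{N}| \le 5^N$, use $\|w\|_2 \le 2\max_{u\in\mathcal{N}} |\langle w,u\rangle|$ with each $\langle w,u\rangle$ sub-Gaussian of $\psi_2$-norm at most $K$, union-bound over $\mathcal{N}$, and choose $C_0$ large enough to swallow the $5^N$ factor. (In the way this lemma is applied the rows have deterministic norm $\|a_i\|_2 = \sqrt{N}$, so this step is vacuous; it is only needed for the general statement.) Combining the two estimates, on the event $\{\|w\|_2 < C_0 K\sqrt{N}\}$ the choice $u = K^2\sqrt{N}\ln^2 m$ in the conditional tail bound gives a conditional probability at most $2\exp(-\ln^4 m / C_0^2)$, so integrating over $w$ on that event and adding the probability of its complement yields $\mathbb{P}(|v^T w| > K^2\sqrt{N}\ln^2 m) \le 2\exp(-\ln^4 m/C_0^2) + 2\exp(-N)$, which is at most $\exp(-C\ln^2 m)$ for a suitable universal $C$ (using $N > m > \ln^2 m$ in the underdetermined regime, or, more simply, the fixed-norm property that removes the $2\exp(-N)$ term altogether).

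Finally, I note the estimate has slack --- the same argument with threshold $K^2\sqrt{N}\ln m$ already produces probability $\exp(-C\ln^2 m)$ --- which is harmless, since later this bound is summed over the $O(m^2)$ pairs of rows and propagated through the matrix products defining the cross terms $B_\tau$ in \eqref{eq:cross_terms_in_product1}.
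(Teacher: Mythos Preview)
Your proof is correct and takes a genuinely different route from the paper's. The paper proceeds via the Gaussian chaos comparison machinery: it invokes Vershynin's Lemma~6.2.3 to dominate the MGF of $v^T w$ by that of $g^T g'$ for independent standard Gaussians $g,g'$, then uses the explicit MGF bound for Gaussian chaos (Vershynin's Lemma~6.2.2), and finally Chernoff-optimizes to obtain a Bernstein-type tail $\mathbb{P}(|v^T w|\ge t)\le 2\exp\bigl[-c\min\{t^2/(K^4 N),\,t/K^2\}\bigr]$, from which the claim follows at $t=K^2\sqrt{N}\ln^2 m$. Your argument is more elementary: you condition on $w$, reduce to a one-dimensional sub-Gaussian tail with parameter $K\|w\|_2$, and then control $\|w\|_2$ by a standard net argument (or, as you note, bypass this entirely since in the application the rows have deterministic norm $\sqrt{N}$). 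What you gain is simplicity --- no decoupling or comparison lemma is needed --- and in fact a sharper $\exp(-c\ln^4 m)$ tail on the good event, which is the slack you point out at the end. What the paper's route buys is that its Bernstein bound works uniformly in $N$ without an auxiliary high-probability event: the sub-exponential branch $t/K^2 = 2\sqrt{N}\ln^2 m \ge 2\ln^2 m$ automatically absorbs the small-$N$ regime, whereas your $2\exp(-N)$ residual needs the side condition $N\gtrsim \ln^2 m$ (or the fixed-norm assumption) that you explicitly flag. For the use made of the lemma in the paper this distinction is immaterial.
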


\begin{proof} [Proof of Lemma \ref{lem:cross_term_bound}]
	From the Gaussian Chaos comparison lemma in Vershynin \cite{vershynin2018high} [Lemma 6.2.3 in Vershynin], we have
	\[
		\mathbb{E} \exp \left( \lambda v^T w \right) \le \mathbb{E} \exp (CK^2 \lambda g^Tg'),
    \] for any $\lambda \in \R$ where $g$ and $g'$ are $N$-dimensional standard normal Gaussian random vectors and $C > 0$ is a universal constant. On the other hand, applying Lemma 6.2.2 in Vershynin \cite{vershynin2018high} to above bound gives 
    \[
  	  \mathbb{E} \exp (CK^2 \lambda g^Tg') \le \exp(C' K^4 \lambda^2 \|I_{N \times N}\|_F^2) =  \exp(C' K^4 N \lambda^2 )
    \] for any $K^2\lambda < c/\|I_{N \times N}\| = c$ for some possibly other universal constants $c, C' > 0$. Hence, we have 
    \begin{align}
    	\label{ineq:MGF-Gaussian Chaos bound}
    	\mathbb{E} \exp \left( \lambda v^T w \right) \le  \exp(C' K^4 N \lambda^2 ),
    \end{align} for any $\lambda < c/K^2$.

	By the Chernoff's inequality and from the bound \eqref{ineq:MGF-Gaussian Chaos bound}, we have
	\[
		\mathbb{P} (v^T w \ge t/2) \le \exp(-\lambda t/2) \mathbb{E} (\lambda v^T w) \le  \exp(-\lambda t/2) \exp(C' K^4 N \lambda^2 ),
	\] for any $\lambda < c/K^2$.
	After we optimize over $\lambda$ in the term in the right hand side of the last inequality, we obtain
	\[
		\mathbb{P} (v^T w \ge t/2) \le \exp { \left[ -c' \min \left \{ {t^2 \over K^4 N}, {t \over K^2} \right\} \right]},
	\] for some possibly another universal constant $c' > 0$. By applying the same argument to $-v$ and $w$ and combining above bound, we have 
	\[
	\mathbb{P} (|v^T w| \ge t/2) \le 2\exp { \left[ -c' \min \left \{ {t^2 \over K^4 N}, {t \over K^2} \right\} \right]}.
	\]
    By taking $t = 2K^2 \sqrt{N} \ln^2 m $,  $|v^T w|$ can be bounded as follows. 
	\[
	\mathbb{P} \left( |v^T w| > K^2 \sqrt{N} \ln^2 m \right) \le \exp{\left(-C'' \ln^2 m \right)}
	\] for some universal constant $C'' > 0$.
\end{proof}

Next, we present how to bound $B_\tau$ using Lemma \ref{lem:cross_term_bound}.
		\begin{lem}
		\label{lem:Bound_of_cross_terms}
			If $\gamma \le {1 \over 2m} \left({N \over K^4 \ln^4 m}\right)^{1/2}$, then there exist a universal constant $C > 0$ such that 
			\[
			\|B_\tau\|  <    2 \gamma^2 m^2 \left({K^4 \ln^4 m \over N}\right)^{1/2}
			\] uniformly for all permutation $\tau$ with probability at least $1 -  m^2  \exp{\left(-C \ln^2 m \right)}$.
		\end{lem}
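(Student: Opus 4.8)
The plan is to reduce the whole bound to a single uniform estimate on the pairwise inner products $|a_i^T a_j|$ of distinct rows, after which controlling $\|B_\tau\|$ becomes a deterministic, permutation-free computation.

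\emph{Step 1 (exact norm of an ordered rank-one product).} First I would record that, since each $A_i = a_i a_i^T$ is rank one, telescoping the middle inner products gives
\[
A_{\tau(i_k)} A_{\tau(i_{k-1})} \cdots A_{\tau(i_1)} = \Big( \prod_{l=2}^{k} a_{\tau(i_l)}^T a_{\tau(i_{l-1})} \Big)\, a_{\tau(i_k)} a_{\tau(i_1)}^T ,
\]
and because $\|a_i\|_2 = \sqrt{N}$ this yields the exact identity $\|A_{\tau(i_k)} \cdots A_{\tau(i_1)}\| = N \prod_{l=2}^{k} |a_{\tau(i_l)}^T a_{\tau(i_{l-1})}|$. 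Hence the only randomness entering the norm of any cross term of $B_\tau$ is through inner products of pairs of distinct rows; note that for a strictly increasing tuple $i_1 < \cdots < i_k$ and a bijection $\tau$, every consecutive pair $(\tau(i_l), \tau(i_{l-1}))$ really is a pair of distinct rows.

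\emph{Step 2 (a permutation-free good event).} Next I would apply Lemma \ref{lem:cross_term_bound} to each of the $\binom{m}{2}$ pairs $(a_i, a_j)$ with $i < j$ --- which are independent, mean-zero, and sub-Gaussian of norm at most $K$ --- and take a union bound. Since $\binom{m}{2} \le m^2$, we land with probability at least $1 - m^2 \exp(-C\ln^2 m)$ on the event $\mathcal{E} := \{\, |a_i^T a_j| \le K^2 \sqrt{N}\,\ln^2 m \text{ for all } i \ne j \,\}$. The key point is that $\mathcal{E}$ does not depend on $\tau$, so the bound obtained below holds simultaneously for every permutation.

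\emph{Step 3 (deterministic summation on $\mathcal{E}$).} On $\mathcal{E}$, expanding $B_\tau$ into its cross terms of lengths $k = 2, \dots, m$, applying the triangle inequality together with the Step 1 identity, and counting the $\binom{m}{k}$ increasing tuples of each length, I obtain
\[
\|B_\tau\| \le \sum_{k=2}^{m} \binom{m}{k} \Big(\frac{\gamma}{N}\Big)^k N \,\big(K^2 \sqrt{N}\ln^2 m\big)^{k-1} = \frac{\sqrt{N}}{K^2\ln^2 m}\sum_{k=2}^{m}\binom{m}{k}\beta^k , \qquad \beta := \frac{\gamma K^2 \ln^2 m}{\sqrt{N}} .
\]
The hypothesis $\gamma \le \tfrac{1}{2m}(N/K^4\ln^4 m)^{1/2}$ is precisely the statement that $m\beta \le 1/2$, so using $\binom{m}{k}\le m^k$ we get $\sum_{k\ge 2}\binom{m}{k}\beta^k \le \sum_{k\ge 2}(m\beta)^k = (m\beta)^2/(1 - m\beta) \le 2m^2\beta^2$; substituting $\beta$ back in gives $\|B_\tau\| \le 2\gamma^2 m^2 (K^4 \ln^4 m/N)^{1/2}$, as claimed. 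I expect the main obstacle to be bookkeeping rather than anything conceptual: tracking the powers of $N$, $\gamma$, and $K^2\ln^2 m$ through the expansion so the geometric series lands exactly on the stated constant, and verifying that each consecutive pair in a cross term consists of distinct rows so that Lemma \ref{lem:cross_term_bound} applies. All of the probabilistic content is already packaged in Lemma \ref{lem:cross_term_bound}; once $\mathcal{E}$ holds the rest is a deterministic estimate, which is exactly why the conclusion can be made uniform over all permutations $\tau$.
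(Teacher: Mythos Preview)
Your proposal is correct and follows essentially the same route as the paper: both reduce the products $A_{\tau(i_k)}\cdots A_{\tau(i_1)}$ to a scalar factor $\prod_l |a_{\tau(i_l)}^T a_{\tau(i_{l-1})}|$ times a rank-one matrix of norm $N$, invoke Lemma~\ref{lem:cross_term_bound} with a union bound over all $\binom{m}{2}$ pairs to get a permutation-free event, and then sum the resulting geometric series under the hypothesis $m\beta \le 1/2$. The only cosmetic difference is that you state the rank-one product identity exactly whereas the paper phrases it as an inequality; the resulting bounds and constants coincide.
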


		\begin{proof}[proof of Lemma \ref{lem:Bound_of_cross_terms}] 
			First, note that $\|A_{i_1} A_{i_2}\| = \|a_{i_1} a_{i_1}^T a_{i_2} a_{i_2}^T\| \le  |a_{i_1}^T a_{i_2}| \|a_{i_1} a_{i_2}^T\|$. From Lemma \ref{lem:cross_term_bound}, $|a_{i_1}^T a_{i_2}| \le K^2 \sqrt{N} \ln^2 m$ with probability at least $1 -  \exp{\left(-C \ln^2 m \right)}$. Since there are $m \choose 2$ number of such pairs $(i_1, i_2)$, taking a union bound gives
			\[
			\mathbb{P} \left(  \text{ For all $i_1 \neq i_2$, } |a_{i_1}^T a_{i_2}| \le K^2 \sqrt{N} \ln^2 m \right) \ge 1 -  m^2  \exp{\left(-C \ln^2 m \right)}.
			\] 
			Also, it is easy to see that $\|a_{i_1} a_{i_2}^T\| \le N$ because $\|a_i\|_2 = \sqrt{N}$ for all $i$. 
			
			Combining these bounds, we have $\|A_{i_1} A_{i_2}\| \le K^2 \sqrt{N} \ln^2 m  \cdot N \le K^2 \ln^2 m \cdot N^{3/2}$ with probability exceeding $1 -  m^2  \exp{\left(-C \ln^2 m \right)}$. 
			Similarly, it is easy to check that
			$\|A_{i_1} A_{i_2} \dots A_{i_p} \| \le (K^2 \ln^2 m)^{(p-1)} N^{(p-1)/2} \cdot N \le (K \ln m)^{2(p-1)} \cdot N^{(p+1)/2}$.
			
			Then, the cross terms can be controlled using these bounds: 
			\[
			\left \|{\gamma^2 \over N^2} \sum_{ 1 \le i_1 < i_2 \le m}^m A_{\tau(i_2)} A_{\tau(i_1)} \right \| \le {\gamma^2 \over N^2} \sum_{ 1 \le i_1 < i_2 \le m}^m \| A_{\tau(i_2)} A_{\tau(i_1)} \| \le  {\gamma^2 \over N^2 } \cdot {m^2 \over 2! }  \cdot K^2 \ln^2 m \cdot N^{3/2}.
			\]
			In general, each term in \eqref{eq:cross_terms_in_product1} can be bounded as follows:
			\begin{align*}
				\left \|{\gamma^p \over N^p} \sum_{ 1 \le i_1 < i_2 < \cdots < i_p \le m}^m A_{\tau(i_p)} A_{\tau(i_{p-1})} \cdots A_{\tau(i_1)} \right \| &\le {\gamma^p \over N^p} \sum_{ 1 \le i_1 < i_2 < \cdots < i_p \le m}^m \| A_{\tau(i_p)} A_{\tau(i_{p-1})} \cdots A_{\tau(i_1)} \| \\
				& \le  {\gamma^p m^p \over N^p p!}  (K \ln m)^{2(p-1)} \cdot N^{(p+1)/2}\\
				& \le (\gamma m)^p \left({K^4 \ln^4 m \over N} \right)^{(p-1)/2}.
			\end{align*}
			Note that the above bounds hold, with probability at least $1 -  m^2  \exp{\left(-C \ln^2 m \right)}$, for all permutations simultaneously.
			
			The geometric sum formula combined with previous bounds gives the following upper bound for $\|B_\tau\|$, 
			\begin{equation}
				\label{eq:Bound_for_B}
				\| B_\tau \| < { \gamma^2 m^2 ({K^4 \ln^4 m \over N})^{1/2} \over 1 - \gamma m \left({K^4 \ln^4 m \over N}\right)^{1/2} }  \le  2 \gamma^2 m^2 \left({K^4 \ln^4 m \over N}\right)^{1/2}
			\end{equation}
			if $\gamma \le {1 \over 2m} \left({N \over K^4 \ln^4 m}\right)^{1/2}$, for all permutation $\tau$. This proves the lemma.
		\end{proof}
		
	   As we denoted by $B_\tau$ the cross terms in the matrix products in \eqref{eq:expanded_multi_step_eq1}, we denote the cross terms in the matrix products $\prod\limits_{j=0}^{i-1}  \left(I - \gamma {A_{\tau({m-j})}  \over N} \right)$ in  \eqref{eq:expanded_multi_step_eq2} by, 
		\[
		B_{\tau(i)} :=  {\gamma^2 \over N^2} \sum_{m-i+1 \le i_1 < i_2 \le m}^{m} A_{\tau(i_2)} A_{\tau(i_1)}  -  {\gamma^3 \over N^3} \sum_{m-i+1 \le i_1 < i_2 < i_3 \le m}^{m} A_{\tau(i_3)} A_{\tau(i_2)}  A_{\tau(i_1)} + \cdots.
		\] 
		\begin{lem}
		\label{lem:Bound_of_cross_terms2}
			If $\gamma \le {1 \over 2m} \left({N \over K^4 \ln^4 m}\right)^{1/2}$, then there exist a universal constant $C > 0$ such that 
			\[
			\|B_{\tau(i)} \|  <  2 \gamma^2 m^2 \left({K^4 \ln^4 m \over N}\right)^{1/2},
			\] uniformly for all permutation $\tau$ and all $i \in [m]$ with probability at least $1 -  m^2  \exp{\left(-C \ln^2 m \right)}$.
		\end{lem}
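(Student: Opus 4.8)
\textbf{Proof proposal for Lemma \ref{lem:Bound_of_cross_terms2}.}
The plan is to observe that $B_{\tau(i)}$ is, up to which index range the summations run over, structurally identical to $B_\tau$ from Lemma \ref{lem:Bound_of_cross_terms}, and that the only randomness entering either quantity (through $A_j = a_j a_j^T$) is carried by the pairwise inner products $a_{i_1}^T a_{i_2}$, since the norms $\|a_j\|_2 = \sqrt N$ are deterministic by hypothesis. Therefore a single good event controls all of them simultaneously. Concretely, I would first define the event
\[
\mathcal{E} := \left\{\, |a_{i_1}^T a_{i_2}| \le K^2 \sqrt N \ln^2 m \ \text{ for all } i_1 \neq i_2 \in [m] \,\right\},
\]
and note, exactly as in the proof of Lemma \ref{lem:Bound_of_cross_terms}, that Lemma \ref{lem:cross_term_bound} together with a union bound over the $\binom{m}{2} \le m^2$ unordered pairs gives $\mathbb{P}(\mathcal{E}) \ge 1 - m^2 \exp(-C \ln^2 m)$. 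The crucial point is that $\mathcal{E}$ does not depend on $\tau$ or on $i$, so once we argue on $\mathcal{E}$ the resulting bound is automatically uniform.

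Next I would repeat, verbatim, the operator-norm estimates from the proof of Lemma \ref{lem:Bound_of_cross_terms}: on $\mathcal{E}$ one has $\|A_{i_1} A_{i_2} \cdots A_{i_p}\| \le (K\ln m)^{2(p-1)} N^{(p+1)/2}$ for every choice of distinct indices, using $\|a_i a_j^T\| \le N$ and the inner-product bound from $\mathcal{E}$. The only difference with $B_\tau$ is that the summations defining $B_{\tau(i)}$ range over $p$-tuples $m-i+1 \le i_1 < \cdots < i_p \le m$, i.e.\ over a sub-collection of the index set $[m]$ of size $i \le m$. Hence the number of such $p$-tuples is at most $\binom{i}{p} \le \binom{m}{p} \le m^p/p!$ — the very same count already used for $B_\tau$. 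Plugging this in gives, for each term,
\[
\left\| {\gamma^p \over N^p} \sum_{m-i+1 \le i_1 < \cdots < i_p \le m} A_{\tau(i_p)} \cdots A_{\tau(i_1)} \right\| \le (\gamma m)^p \left({K^4 \ln^4 m \over N}\right)^{(p-1)/2},
\]
and summing the geometric series under the stated condition $\gamma \le {1 \over 2m}(N/(K^4 \ln^4 m))^{1/2}$ yields $\|B_{\tau(i)}\| < 2\gamma^2 m^2 (K^4\ln^4 m / N)^{1/2}$, as in \eqref{eq:Bound_for_B}. Since this holds on $\mathcal{E}$ for every permutation $\tau$ and every $i \in [m]$ at once, the lemma follows with $\mathbb{P}(\mathcal{E}) \ge 1 - m^2\exp(-C\ln^2 m)$.

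I do not anticipate a genuine obstacle here; the content of the lemma is essentially bookkeeping on top of Lemma \ref{lem:Bound_of_cross_terms}. The one thing to get right is the reduction to a $\tau$- and $i$-independent event: one must resist the temptation to union-bound over permutations or over windows $i$ (which would be wasteful or even divergent), and instead recognize that controlling the $\binom{m}{2}$ pairwise inner products $a_{i_1}^T a_{i_2}$ already determines the operator norm of every truncated product appearing in any $B_{\tau(i)}$. A secondary (minor) point is verifying that restricting the summation window only decreases the number of tuples, so the bound for $B_\tau$ transfers without loss.
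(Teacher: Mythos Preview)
Your proposal is correct and matches the paper's own treatment: the paper simply states that the proof of Lemma \ref{lem:Bound_of_cross_terms2} ``follows exactly the same argument used for the proof of Lemma \ref{lem:Bound_of_cross_terms}, so it is omitted.'' Your write-up makes explicit precisely the two points that justify this omission --- that the single event $\mathcal{E}$ controlling all $\binom{m}{2}$ inner products is already uniform in $\tau$ and $i$, and that the truncated summation window only shrinks the tuple count --- so there is nothing to add.
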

		The proof of this lemma follows exactly the same argument used for the proof of Lemma \ref{lem:Bound_of_cross_terms}, so it is omitted.
		
		Now continuing from \eqref{eq:expanded_multi_step_eq1} and using the convention $\sum\limits_{i=p}^q = 0$ if $p > q$, we have
	\begin{align*}
		&x^k_{m+1} - x \\
            &= \left(I -  {\gamma \over N} \sum_{j=1}^m A_j + B_\tau \right) (x^k_1 -x) + \gamma \sum_{i=0}^{m-1} e_{\tau(m-i)}   \left(I - {\gamma \over N} \sum_{j=m-i+1}^{m} A_{\tau(j)} + B_{\tau(i)} \right) {a_{\tau(m-i)} \over N}\\
			&= x^k_1 - x  -{\gamma \over N}          \sum_{j=1}^m A_j (x^k_1 -x) + B_\tau (x^k_1 -x) + \gamma \sum_{i=0}^{m-1} e_{\tau(m-i)}   \left(I - {\gamma \over N} \sum_{j=m-i+1}^{m} A_{\tau(j)} + B_{\tau(i)} \right) {a_{\tau(m-i)} \over N},
		\end{align*}
		which implies 
		\[
		g^k = x^k_{m+1} - x^k_1 =  -{\gamma \over N} \sum_{j=1}^m A_j (x^k_1 -x) + B_\tau (x^k_1 -x) + \gamma \sum_{i=0}^{m-1} e_{\tau(m-i)}   \left(I - {\gamma \over N} \sum_{j=m-i+1}^{m} A_{\tau(j)} + B_{\tau(i)} \right) {a_{\tau(m-i)} \over N}.
		\]
	This leads to
	\begin{align*}
	    u^k &= x^k_1 + \lambda g^k \\
            &= x^k_1  -{\lambda \gamma \over N} \sum_{j=1}^m A_j (x^k_1 -x) + \lambda B_\tau (x^k_1 -x)+ \lambda\gamma \sum_{i=0}^{m-1} e_{\tau(m-i)}   \left(I - {\gamma \over N} \sum_{j=m-i+1}^{m} A_{\tau(j)} + B_{\tau(i)} \right) {a_{\tau(m-i)} \over N},
	\end{align*}	
	so we obtain
		\begin{align*}
		    u^k - x &= \left( I -{\lambda \gamma \over N} \sum_{j=1}^m A_j + \lambda B_\tau \right) (x^k_1 -x) + \lambda\gamma \sum_{i=0}^{m-1} e_{\tau(m-i)}   \left(I - {\gamma \over N} \sum_{j=m-i+1}^{m} A_{\tau(j)} + B_{\tau(i)} \right) {a_{\tau(m-i)} \over N}\\
		    &=  \left( I -{\lambda \gamma \over N} A^TA + \lambda B_\tau \right) (x^k_1 -x) + {\lambda\gamma \over N} \sum_{i=0}^{m-1} e_{\tau(m-i)}   \left(I - {\gamma \over N} \sum_{j=m-i+1}^{m} A_{\tau(j)} + B_{\tau(i)} \right) {a_{\tau(m-i)}}.
		\end{align*}

        To proceed further, we use the following lemma from \cite{foucart2013invitation}. 
        \begin{lem} [Theorem 9.6 in \cite{foucart2013invitation}]
		\label{lem:SubGaussian_lemma}
		Let $A$ be a $q \times N$ sampling matrix whose rows are independent, mean-zero isotropic sub-Gaussian with sub-Gaussian norm bounded by $K$.
		
	Then, for given $\epsilon, \delta \in (0,1)$, if
		\[
		q \ge C \delta^{-2} (s \ln (N/s) + \ln (2 \epsilon^{-1})),
		\]  the restricted isometry constant $\delta_{s,q}$ of ${1 \over \sqrt{q}} A$ satisfies $\delta_{s,q} \le \delta$ with probability at least $1 - \epsilon$ for some constant $C>0$ that only depends on $K$. 
	\end{lem}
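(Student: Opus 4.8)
The plan is to prove the restricted isometry property by the classical two-layer covering argument: control the empirical second-moment matrix on each fixed support, then take a union bound over all supports. Write $M = \frac{1}{\sqrt q}A$ and recall the equivalent formulation of the RIP constant,
\[
\delta_{s,q} = \sup_{S \subset [N],\, |S| \le s} \left\| M_S^T M_S - I_s \right\|,
\]
where $M_S$ denotes the column submatrix of $M$ indexed by $S$. For a fixed support $S$, note that $M_S^T M_S = \frac{1}{q}\sum_{i=1}^q (a_i)_S (a_i)_S^T$ is an average of $q$ independent rank-one matrices, and since each row $a_i$ is isotropic, each summand has mean $I_s$; hence $\ee\, M_S^T M_S = I_s$. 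The task therefore reduces to a concentration estimate for this empirical covariance about its mean, uniformly over $S$.

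First I would fix $S$ and pass to a $\frac14$-net $\mathcal N$ of the sphere $\mathbb{S}^{s-1}$, whose cardinality satisfies $|\mathcal N| \le 9^s$. For a symmetric matrix $B$ one has $\|B\| \le 2 \max_{u \in \mathcal N} |u^T B u|$, so it suffices to control the quadratic forms $u^T (M_S^T M_S - I_s) u$ on the net. For fixed $u \in \mathbb{S}^{s-1}$, letting $\tilde u \in \R^N$ denote $u$ padded with zeros off $S$,
\[
u^T \left( M_S^T M_S - I_s\right) u = \frac{1}{q}\sum_{i=1}^q \left( \inner{a_i, \tilde u}^2 - 1 \right),
\]
where the centering uses isotropy ($\ee\, \inner{a_i,\tilde u}^2 = 1$). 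Each $\inner{a_i,\tilde u}$ is mean-zero sub-Gaussian with $\psi_2$-norm at most $K$, so $\inner{a_i,\tilde u}^2 - 1$ is a mean-zero sub-exponential variable with $\psi_1$-norm $\lesssim K^2$; Bernstein's inequality then gives, for $t \in (0,1)$,
\[
\pp\left( \left| \frac{1}{q}\sum_{i=1}^q \left(\inner{a_i,\tilde u}^2 - 1\right) \right| > t \right) \le 2\exp\left(-c\, q\, t^2 / K^4\right).
\]

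Next I would assemble the two union bounds. Taking $t = \delta/2$ and a union over the $9^s$ net points yields $\pp(\|M_S^T M_S - I_s\| > \delta) \le 2\exp\left(s\ln 9 - c\, q\, \delta^2 / K^4\right)$ for the fixed support $S$. A further union bound over the $\binom{N}{s} \le (eN/s)^s$ choices of support gives
\[
\pp\left(\delta_{s,q} > \delta\right) \le 2\exp\left( s\ln(eN/s) + s\ln 9 - c\, q\, \delta^2/K^4 \right).
\]
Since $s\ln(eN/s) + s\ln 9 \le C' s\ln(N/s)$, requiring $q \ge C\delta^{-2}\left(s\ln(N/s) + \ln(2\epsilon^{-1})\right)$ with $C$ chosen large enough (depending on $K$ through the factor $K^4$) makes the exponent at most $\ln\epsilon - \ln 2$, so the failure probability is at most $\epsilon$, as claimed.

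The main obstacle is the per-point concentration step: one must verify that squaring a sub-Gaussian row coordinate produces a genuinely sub-exponential summand with the stated $\psi_1$-norm and apply Bernstein's inequality in its sub-exponential form, keeping careful track of how the constant absorbs $K$. The remaining bookkeeping --- the $9^s$ covering number for the net and the $\binom{N}{s}$ factor for the supports --- is routine but must be balanced so that the exponent $s\ln 9 + s\ln(eN/s)$ collapses into a single $s\ln(N/s)$ term, which is precisely what dictates the $\delta^{-2}\,s\ln(N/s)$ sample complexity.
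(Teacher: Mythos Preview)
Your proposal is correct and follows the standard covering-plus-Bernstein argument. Note, however, that the paper does not actually prove this lemma: it is quoted verbatim as Theorem~9.6 from Foucart--Rauhut \cite{foucart2013invitation} and used as a black box inside the proof of Theorem~\ref{thm:main_theorem2}. Your sketch is essentially the proof given in that reference, so there is nothing to compare against in the paper itself.

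One small bookkeeping point: the absorption $s\ln(eN/s) + s\ln 9 \le C' s\ln(N/s)$ requires $N/s$ to be bounded away from $1$ (otherwise the right side vanishes while the left does not). This is harmless in practice---one either assumes $s \le N/2$ or keeps the $e$ inside the logarithm---but you should flag the implicit assumption if you write this out in full.
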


     We set $\epsilon = 2 \exp (-\delta^2m /2C)$.  Then, by Lemma \ref{lem:SubGaussian_lemma}, ${1 \over \sqrt{m}}$ A satisfies the RIP with $\delta_{3s} \le \delta$ with probability at least $1 - \epsilon$ if $m \ge 6C \delta^{-2} s \ln (N/s)$.

    From the assumption in the theorem, 
    $m \ge  C_1 s \ln (N/s) (K\ln m)^2 \ge 48 s \ln (N/s) (K\ln m)^2$ by taking $C_1$ sufficiently large. 
    This requirement on $m$ makes possible to set $\delta$ as in the theorem, $\delta =  \sqrt{3s \ln (N/s) (K\ln m)^2 \over m} < 1/4$ which is equivalent to $m \ge 48 s \ln (N/s) (K\ln m)^2$.
    
    Moreover, it is easy to check that $m \ge  C_1 s \ln (N/s) (K\ln m)^2$ implies $m \ge 6C \delta^{-2} s \ln (N/s)$ for sufficiently large $C_1$. Hence, ${1 \over \sqrt{m}}$ A satisfies the RIP with $\delta_{3s} \le \delta$ with probability at least 
    \[
        1 - \epsilon =  1 - 2 \exp \left( - 3s \ln (N/s) (K\ln m)^2 /2C \right) \ge 1 - 2 \exp \left(-C'\ln m^2 \right),
    \]
     where $C' > 0$ is a constant that only depends on $K$. 

    Now, since ${1 \over \sqrt{m}} A$ satisfies the RIP with $\delta_{3s} \le \delta$ with high probability, we want the step sizes $\gamma, \lambda$ to satisfy ${\lambda \gamma \over N} = {1 \over m}$ and $\lambda \|B_\tau\| \le \delta$. These requirements and the upper bound for $B_\tau$ in Lemma \ref{lem:Bound_of_cross_terms} imply that the step size $\gamma$ for the Kaczmarz loop should be chosen so that $\gamma \le  {\delta \over 2m (K \ln m)^2 N^{1/2}}$. For simplicity, set $\gamma =  {\delta \over 2 m (K \ln m)^2 N^{1/2}}$. This in turn gives the step size $\lambda$ in the IHT gradient step, $\lambda = {2 N^{3/2} (K \ln m)^2 \over \delta}$. Intuitively, this choice of step sizes makes sense because the ${\lambda \gamma \over N} A^TA$ term depends on $\gamma$  linearly, whereas $B_\tau$ depends on the small step size $\gamma$ quadratically, which makes the term ${\lambda \gamma \over N} A^TA$ dominate.
		
	Hence, we  establish
		\begin{align*}
		    &u^k - x\\
		    &=  \left( I -{1 \over m} A^TA + \lambda B_\tau \right) (x^k_1 -x) + {1 \over m} \sum_{i=0}^{m-1} e_{\tau(m-i)}   \left(I - {\gamma \over N} \sum_{j=m-i+1}^{m} A_{\tau(j)} + B_{\tau(i)} \right) {a_{\tau(m-i)}} \\
		    &= \left(I - \left({1 \over \sqrt{m}} A\right)^T \left( {1 \over \sqrt{m}}A \right) + \lambda B_\tau \right)(x^k_1 -x) + {1 \over m} \sum_{i=0}^{m-1} e_{\tau(m-i)}   \left(I - {\gamma \over N} \sum_{j=m-i+1}^{m} A_{\tau(j)} + B_{\tau(i)} \right) {a_{\tau(m-i)}},
		\end{align*}
	which is equivalent to the gradient step equation for IHT except for an additional term $\lambda B (x^k_1 -x)$ and an error term originated from the noise $e$ in the measurements. 
  
	The rest of the proof for the linear convergence of KZPT with sub-Gaussian sensing matrices relies on the argument used in the proof of Theorem \ref{thm:KZIHT_convergence} combined with Lemma \ref{lem:Bound_of_cross_terms} and \ref{lem:Bound_of_cross_terms2} as follows.
		
	Because $x^{k+1}_1$ is the closest $s$-sparse vector to $u^k$, 
		\[
		\|u^k - x^{k+1}_1\|_2^2 \le \|u^k - x\|_2^2.
		\]
	We expand the left hand side of above equation, $\|(u^k - x) - (x^{k+1}_1 - x) \|_2^2$ to obtain 
		\[
		\|x^{k+1}_1 - x\|_2^2 \le 2 \inner{u^k - x, x^{k+1}_1 - x}.
		\]
	Let $\Omega_{k+1}$ be the union of supports of $x^{k+1}_1$ and $x$ and $P_{\Omega_{k+1}}$ be the corresponding orthogonal projection. The right hand side of this inequality is further bounded by the above as below.
		\small
		\begin{align*}
			&\inner{u^k - x, x^{k+1}_1 - x} \\
			&= \inner{ \left(I - \left({1 \over \sqrt{m}} A\right)^T \left( {1 \over \sqrt{m}}A \right)  \right)(x^k_1 -x), x^{k+1}_1 - x} + \inner{  \lambda B_\tau (x^k_1 -x), x^{k+1}_1 - x} \\
			& \qquad +   \inner{ {1 \over m} \sum_{i=0}^{m-1} e_{\tau(m-i)}   \left(I - {\gamma \over N} \sum_{j=m-i+1}^{m} A_{\tau(j)} + B_{\tau(i)} \right) {a_{\tau(m-i)}}, x^{k+1}_1 - x}  \\
			&\stackrel{(i)}{\le} \delta_{3s} \|x^k_1 -x\|_2 \|x^{k+1}_1 - x\|_2 + \inner{  \lambda B_\tau (x^k_1 -x), x^{k+1}_1 - x} \\
            &\qquad +  \inner{ {1 \over m} \sum_{i=0}^{m-1} e_{\tau(m-i)}   \left(I - {\gamma \over N} \sum_{j=m-i+1}^{m} A_{\tau(j)} + B_{\tau(i)} \right) {a_{\tau(m-i)}}, x^{k+1}_1 - x}\\
            &\le \delta \|x^k_1 -x\|_2 \|x^{k+1}_1 - x\|_2 + \inner{  \lambda B_\tau (x^k_1 -x), x^{k+1}_1 - x} \\
            &\qquad +  \inner{ {1 \over m} \sum_{i=0}^{m-1} e_{\tau(m-i)}   \left(I - {\gamma \over N} \sum_{j=m-i+1}^{m} A_{\tau(j)} + B_{\tau(i)} \right) {a_{\tau(m-i)}}, x^{k+1}_1 - x}\\
			&\le \delta \|x^k_1 -x\|_2 \|x^{k+1}_1 - x\|_2 + \lambda \|B_\tau (x^k_1 -x)\|_2 \|x^{k+1}_1 - x\|_2 \\
			&\qquad + \inner{{1 \over m} \sum_{i=0}^{m-1} e_{\tau(m-i)}   \left(I - {\gamma \over N} \sum_{j=m-i+1}^{m} A_{\tau(j)} + B_{\tau(i)} \right) {a_{\tau(m-i)}}, x^{k+1}_1 - x}\\
			&= \delta \|x^k_1 -x\|_2 \|x^{k+1}_1 - x\|_2 + \lambda \|B_\tau\| \|x^k_1 -x\|_2 \|x^{k+1}_1 - x\|_2 \\
			&\qquad +  \inner{{1 \over m} \sum_{i=0}^{m-1} e_{\tau(m-i)}   a_{\tau(m-i)}, P_{\Omega_{k+1}}\left( x^{k+1}_1 - x\right) }- {1 \over m} \sum_{i=0}^{m-1} e_{\tau(m-i)}  {\gamma \over N} \sum_{j=m-i+1}^{m} \inner{   A_{\tau({j})}   a_{\tau(m-i)}, x^{k+1}_1 - x} \\
			&\qquad +  {1 \over m} \sum_{i=0}^{m-1}  e_{\tau(m-i)}  \inner{B_{\tau(i)}  a_{\tau(m-i)}, x^{k+1}_1 - x}\\
			&\le \delta \|x^k_1 -x\|_2 \|x^{k+1}_1 - x\|_2 + \lambda \|B_\tau\| \|x^k_1 -x\|_2 \|x^{k+1}_1 - x\|_2 \\
			&\quad +  \left \|P_{\Omega_{k+1}}\left( {1 \over m} \sum_{i=0}^{m-1} e_{\tau(m-i)}   a_{\tau(m-i)} \right)\right\| \left\|  x^{k+1}_1 - x \right\|_2 + {1 \over m} \sum_{i=0}^{m-1} |e_{\tau(m-i)}|  {\gamma \over N} \sum_{j=m-i+1}^{m} \| A_{\tau({j})}  \| \|a_{\tau(m-i)}\|_2 \| x^{k+1}_1 - x \|_2 \\
			&\quad+  {1 \over m} \sum_{i=0}^{m-1} |e_{\tau(m-i)}|  \| B_{\tau(i)} \| \|a_{\tau(m-i)}\|_2 \| x^{k+1}_1 - x \|_2\\
			&\stackrel{(ii)}{<} \delta \|x^k_1 -x\|_2 \|x^{k+1}_1 - x\|_2 + \lambda \|B_\tau\| \|x^k_1 -x\|_2 \|x^{k+1}_1 - x\|_2 \\
			&\quad +  \left \|P_{\Omega_{k+1}}\left({1 \over m} \sum_{i=0}^{m-1} e_{\tau(m-i)}   a_{\tau(m-i)} \right)\right\| \left\|  x^{k+1}_1 - x \right\|_2 \\
                &\quad + {1 \over m} \sum_{i=0}^{m-1} |e_{\tau(m-i)}|  {\delta \over 2 m (K \ln m)^2 N^{3/2}} \sum_{j=m-i+1}^{m} N \cdot N^{1/2} \cdot \| x^{k+1}_1 - x \|_2 \\
			&\quad +  {1 \over m} \sum_{i=0}^{m-1} |e_{\tau(m-i)}| {\delta^2 \over 2 N^{3/2} (K \ln m)^2 } \cdot N^{1/2} \cdot \| x^{k+1}_1 - x \|_2\\
			&\le \delta \|x^k_1 -x\|_2 \|x^{k+1}_1 - x\|_2 + \lambda \|B_\tau\| \|x^k_1 -x\|_2 \|x^{k+1}_1 - x\|_2 \\
			&\quad +  \left \|P_{\Omega_{k+1}}\left({1 \over m} \sum_{i=1}^m e_i a_i \right)\right\| \left\|  x^{k+1}_1 - x \right\|_2 + {1 \over m} \sum_{i=1}^m |e_i|  {\delta \over 2 (K \ln m)^2 }  \| x^{k+1}_1 - x \|_2 \\
			&\quad +  {1 \over m} \sum_{i=1}^m |e_i|  {\delta^2 \over 2 N (K \ln m)^2 }  \cdot \| x^{k+1}_1 - x \|_2\\
			&\le \delta \|x^k_1 -x\|_2 \|x^{k+1}_1 - x\|_2 + \lambda \|B_\tau\| \|x^k_1 -x\|_2 \|x^{k+1}_1 - x\|_2 \\
			&\quad + \left \|P_{\Omega_{k+1}}\left({1 \over m} \sum_{i=1}^m e_i a_i \right)\right\| \left\|  x^{k+1}_1 - x \right\|_2 + \left({\delta \over 2  (K \ln m)^2 } +  {\delta^2 \over 2N (K \ln m)^2 } \right) {\|e\|_1 \over m} \cdot  \| x^{k+1}_1 - x \|_2.
		\end{align*} 
         \normalsize
         
  In the chain of inequalities above, $(i)$ follows from Lemma 6.16 in \cite{foucart2013invitation} and $(ii)$ follows from the choice of $\gamma = {\delta \over 2 m (K \ln m)^2 N^{1/2}}$ and Lemma \ref{lem:Bound_of_cross_terms2}. 
		
	Thus, if $\|x^{k+1}_1 - x\|_2 > 0$, we have
		\begin{align}
    		\nonumber
		    &\|x^{k+1}_1 - x\|_2 \\
		    \nonumber
		    &\le 2(\delta \|x^k_1 -x\|_2 + \lambda \|B_\tau\| \|x^k_1 -x\|_2) \\
		    \nonumber
		    &\qquad  + 2\sup_{\Omega \subset [N], |\Omega| \le 2s} \left \|P_{\Omega} \left(  {1 \over m} \sum_{i=1}^m e_i a_i \right) \right \|_2 + \left({\delta \over   (K \ln m)^2 } +  {\delta^2 \over N (K \ln m)^2} \right)  {\|e\|_1 \over m}\\
		    \label{eq:KZIHT_Bernoulli_three_terms}
		    &\le 4\delta\|x^k_1 -x\|_2 + 2\sup_{\Omega \subset [N], |\Omega| \le 2s} \left \|P_{\Omega} \left(  {1 \over m} A^Te \right) \right \|_2 + \left({\delta \over   (K \ln m)^2 } +  {\delta^2 \over N (K \ln m)^2 } \right)  {\|e\|_1 \over m},
		\end{align}
		where we have used $\lambda \|B_\tau\| < \delta$ due to the choice of $\lambda$ and Lemma \ref{lem:Bound_of_cross_terms}. 
		
	 Using the choice of $\delta$ to the third term in \eqref{eq:KZIHT_Bernoulli_three_terms}, we establish
	\begin{align*}
	    &\|x^{k+1}_1 - x\|_2 \\
	    &\le  4\delta\|x^k_1 -x\|_2 + 2 \sup_{\Omega \subset [N], |\Omega| \le 2s} \left \|P_{\Omega} \left(  {1 \over m} A^Te \right) \right \|_2 \\
	    &\qquad + 2 \left({1 \over K\ln m} \sqrt{{3s \ln (N/s) \over m  }} +  {3s \ln (N/s) \over 2mN  } \right)  {\|e\|_1 \over m}\\
	    &\le  4\delta\|x^k_1 -x\|_2 + 2 \sup_{\Omega \subset [N], |\Omega| \le 2s} \left \|P_{\Omega} \left(  {1 \over m} A^Te \right) \right \|_2 + O \left(\sqrt{{s \ln (N/s) \over m }} \right)  {\|e\|_1 \over m}. 
	\end{align*}

    Applying mathematical induction on $k$ combined with the infinite geometric series formula gives
	\[
	\|x^{k+1}_1 - x\| \le \left(4\delta\right)^k \|x^1_1 - x\| + {2 \over 1 - 4\delta} \sup_{\Omega \subset [N], |\Omega| \le 2s} \left \|P_{\Omega} \left(  {1 \over m} A^Te \right) \right \|_2 + O \left(\sqrt{{s \ln (N/s) \over m }} \right)  {\|e\|_1 \over  (1 - 4\delta)m}
	\]

	 Since $\delta < 1/4$, this shows the linear convergence of KZPT for random matrices whose rows are of fixed length, mean-zero, independent, isotropic sub-Gaussian, with probability exceeding 
    $1 - m^2  \exp{\left(-C \ln^2 m \right)} - 2 \exp \left(-C'\ln m^2 \right) \ge 1 - (m^2 + 2)\exp{\left(-C_2 \ln^2 m \right)}$ for some constant $C_2 > 0$. 
    \end{proof}

	\section{KZPT with Optimal Period}
	In this section, we investigate the benefits of using nontrivial periods $p$ for KZPT, especially in the low noise regime. We recall that Algorithm \ref{alg:KZPT} details KZPT.

	The convergence analysis of KZPT with nontrivial period $p$ is similar to the $p = m$ case discussed in Section \ref{section:KZPT_period_1}. We construct the sampling matrix $A$ by selecting a subset of $m$  rows of the BOS matrix uniformly at random, as in Section \ref{subsection:BOS}. For simplicity, we assume that $m$ is divisible by the period $p$. We denote $A^{(i)}$ the $p \times N$ submatrix associated  with the $(i-1) + 1$ to $(i-1) + p$-th rows of the matrix $A$. Then, it is easy to see that $A^{(i)}$ can be thought of as a randomly subsampled BOS with $p$ number of rows. 
 
	The following theorem provides a linear convergence of KZPT for this setting.
	
	\begin{thm}
		\label{thm:convergence_KZPT}
		Let $A$ be an $m \times N$ randomly subsampled bounded orthonormal matrix and we run KZPT with measurements $b = Ax +e$ for some $s$-sparse vector $x \in \R^N$. Suppose the row selection rule $\tau$ is a (possibly random) permutation of the index set $\{1,2, \dots, m\}$. Let $\delta_{(p)} \in (0,1/2)$. Assume that the period $p$ divides $m$. Then, the iterates of KZPT with $\gamma = N/p$ satisfy
	\begin{align*}
			  &\|x^{k+1}_1 - x\| \le (2\delta_{(p)})^{{m \over p} \cdot k} \|x^1_1 - x\| + {2 \over (1- 2\delta_{(p)}) \left( 1-(2\delta_{(p)})^{{m \over p}} \right) } \sup_{\substack{i \in [m/p] \\ \Omega \subset [N], |\Omega| \le 2s}} \left \|P_{\Omega} \left(  {1 \over p} \left( A^{(i)} \right)^Te\right) \right \|_2,
	\end{align*} with probability at least $1 - c{m \over pN^3}$ as long as $p \ge C \delta_{(p)}^{-2} s \ln^4 N$ for some universal constants $c, C > 0$. 

    Moreover, for the optimal choice of the period $p$, the iterates of KZPT satisfy
	\begin{align*}
	    &\|x^{k+1}_1 - x\| \le \min_{\substack{m \% p = 0 \\ p > 9C  s \ln^4 N}} \left[ 
         \left({\left(m \over p\right)}^{m \over 2p}  \left(  2 \sqrt{{C s \ln^4 N \over m}}   \right)^{m \over p} \right)^k \|x^1_1 - x\| + 18\sigma_p  \right],
    \end{align*}
    where $\sigma_p = \sup\limits_{\substack{i \in [m/p] \\ \Omega \subset [N], |\Omega| \le 2s}} \left \|P_{\Omega} \left(  {1 \over p} \left( A^{(i)} \right)^Te\right) \right \|_2$ with probability at least $1 - c{m \over N^3}$ as long as $m > 9C  s \ln^4 N$.
    \end{thm}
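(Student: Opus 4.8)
The plan is to run the analysis behind Theorem~\ref{thm:KZIHT_convergence} \emph{one thresholding block at a time}. Since $p\mid m$, one epoch of KZPT (taking $\lambda=1$, as in the period-$m$ case, so the thresholding is applied directly to the post-block iterate) splits into $m/p$ consecutive blocks, each consisting of $p$ Kaczmarz iterations with step size $\gamma=N/p$ followed by one hard thresholding; the $i$-th block only touches the $p$ rows forming $A^{(i)}$. First I would note that because $\tau$ is a permutation, those $p$ rows are distinct rows of $\Phi$, hence mutually orthogonal and all of norm $\sqrt{N}$, so the exact cancellation of cross terms used in the proof of Theorem~\ref{thm:KZIHT_convergence} carries over verbatim with $(m,A,e)$ replaced by $(p,A^{(i)},e^{(i)})$, where $e^{(i)}$ is the corresponding block of the noise. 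Hence each block acts as one IHT step $y\mapsto T_s\bigl(y+\tfrac{1}{p}(A^{(i)})^T(b^{(i)}-A^{(i)}y)\bigr)$ for the subsystem, and the standard IHT estimate gives the one-block contraction $\|y'-x\|_2\le 2\delta_{(p)}\|y-x\|_2+2\sigma_p$ on the event that $\tfrac{1}{\sqrt{p}}A^{(i)}$ has $3s$-RIP constant at most $\delta_{(p)}$, with $\sigma_p$ as in the statement.

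Next I would control that RIP event and telescope. Each $A^{(i)}$ is itself a randomly subsampled BOS with $p$ rows, so Lemma~\ref{lem:BOS_lemma} with $\epsilon=c/N^3$ makes the event hold with probability at least $1-c/N^3$ as soon as $p\ge C\delta_{(p)}^{-2}s\ln^4 N$; a union bound over the $m/p$ blocks of the fixed permutation $\tau$ yields the failure probability $c\,m/(pN^3)$. Then I would compose the one-block contraction over the $m/p$ blocks of an epoch and telescope over epochs $k$, using the within-epoch geometric sum $\sum_{j=0}^{m/p-1}(2\delta_{(p)})^j\le(1-2\delta_{(p)})^{-1}$ to accumulate the block-wise noise terms and the across-epoch sum $\sum_{l\ge0}(2\delta_{(p)})^{(m/p)l}=(1-(2\delta_{(p)})^{m/p})^{-1}$; this produces the first displayed bound.

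For the optimal-period statement the key idea is to saturate the feasibility constraint. Since $p\ge C\delta_{(p)}^{-2}s\ln^4 N$ is equivalent to $\delta_{(p)}\ge\sqrt{Cs\ln^4 N/p}$, I would set $\delta_{(p)}=\sqrt{Cs\ln^4 N/p}$ and rewrite $2\delta_{(p)}=(m/p)^{1/2}\cdot 2\sqrt{Cs\ln^4 N/m}$, so the per-epoch contraction factor becomes $(2\delta_{(p)})^{m/p}=(m/p)^{m/(2p)}\bigl(2\sqrt{Cs\ln^4 N/m}\bigr)^{m/p}$; raising to the $k$-th power matches the stated exponential term. When $p>9Cs\ln^4 N$ one has $\delta_{(p)}<1/3$, hence $1-2\delta_{(p)}>1/3$ and $1-(2\delta_{(p)})^{m/p}\ge 1-2\delta_{(p)}>1/3$, so the residual prefactor $\tfrac{2}{(1-2\delta_{(p)})(1-(2\delta_{(p)})^{m/p})}$ is below $18$, giving the $18\sigma_p$ term. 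Finally I would union-bound over all divisors $p$ of $m$ with $p>9Cs\ln^4 N$ (the at-most-logarithmic number of divisors in $\sum_{p\mid m}c\,m/(pN^3)$ absorbed into $c$), so the first bound holds simultaneously for every admissible $p$; the hypothesis $m>9Cs\ln^4 N$ makes $p=m$ admissible, so minimizing the right-hand side over such $p$ is legitimate and yields the second claim.

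The main obstacle I anticipate is organizational rather than conceptual: carrying the two-level geometric bookkeeping (within-epoch over the $m/p$ blocks, across-epoch over $k$) cleanly while tracking the block-dependent noise terms, and verifying that saturating $\delta_{(p)}$ at the constraint boundary and then optimizing over $p$ reproduces exactly the stated rate. A secondary point needing care is uniformity: one must ensure the RIP is required only for the $m/p$ fixed blocks determined by $\tau$ --- which is what using a single permutation buys --- since demanding RIP for all $p$-element subsets of the $m$ rows would ruin the probability estimate.
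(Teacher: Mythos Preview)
Your proposal is correct and follows essentially the same route as the paper's own proof: reduce each of the $m/p$ thresholding blocks to a single IHT step on the subsystem $(A^{(i)},b^{(i)})$ via the orthogonality-based cancellation from Theorem~\ref{thm:KZIHT_convergence}, invoke Lemma~\ref{lem:BOS_lemma} with a union bound over the $m/p$ blocks to secure $\delta_{3s,p}^{(i)}\le\delta_{(p)}$, telescope the one-block contraction within and across epochs, and for the second part saturate $\delta_{(p)}=\sqrt{Cs\ln^4 N/p}$ and rewrite the rate via $2\delta_{(p)}=(m/p)^{1/2}\cdot 2\sqrt{Cs\ln^4 N/m}$. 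Your explicit two-level geometric bookkeeping and your remark that the union bound need only range over the $m/p$ fixed blocks determined by $\tau$ are both on point; the paper states the per-epoch contraction directly and is terser about these details, but the argument is the same.
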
 
	By comparing the convergence rate in Theorem \ref{thm:convergence_KZPT} with the one of KZIHT ($p =m$ case) in Theorem \ref{thm:KZIHT_convergence} for randomly subsampled BOS, KZPT is guaranteed to converge faster than KZIHT with a proper choice of the period $p$ in general.
	In particular, the following corollary offers more quantitative information on the optimal choice of the period $p$ to achieve the best possible convergence rate of KPZT for noiseless setting. 

	\begin{cor}
		\label{cor:KZPT_Corollary}
		Consider the noiseless setting, i.e., $e = 0$. We use the same notation and assumptions in Theorem \ref{thm:convergence_KZPT} and let $p$ be the closest integer to $C \cdot 4 e \cdot s \ln^4 N$ that divides $m$. Then the iterates of KZPT with this choice of period $p$ satisfies

		\[
			\|x^{k+1}_1 - x\| \le \left[	\exp \left(- {c m \over  s \ln^4 N} \right) \right]^k \|x^1_1 - x\|, 
		\] whereas the iterates of KZIHT obey
		\[
		 \|x^{k+1}_1 - x\| \le \left[ 2 \sqrt{{ C s \ln^4 N \over m}} \right]^k \|x^1_1 - x\|
		\] for some universal constants $c, C>0$.
	\end{cor}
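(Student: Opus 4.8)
\emph{Proof proposal.} The plan is to specialize the ``moreover'' part of Theorem~\ref{thm:convergence_KZPT} to the noiseless regime and then carry out a one-dimensional optimization over the period. With $e=0$ we have $\sigma_p=0$ for every $p$, so the bound of Theorem~\ref{thm:convergence_KZPT} collapses to $\|x^{k+1}_1-x\| \le \rho(p)^k\,\|x^1_1-x\|$ for every admissible period $p$ (i.e. $p\mid m$ with $p > 9Cs\ln^4 N$), where
\[
\rho(p) \;=\; \left(\frac{m}{p}\right)^{\!m/(2p)}\left(2\sqrt{\tfrac{Cs\ln^4 N}{m}}\right)^{\!m/p}.
\]
Since the ``moreover'' bound is already a minimum over admissible $p$, it suffices to (i) exhibit an admissible period of size $\asymp 4Ce\,s\ln^4 N$ for which $\rho(p)\le \exp(-cm/(s\ln^4 N))$, and (ii) read off the $p=m$ case for the KZIHT comparison, and the claimed probability is inherited verbatim from Theorem~\ref{thm:convergence_KZPT}.

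For (i) I would substitute $q:=m/p$ (the number of thresholding events per epoch) and $\alpha := 4Cs\ln^4 N/m$; a short manipulation collapses the product above to the clean form $\rho = (\alpha q)^{q/2}$, i.e. $\ln\rho = \tfrac{q}{2}\ln(\alpha q)$. Admissibility $p>9Cs\ln^4 N$ forces $\alpha q < 4/9 < 1$, so $\ln\rho<0$; differentiating, $\tfrac{d}{dq}\ln\rho = \tfrac12\bigl(\ln(\alpha q)+1\bigr)$, which vanishes precisely at $\alpha q = 1/e$, that is $q^\star = \tfrac{1}{\alpha e}$ and $p^\star = m/q^\star = 4Ce\,s\ln^4 N$ --- the value named in the statement (note $4e>9$, so $p^\star$ is admissible whenever $p^\star\le m$). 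At $q^\star$ one gets $\ln\rho = -\tfrac{q^\star}{2} = -\tfrac{1}{2\alpha e} = -\tfrac{m}{8Ce\,s\ln^4 N}$, hence $\rho(p^\star) = \exp\!\bigl(-m/(8Ce\,s\ln^4 N)\bigr)$, and taking $c := 1/(8Ce)$ gives the stated KZPT rate. For (ii), taking $p=m$ (so $q=1$) yields $\rho(m) = 2\sqrt{Cs\ln^4 N/m}$, which is exactly the claimed KZIHT rate --- equivalently one may invoke Theorem~\ref{thm:KZIHT_convergence} directly with the smallest admissible RIP constant $\delta = \sqrt{Cs\ln^4 N/m}$ and $e=0$.

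The main obstacle I anticipate is the integer/divisibility bookkeeping: Theorem~\ref{thm:convergence_KZPT} requires $p\mid m$, so the continuous optimizer $p^\star$ must be replaced by the nearest divisor of $m$, and one must argue this perturbs $q$ by at most a bounded relative amount --- automatic in the regime $m \gg s\ln^4 N$ of interest, after possibly shrinking the universal constant $c$ --- so that, since $q\mapsto \tfrac{q}{2}\ln(\alpha q)$ is smooth and, near $q^\star$, bounded away from the endpoints of the admissible range, the substitution costs only a constant factor in the exponent. A secondary point is the narrow edge regime $9Cs\ln^4 N < m < 4Ce\,s\ln^4 N$, in which $p^\star$ would exceed $m$; there one falls back to a divisor of $m$ close to $m$, for which $\rho \le 2\sqrt{Cs\ln^4 N/m} < 1$ still dominates the claimed bound up to an adjustment of $c$, using $m > 9Cs\ln^4 N$. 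Everything else is routine calculus and inheriting the high-probability event of Theorem~\ref{thm:convergence_KZPT}.
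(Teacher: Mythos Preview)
Your proposal is correct and follows the same approach as the paper: specialize the ``moreover'' bound of Theorem~\ref{thm:convergence_KZPT} to $e=0$, read off the $p=m$ case for KZIHT, and optimize the rate over $p$. The paper's own proof is in fact much terser than yours --- it simply says to optimize the rate over $p>9Cs\ln^4 N$ ``and some algebraic manipulations give the rate'' --- so your substitution $q=m/p$, $\alpha=4Cs\ln^4 N/m$ reducing $\rho$ to $(\alpha q)^{q/2}$, the calculus yielding $p^\star=4Ce\,s\ln^4 N$, and your discussion of the divisibility rounding and the edge regime $9Cs\ln^4 N<m<4Ce\,s\ln^4 N$ all go well beyond what the paper actually writes down.
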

	 
	\begin{proof} [Proof of Corollary \ref{cor:KZPT_Corollary}] 
        First, note that from the second part of Theorem \ref{thm:convergence_KZPT}, the iterates of KZIHT ($p = m$ case) obey
        \begin{align*}
	\|x^{k+1}_1 - x\| 
	\le \left( 2 \sqrt{{C s \ln^4 N \over m}}      \right)^k \|x^k_1 -x\|_2.
        \end{align*} 
		 Maximizing the rate in the theorem 
		\[
			\left[ {\left(m \over p\right)}^{m \over 2p}  \left(  2 \sqrt{{C s \ln^4 N \over m}}   \right)^{m \over p} \right]
		\] over $p$ with $p > 9C  s \ln^4 N$ and some algebraic manipulations give the rate of KZPT with optimal period $p$.
	\end{proof}
	
	\begin{proof} [Proof of Theorem \ref{thm:convergence_KZPT}] 
	From the observation in Section \ref{subsection:BOS}, one can easily obtain
	
	\[
	x^{k+1}_j = T_s \left( x^k_1 + \left({1 \over \sqrt{p}} A^{(i)} \right)^T \left( {1 \over \sqrt{p}} A^{(i)} \right)(x - x^k_j) + {1 \over p} \left( A^{(i)} \right)^Te \right),
	\] for $j$ divisible by $p$. 
	
    We denote the the RIP constants of the submatrices ${1 \over \sqrt{p}}A^{(i)}$ by $\delta_{s,p}^{(i)}$ for $s$-sparse vectors. 
    The assumption $p \ge C \delta_{(p)}^{-2} s \ln^4 N$ and
    Lemma \ref{lem:BOS_lemma} combined with the union bound argument imply that the RIP constants $\delta_{3s,p}^{(i)}$ for ${1 \over \sqrt{p}}A^{(i)}$ satisfy $\delta_{3s,p}^{(i)} \le \delta_{(p)}$,
     for all integers $1 \le i \le m/p$ with probability at least $1 - c{m \over pN^3}$ for some universal constant $c > 0$. 
    
	Note that when $p = m$, we have the upper bound of the RIP constant for the sampling matrix ${1 \over \sqrt{m}} A$, which is $\delta_{3s,m} \le \delta_{(m)}$.

    From the convergence result of KZIHT in Theorem \ref{thm:KZIHT_convergence} and the above bound for the RIP constant $\delta_{3s,m}$ we have
    \begin{align*}
	&\|x^{k+1}_1 - x\| \le \left(2\delta_{(m)} \right)^k \|x^1_1 - x\| +  \cdot {2 \over 1 - 2 \delta_{(m)}} \cdot \sup_{\Omega \subset [N], |\Omega| \le 2s} \left \|P_{\Omega} \left( {1 \over m}  A^Te \right) \right \|_2.
    \end{align*} 

    On the other hand, because the thresholding operator is applied for $m/p$ times for each epoch in KZPT,  the iterates of KZPT satisfy
	\begin{align*}
	    &\|x^{k+1}_1 - x\| 
	    \le (2\delta_{(p)})^{m \over p} \|x^k_1 - x\| + {2 \over 1- 2\delta_{(p)}} \sup_{\substack{i \in [m/p] \\ \Omega \subset [N], |\Omega| \le 2s}} \left \|P_{\Omega} \left(  {1 \over p} \left( A^{(i)} \right)^Te\right) \right \|_2. 
	\end{align*}
	Thus, by induction on $k$, we have
    \begin{align*}
         &\|x^{k+1}_1 - x\| \\
	    &\le (2\delta_{(p)})^{{m \over p} \cdot k} \|x^1_1 - x\| + {2 \over (1- 2\delta_{(p)}) \left( 1-(2\delta_{(p)})^{{m \over p}} \right) } \sup_{\substack{i \in [m/p] \\ \Omega \subset [N], |\Omega| \le 2s}} \left \|P_{\Omega} \left(  {1 \over p} \left( A^{(i)} \right)^Te\right) \right \|_2.
    \end{align*}
    This shows the first part of the theorem. 
    
    To prove the second part, set 
    $\delta_{(p)} = \sqrt{C s \ln^4 N \over p} < {1 \over 3}$ for $p > 9C  s \ln^4 N$ and $m \%p = 0$. Then, by plugging $\delta_{(p)} = \sqrt{C s \ln^4 N \over p}$ to the conclusion of the first part of the theorem, we have
    \begin{align*}
         &\|x^{k+1}_1 - x\| \le  \left( 2 \sqrt{{ C s \ln^4 N \over p}} \right)^{{m \over p} \cdot k} \|x^1_1 - x\| + 18  \sup_{\substack{i \in [m/p] \\ \Omega \subset [N], |\Omega| \le 2s}} \left \|P_{\Omega} \left(  {1 \over p} \left( A^{(i)} \right)^Te\right) \right \|_2.
    \end{align*}

    Since the period $p$ is a hyperparameter for KZPT as long as $ p > 9C  s \ln^4 N$, the error bound for KZPT iterates can be rewritten as 
    \small
    \begin{align*}
	& \|x^{k+1}_1 - x\|  \le \min_{\substack{m \% p = 0 \\ p > 9C  s \ln^4 N}} \left[\left( 2 \sqrt{{ C s \ln^4 N \over p}} \right)^{{m \over p} \cdot k} \|x^1_1 - x\| + 18  \sup_{\substack{i \in [m/p] \\ \Omega \subset [N], |\Omega| \le 2s}} \left \|P_{\Omega} \left(  {1 \over p} \left( A^{(i)} \right)^Te\right) \right \|_2 \right].    
    \end{align*}
    \normalsize
    After noticing $ 2 \sqrt{{C s \ln^4 N \over m}} = \sqrt{p \over m} \times 2 \sqrt{{ C s \ln^4 N \over p}}$, the KZPT error bound can be also given by
    \begin{align*}
	&\|x^{k+1}_1 - x\| \le \min_{\substack{m \% p = 0 \\ p > 9C  s \ln^4 N}}  \left[ 
         \left({\left(m \over p\right)}^{m \over 2p}  \left(  2 \sqrt{{C s \ln^4 N \over m}}   \right)^{m \over p} \right)^k \|x^1_1 - x\| + 18 \sigma_p    \right],
    \end{align*}
    where $\sigma_p = \sup\limits_{\substack{i \in [m/p] \\ \Omega \subset [N], |\Omega| \le 2s}} \left \|P_{\Omega} \left(  {1 \over p} \left( A^{(i)} \right)^Te\right) \right \|_2$. This proves the second part of the theorem. 
    \end{proof}
	
	\section{Numerical Experiments}
	\label{section:Numerical Experiments}
	In this section, we present numerical experiments to confirm our theory and demonstrate the effectiveness of KZPT and KZIHT. Throughout the experiments, the row selection rule for KZPT/KZIHT is the random reshuffling by default unless stated otherwise. The step size for KZIHT is set as $\gamma = N/m$ throughout the experiments as specified in our theorems, unless stated otherwise. 
	
	For each trial in the experiments, we create an $s$-sparse vector  $x^*$ in $\R^N$   by first selecting $s$ components  uniformly at random among $[N]$, assigning $s$ values from the i.i.d. standard normal distribution, and zeroing out the rest of the components. The performance of the methods in each test is evaluated by the relative error, ${\|x - x^*\| \over \|x^*\|}$ either in the number of epochs (the number of outer iterations) or wall-clock time. We run $30$ trials for each test and the relative error averaged over $30$ trials is recorded in the plots. The code used for our numerical experiments is available at \cite{code_KZIHT}. 
	
	\subsection{KZ and KZIHT}
	We create a $1024 \times 512$ Bernoulli random matrix and randomly generate a unit vector $x^*$ with various sparsity levels $5, 10, 15, 20$. In each test, we run the original KZ with a random shuffling rule and record its relative error versus the number of KZIHT iterations. We compare the error decay curves of the original randomized KZ and KZIHT for sparsity level $s$ in the number of epochs and wall-clock time.  Note that KZIHT outperforms KZ for the overdetermined case as shown in Figure \ref{fig:Commparison_KZ_KZIHT}.  As for the underdetermined setting, KZ doesn't converge to a sparse solution ($N > m$) in general \cite{ma2015convergence}, so we include the tests only for the overcomplete setting ($m \ge N$).
	
	\begin{figure} 
		\centering
		\begin{minipage}{0.45\textwidth}
			\centering
			\includegraphics[width=1.00 \textwidth]{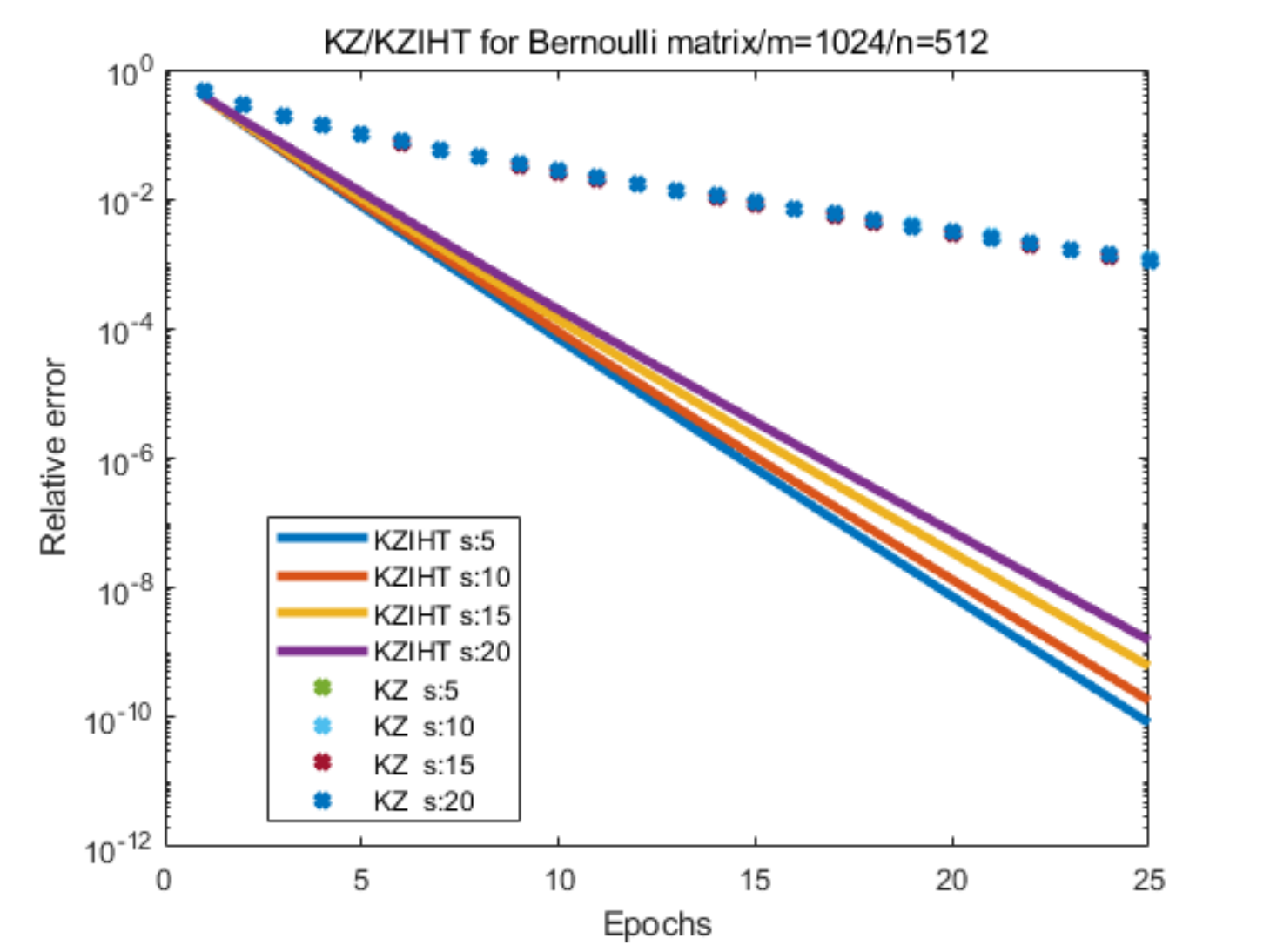}
		\end{minipage}\hfill
		\begin{minipage}{0.45\textwidth}
			\centering
			\includegraphics[width=1.00 \textwidth]{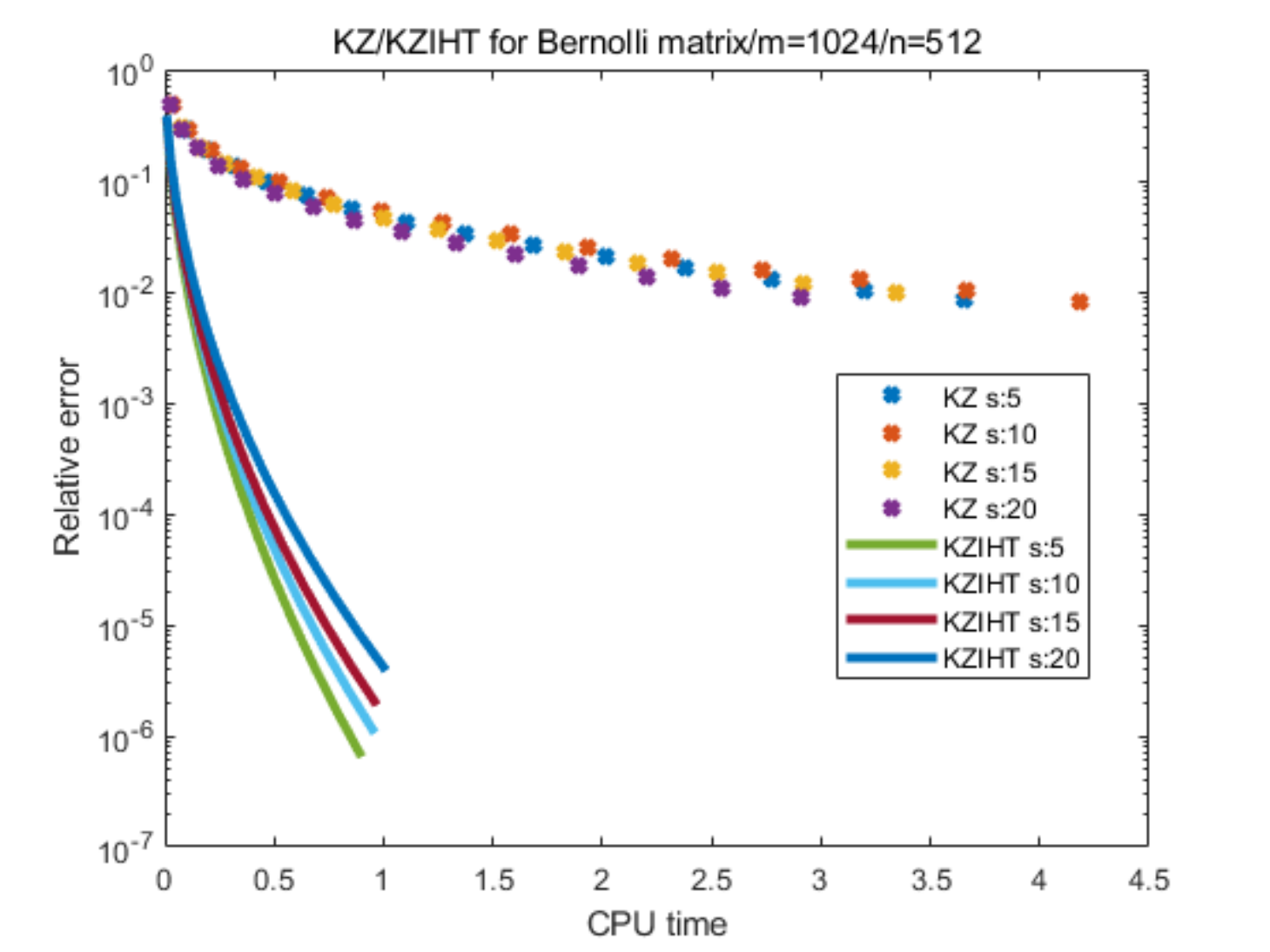}
		\end{minipage}
		\caption[Relative error of KZ/KZIHT] {Comparison between the relative error of KZ with random shuffling and KZIHT in epochs and wall-clock time for various sparsity levels. KZIHT outperforms KZ in both criteria. 
		}
		\label{fig:Commparison_KZ_KZIHT}
	\end{figure}
	
	\subsection{IHT and KZIHT}
	In this experiment, we compare IHT and KZIHT. The plots in Figure \ref{fig:Commparison_IHT_KZIHT_epoch}  indicate that KZIHT can handle higher sparsity levels than IHT for Gaussian and Bernoulli sensing matrices, where the number of measurements is $m=800$, $n=1024$, and $s$ is the sparsity level. These two plots show that IHT converges only when $s=50$ at about the same rate as KZIHT, and for higher sparsity levels $100$, $150$, $200$ IHT diverges whereas KZIHT still converges linearly. 
	
	In order to compare IHT and KZIHT in more extensive settings, we include the recovery phase transition plots in Figure \ref{fig:PhaseTransition_IHT_KZIHT_Bernoulli}. The success probabilities of the signal recovery of IHT and KZIHT for the Bernoulli sensing matrix are recorded in the plots.
	The horizontal axes of the plots correspond to the number of nonzero components of the signal (sparsity level $s$) and the vertical axes correspond to the number of measurements $m$. The colormap represents the recovery success probability over $30$ trials in the experiment. 
	 Figure \ref{fig:PhaseTransition_IHT_KZIHT_Bernoulli} shows that KZIHT outperforms IHT for almost all the settings in the experiment. We obtain similar plots for the Gaussian case, so they are not included here. 
	
	\begin{figure} 
		\centering
		\begin{minipage}{0.45\textwidth}
			\centering
			\includegraphics[width=1.00 \textwidth]{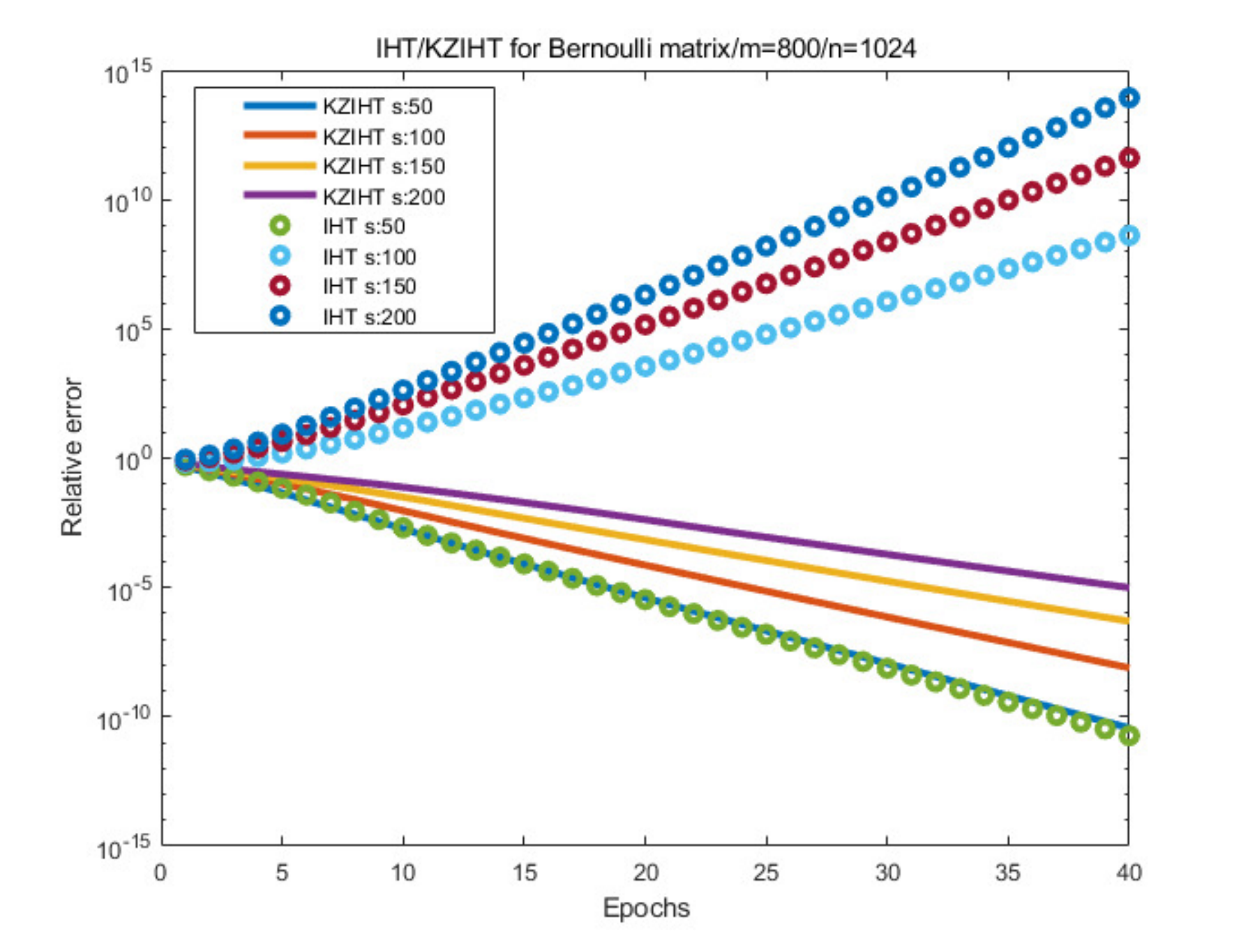}
		\end{minipage}\hfill
		\begin{minipage}{0.45\textwidth}
			\centering
			\includegraphics[width=1.00 \textwidth]{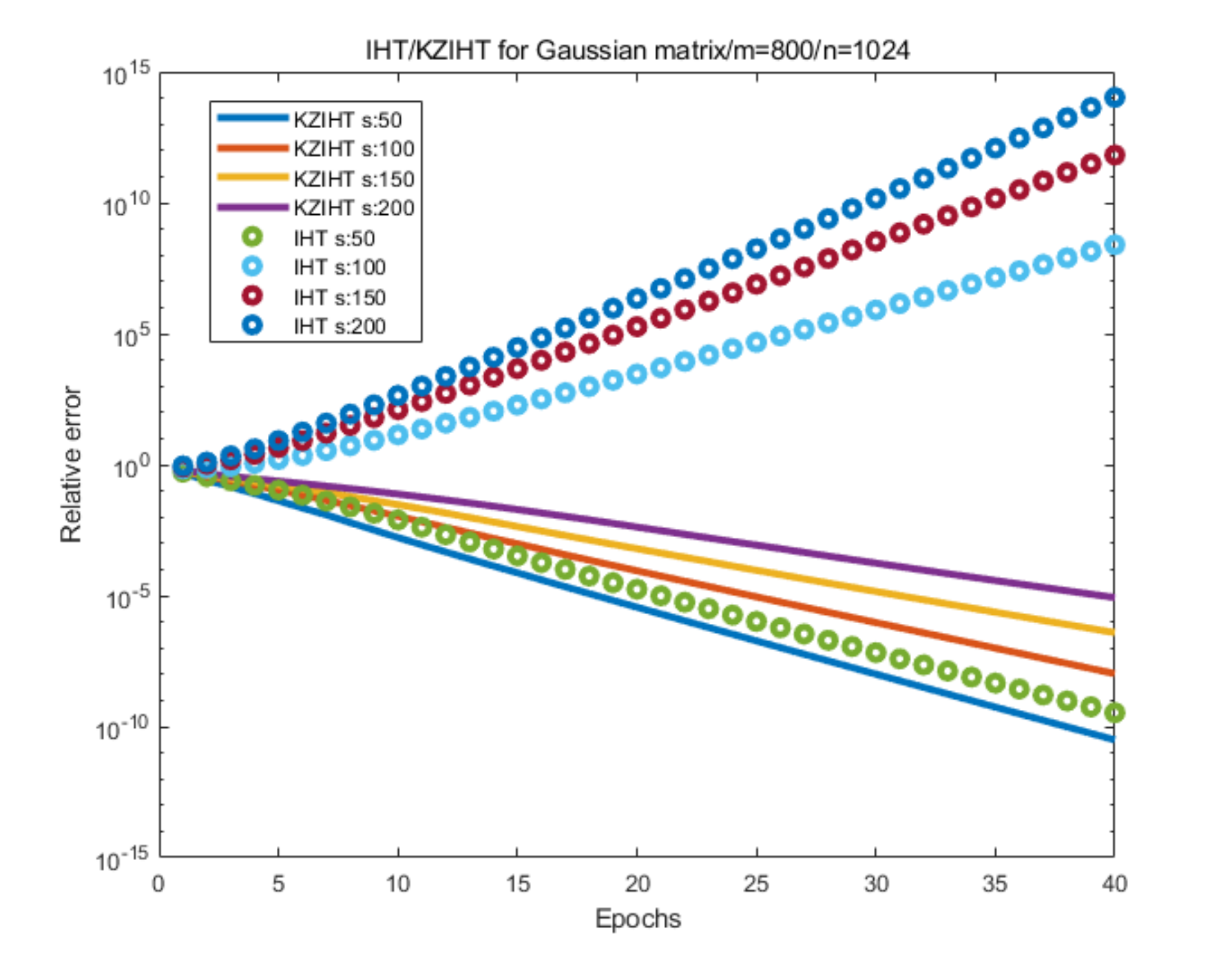}
		\end{minipage}
		\caption[Relative error of IHT/KZIHT] {Comparison between the relative error of IHT and KZIHT in epochs for various sparsity levels in  Bernoulli and Gaussian sensing matrix cases.
		}
		\label{fig:Commparison_IHT_KZIHT_epoch}
	\end{figure}

	\begin{figure}
		\centering
		\begin{minipage}{0.45\textwidth}
			\centering
			\includegraphics[width=1.00 \textwidth]{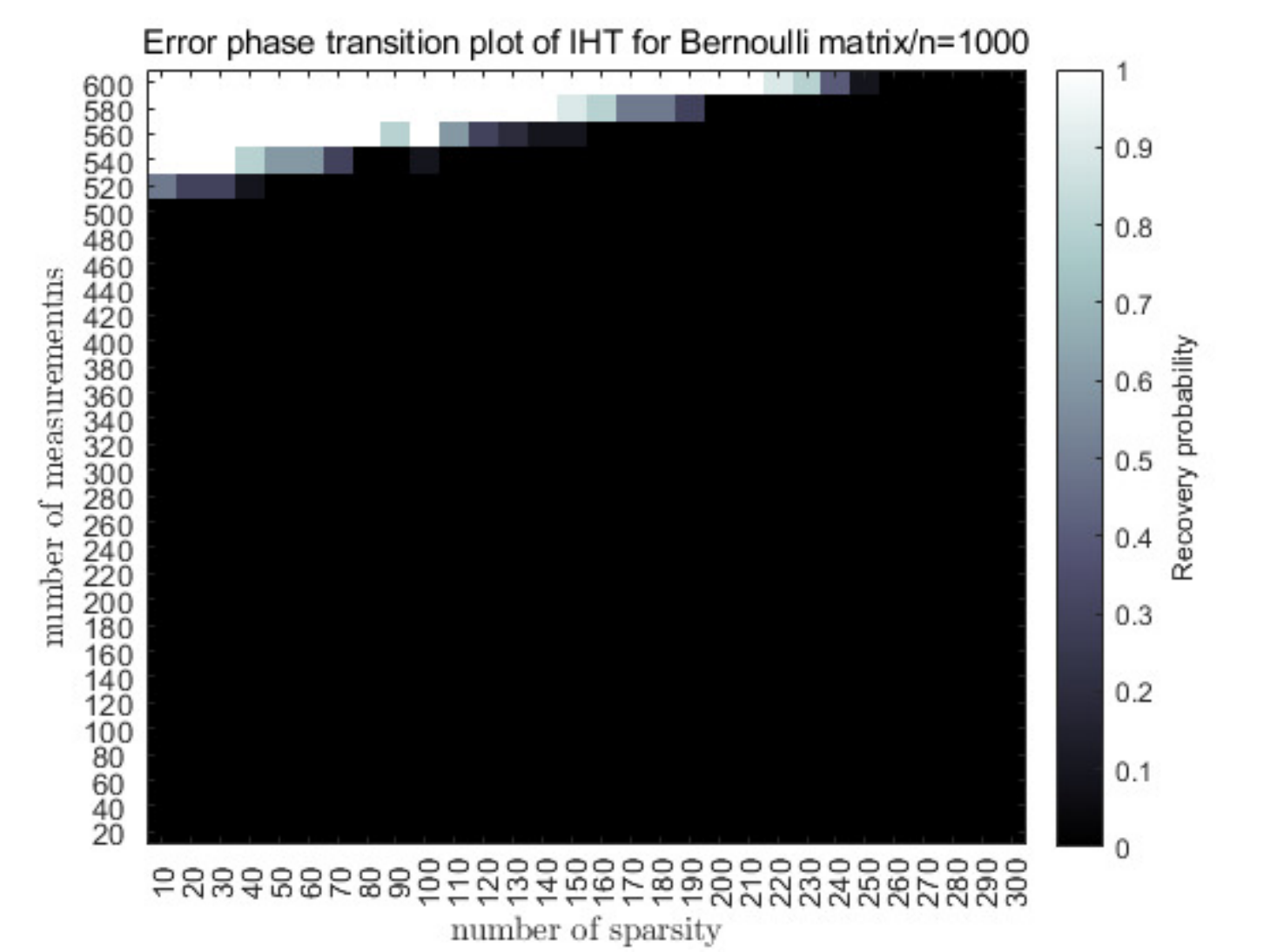}
		\end{minipage}\hfill
		\begin{minipage}{0.45\textwidth}
			\centering
			\includegraphics[width=1.00 \textwidth]{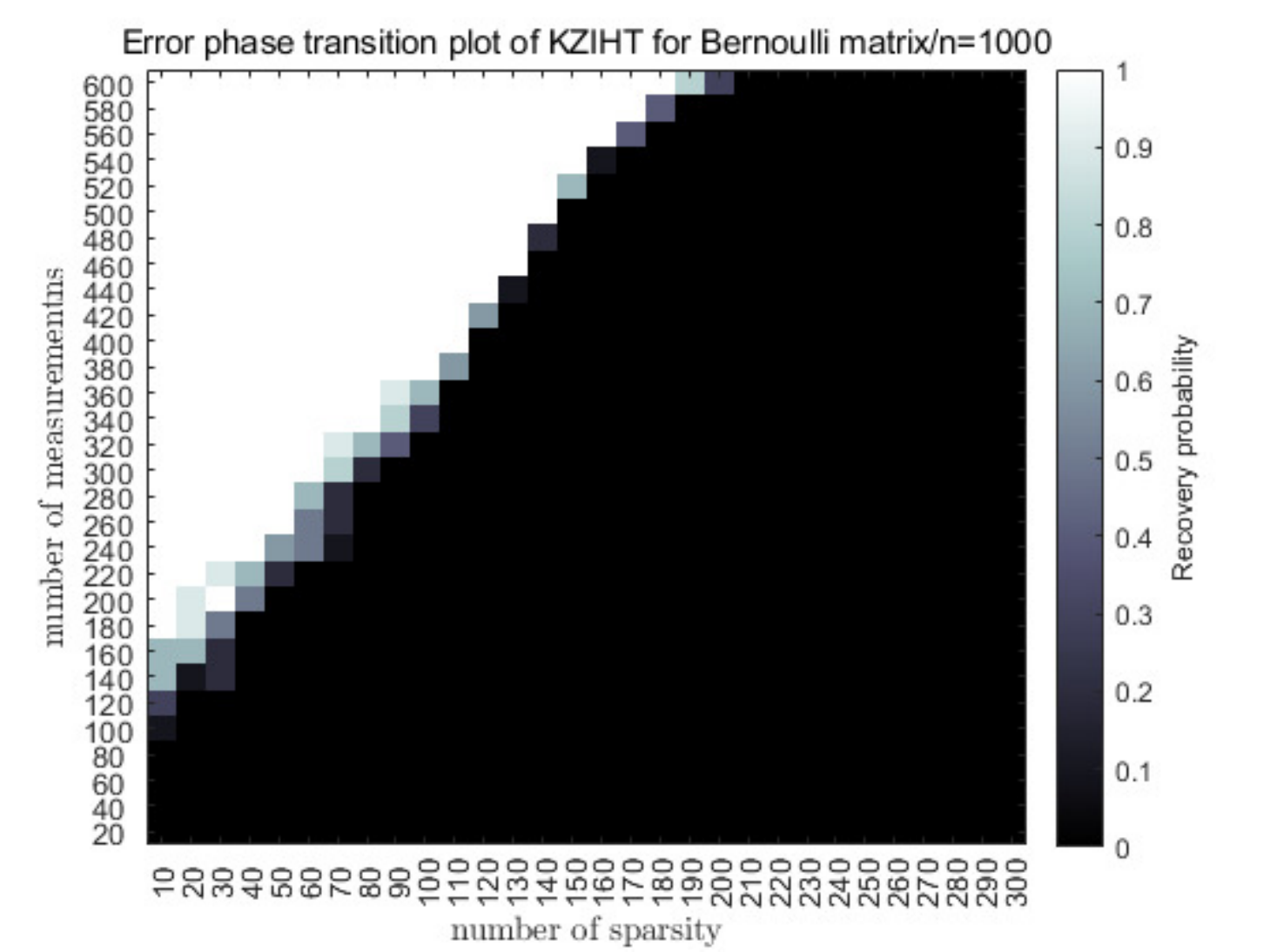}
		\end{minipage}
		\caption[Phase Transition of IHT/KZIHT for Bernoulli] {Recovery phase transition plots of IHT and KZIHT for Bernoulli sensing matrix. We record the success probabilities of recovery of sparse signals in $\R^{1000}$ with error below $10^{-1}$ for different sparsity levels and the number of measurements. KZIHT outperforms IHT for almost all the settings in the experiment.
		}
		\label{fig:PhaseTransition_IHT_KZIHT_Bernoulli}
	\end{figure}
	
	As for the randomly subsampled Hardmard matrix case, Figure \ref{fig:Commparison_IHT_KZIHT} illustrates a performance comparison of IHT and KZIHT. Here the number of measurements is $m=256$, the signal dimension is $n=1024$, and we vary the sparsity level $s$ over the set $\{5, 10, 15, 20\}$. The plots suggest that their performances are comparable in terms of the error decay rate in epoch, but IHT converges slightly faster than KZIHT in wall-clock time. 
	
		\begin{figure}
		\centering
		\begin{minipage}{0.45\textwidth}
			\centering
			\includegraphics[width=1.00 \textwidth]{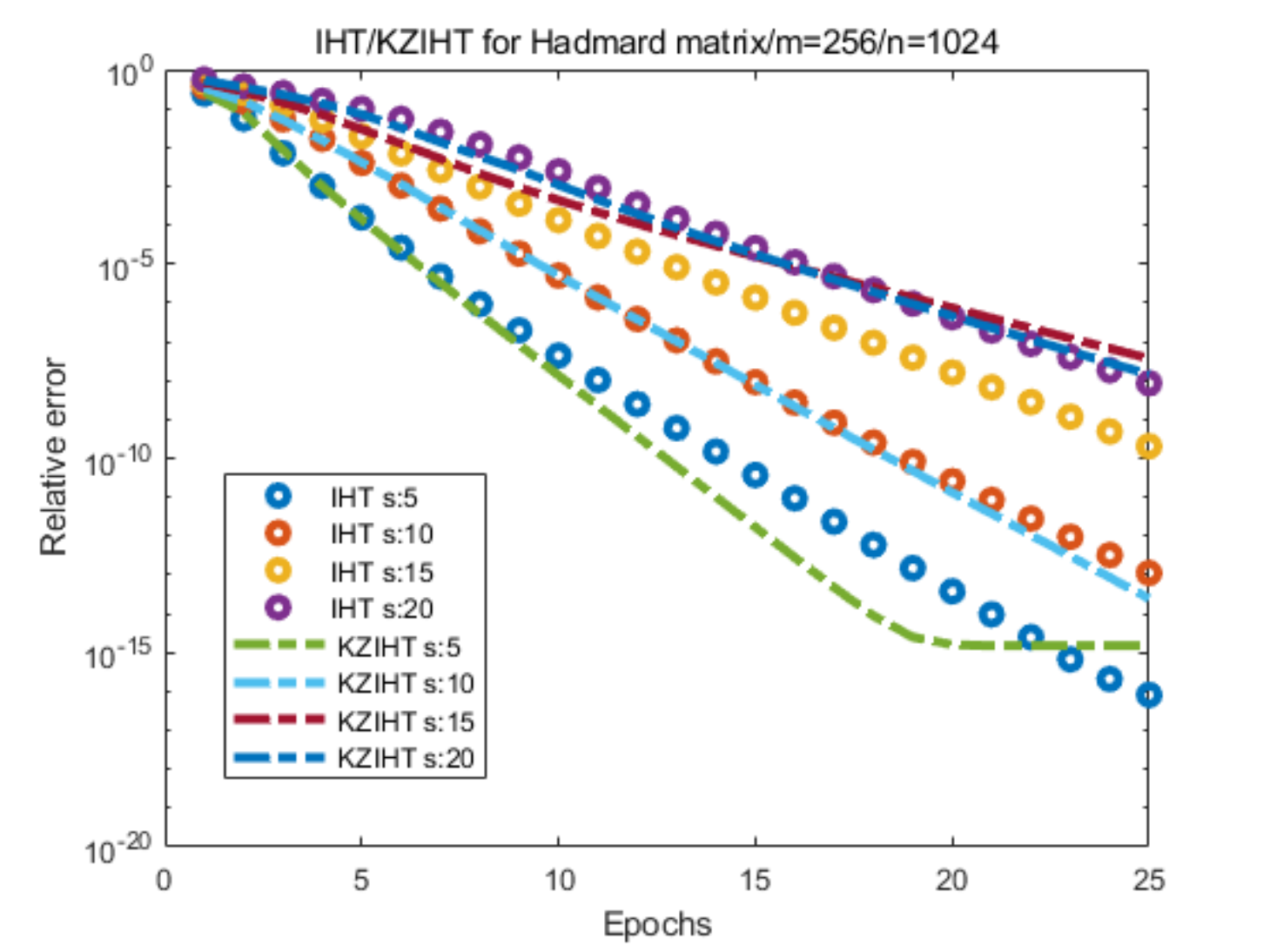}
		\end{minipage}\hfill
		\begin{minipage}{0.45\textwidth}
			\centering
			\includegraphics[width=1.00 \textwidth]{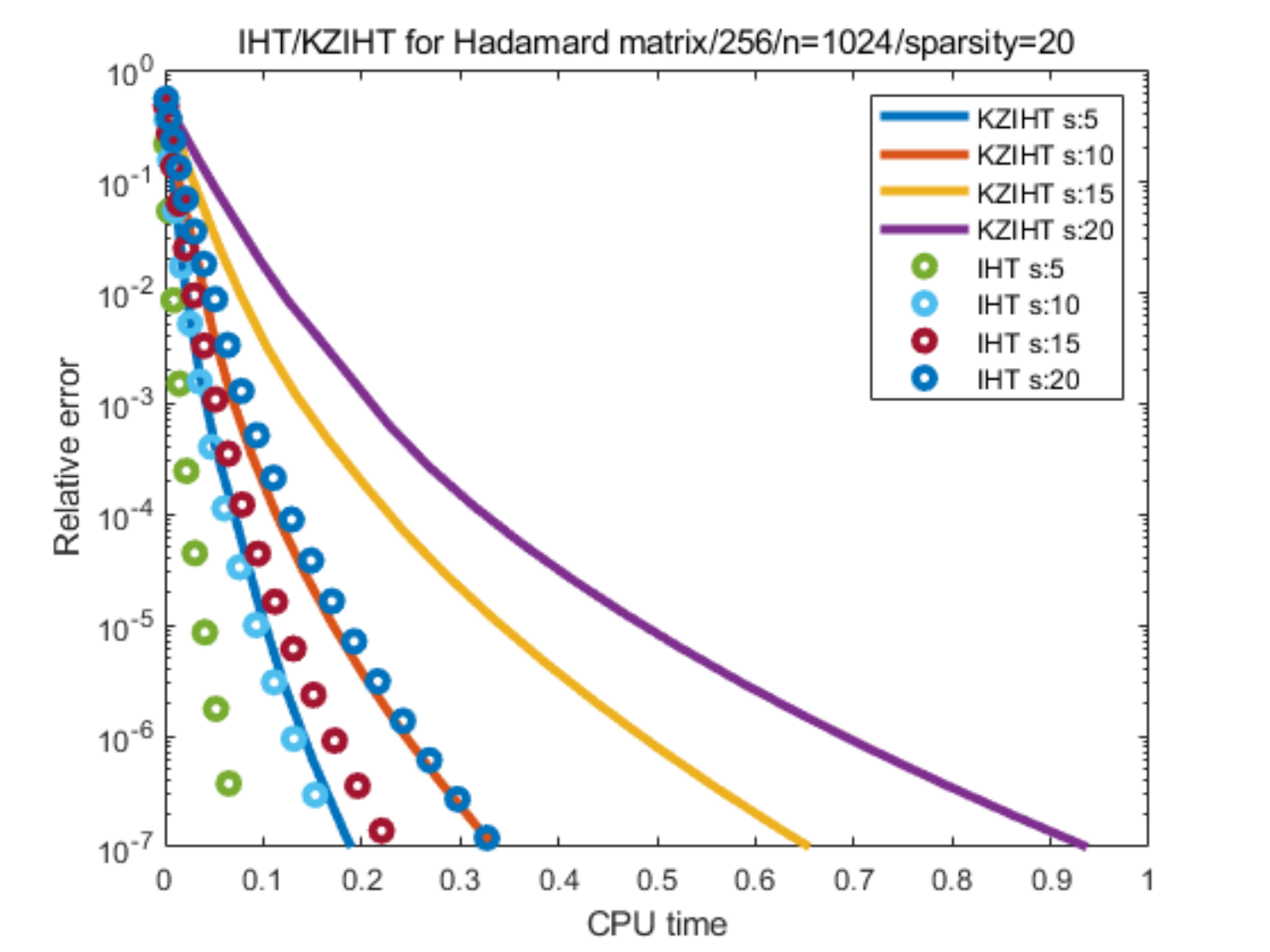}
		\end{minipage}
		\caption[Relative error of IHT/KZIHT] {Comparison between the relative error of IHT and KZIHT in epochs and wall-clock time for various sparsity levels in Hadamard sensing matrix case. KZIHT and IHT exhibit similar convergence rates in the number of outer iterations (epochs). IHT converges faster than KZIHT in the wall-clock time. 
		}
		\label{fig:Commparison_IHT_KZIHT}
	\end{figure}
	
	Figure \ref{fig:PhaseTransition_IHT_KZIHT_Hadamard} shows the  recovery phase transition plots of IHT and KZIHT for the Hadamard matrix case. This is consistent with our theory (Theorem \ref{thm:KZIHT_convergence} and its proof) for the randomly subsampled BOS case, indicating that IHT and KZIHT should perform the same. 
	
	\begin{figure} 
		\centering
		\begin{minipage}{0.45\textwidth}
			\centering
			\includegraphics[width=1.00 \textwidth]{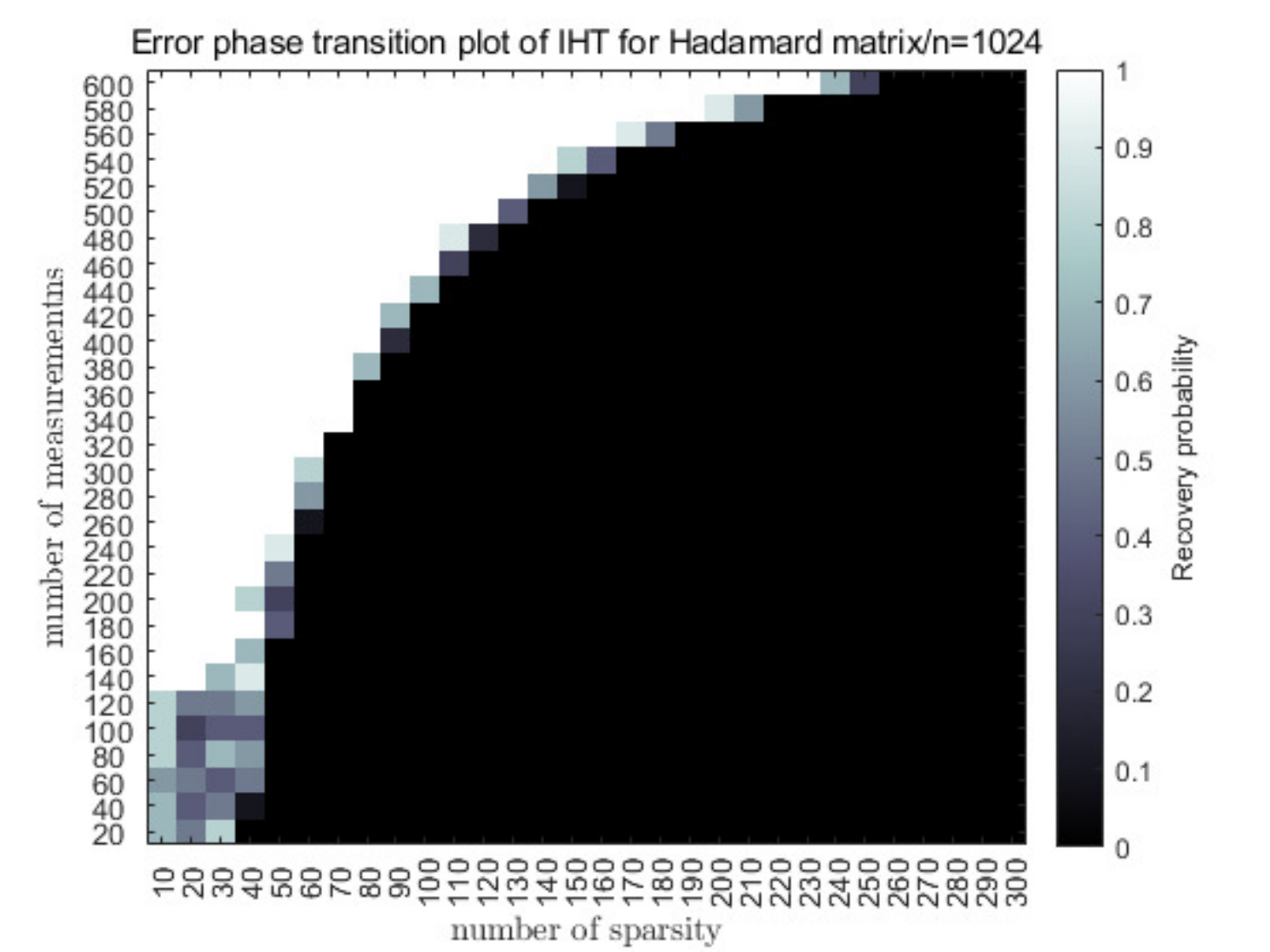}
		\end{minipage}\hfill
		\begin{minipage}{0.45\textwidth}
			\centering
			\includegraphics[width=1.00 \textwidth]{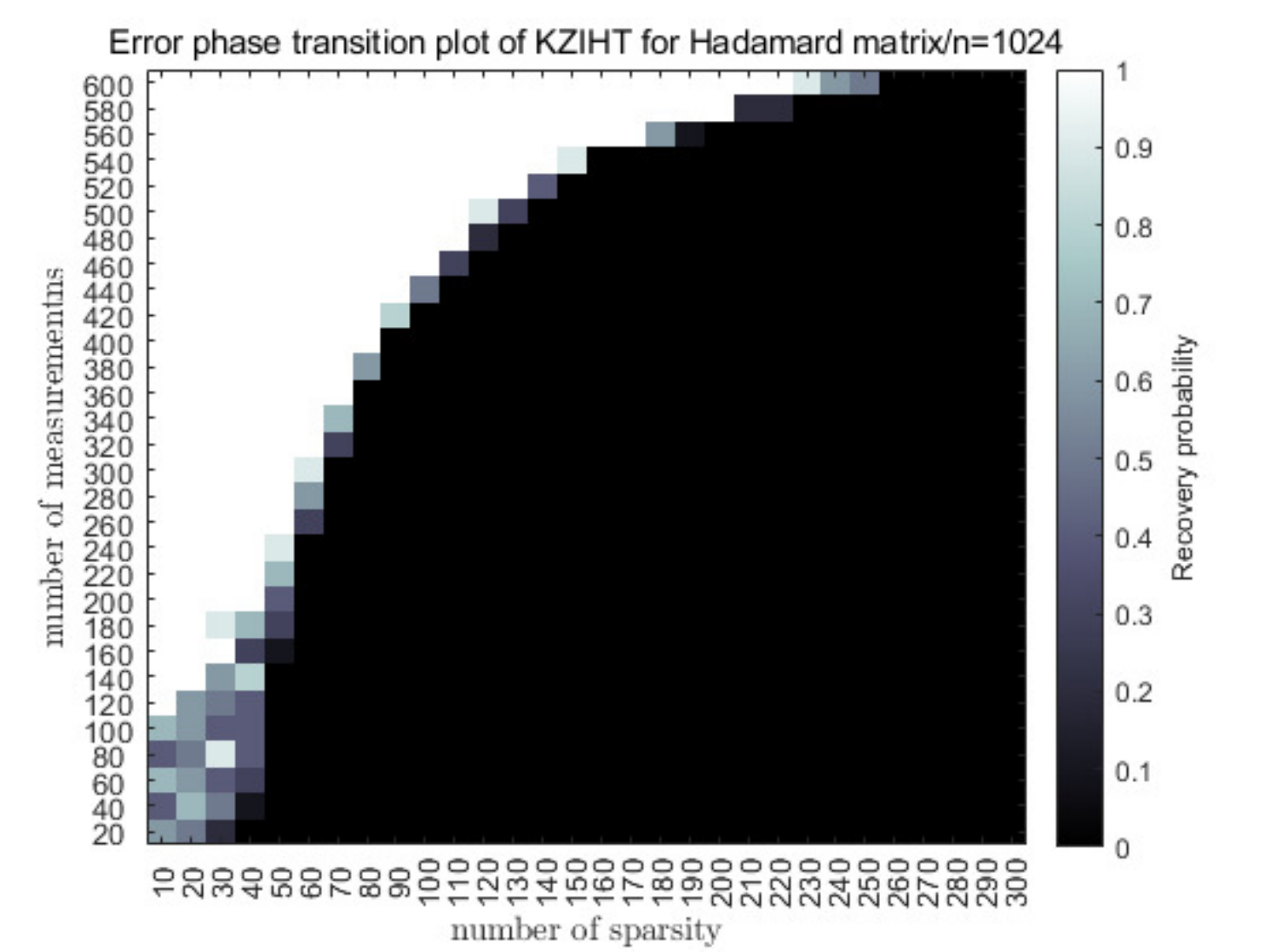}
		\end{minipage}
		\caption[Phase Transition of IHT/KZIHT for Hadamard] {Recovery phase transition plots of IHT and KZIHT for Hadamard sensing matrix. We record the success probabilities of recovery of sparse signals in $\R^{1000}$ with error below $10^{-1}$ for different sparsity levels and the number of measurements. The phase transition plots of IHT and KZIHT performs are almost identical.
		}
		\label{fig:PhaseTransition_IHT_KZIHT_Hadamard}
	\end{figure}
	
	Our numerical results show that KZIHT has practical advantages over IHT, especially in high sparsity regimes for Bernoulli/Gaussian sensing matrices, which are the sub-Gaussian matrices satisfying the assumptions in Theorem \ref{thm:main_theorem2}.  In the following sections, we will numerically demonstrate that KZPT with a nontrivial period $p$ improves the performance of KZIHT including the Hadamard sensing matrix case as well.
 
	\subsection{IHT and KZPT}
	We use $256 \times 1024$ randomly subsampled Hardmard as the sampling matrix, so $N = 1024$ and $m = 256$. 
	Figure \ref{fig:Commparison_IHT_KZPT} shows that KZPT with $\gamma = N/m$, $\lambda = 1$, period $80$ converges for all five different sparsity levels, whereas IHT doesn't converge at all. This may be due to the fact that the convergence rate of IHT is the same as KZIHT for randomly subsampled BOS and Theorem \ref{thm:convergence_KZPT} implies that KZPT with proper choice of period converges faster than KZIHT. 
	
	\begin{figure} 
    	\centering
    	\begin{minipage}{0.45\textwidth}
    		\centering
    		\includegraphics[width=1.00 \textwidth]{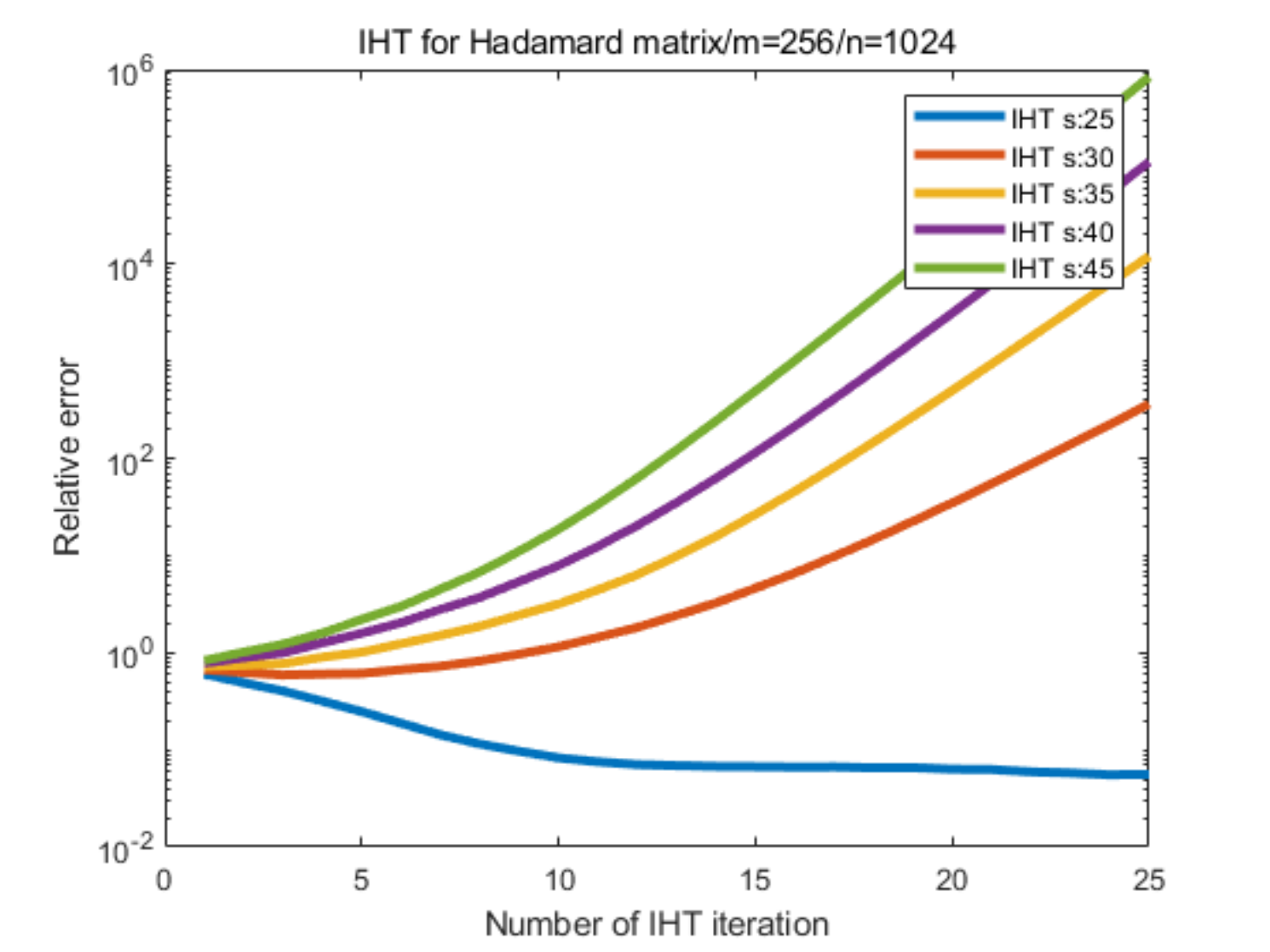}
    	\end{minipage}\hfill
    	\begin{minipage}{0.45\textwidth}
    		\centering
    		\includegraphics[width=1.00 \textwidth]{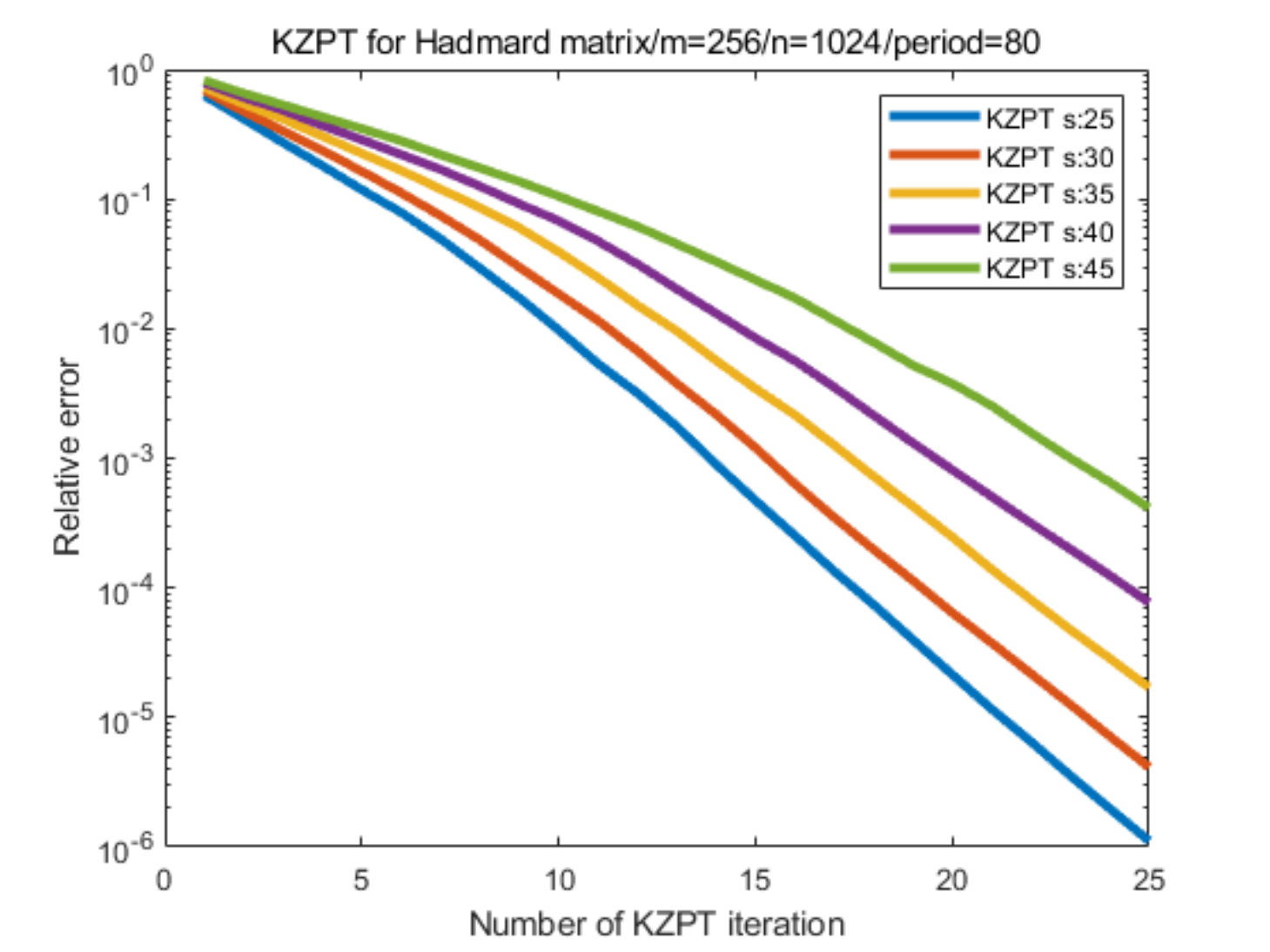}
    	\end{minipage}
    	\caption[Relative error of IHT/KZPT] {Comparison between the relative error of IHT and KZPT in the number of iterations for various sparsity levels. IHT does not converge (diverges except for sparsity level $25$) whereas KZPT with period $80$ still converges linearly. 
    	}
    	\label{fig:Commparison_IHT_KZPT}
    \end{figure}
	
	\subsection{KZIHT and KZPT}
	In this set of experiments, we compare KZIHT and KZPT. The first experiment is conducted with $N = 512$ and $m = 256$. The left panel of Figure \ref{fig:Commparison_KZIHT_KZPT} illustrates that KZPT with period $p = 128$, $\gamma = N/p$, $\lambda = 1$ converges faster than IHT and KZIHT in the number of epochs for randomly subsampled Hardmard sampling matrix.  Similar plots are obtained for a $256 \times 1024$ randomly subsampled Hardmard sampling matrix ($N = 1024$ and $m = 256$), which is displayed in the right panel of Figure \ref{fig:Commparison_KZIHT_KZPT}. Since the sparsity levels for the numerical experiments in Figure \ref{fig:Commparison_KZIHT_KZPT}  are relatively small compared to $m$, which makes the RIP constant bounds small as well, the discussion after Theorem \ref{thm:convergence_KZPT} explains why KZPT outperforms KZIHT. Note that in these experiments, we did not optimize the period $p$, so the best performance of KZPT could be even better.

	\begin{figure} 
		\centering
		\begin{minipage}{0.45\textwidth}
			\centering
			\includegraphics[width=1.00 \textwidth]{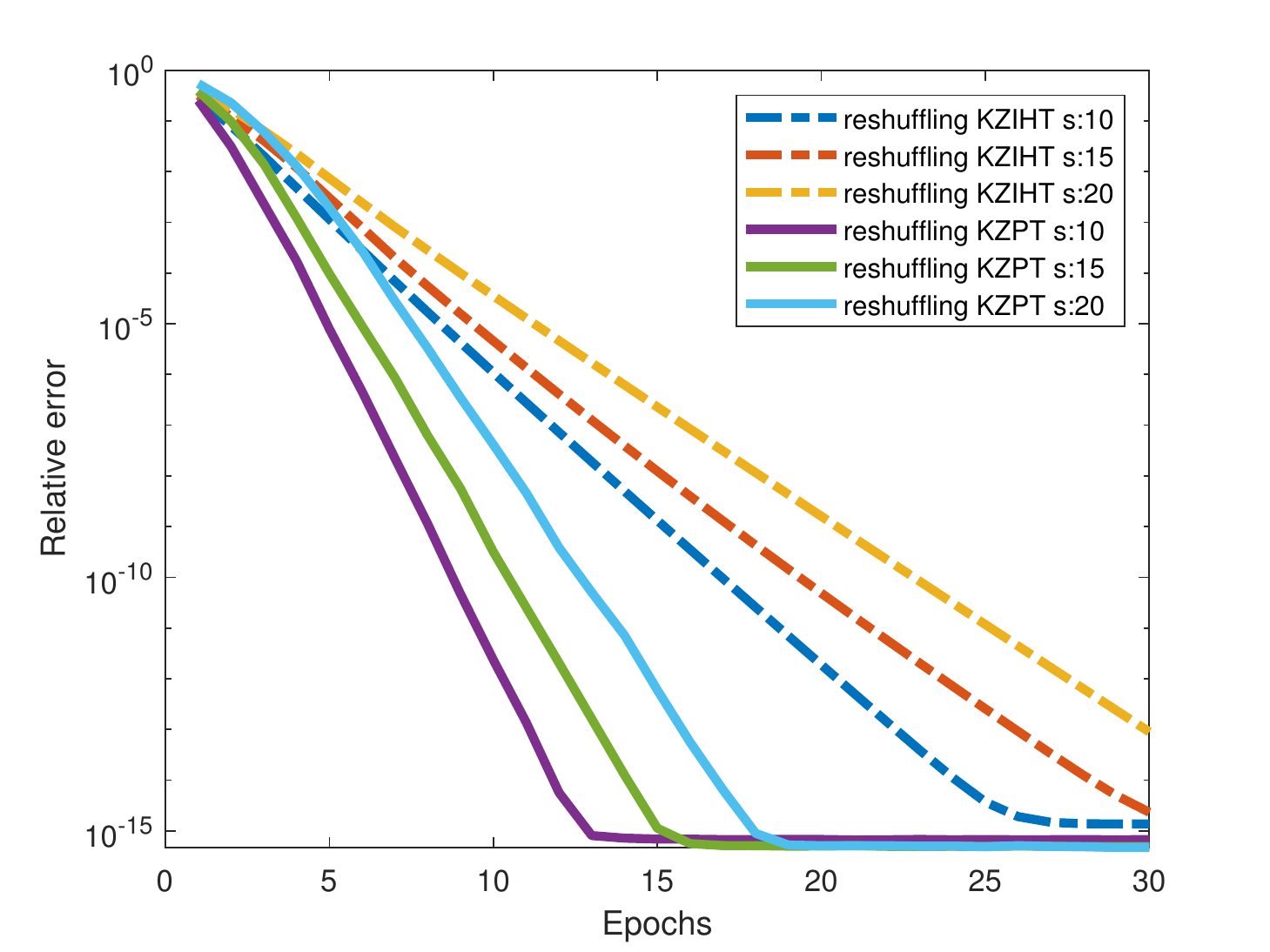}
		\end{minipage}\hfill
		\begin{minipage}{0.45\textwidth}
			\centering
			\includegraphics[width=1.00 \textwidth]{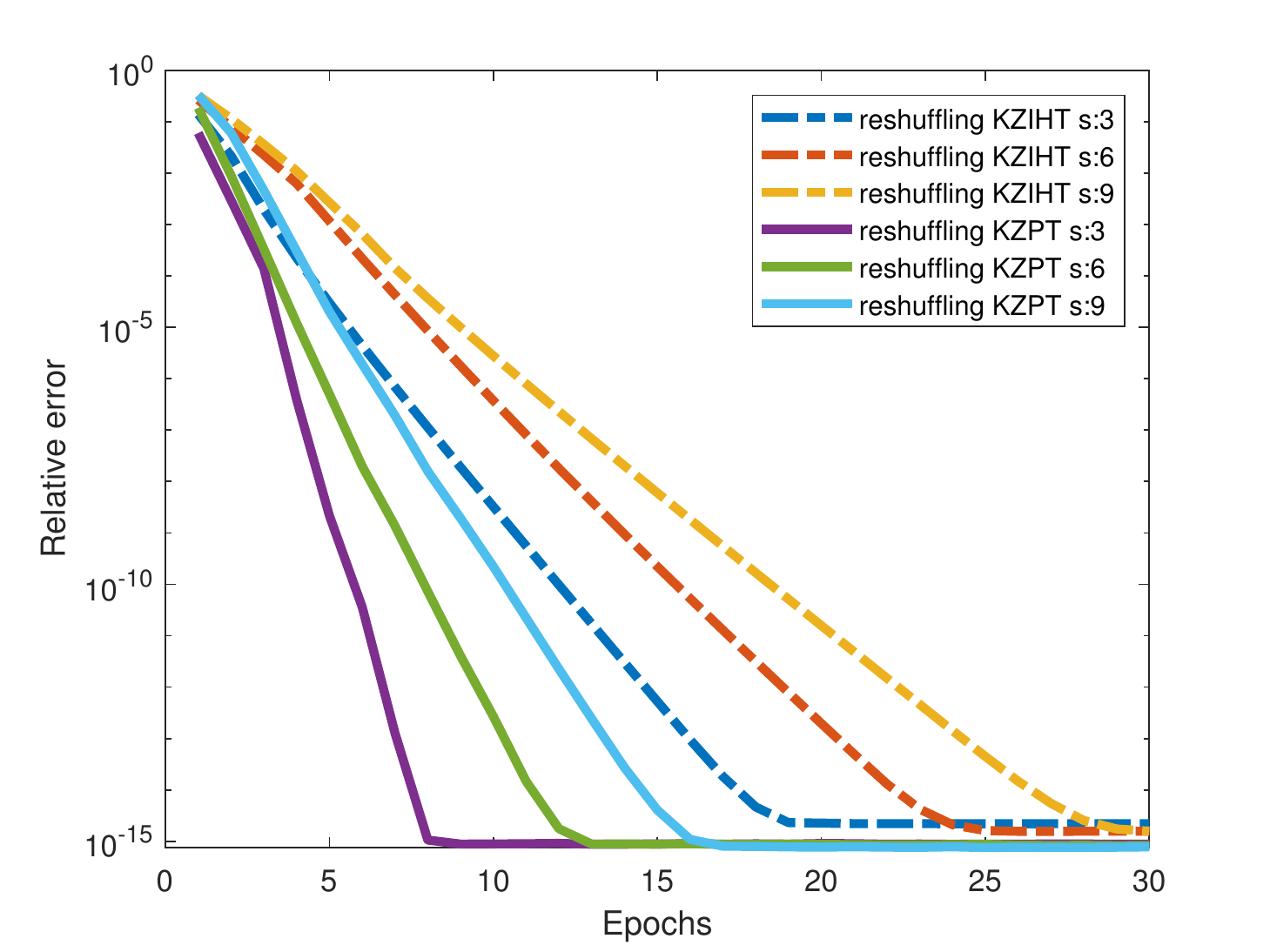}
		\end{minipage}
		\caption[Relative error of KZIHT/KZPT] {Comparison between the relative error of KZIHT and KZPT with period 128 in the number of iterations for various sparsity levels.  KZPT converges faster than KZIHT. 
		}
	\label{fig:Commparison_KZIHT_KZPT}
	\end{figure}

	Recall that in Figure \ref{fig:Commparison_IHT_KZPT}, we have empirically observed that KZPT converges for the Hadamard matrix case while IHT doesn't converge for any sparsity levels in the settings. As KZIHT can only handle the same sparsity levels as IHT for the Hadamard case, we would have obtained the same plots in the left panel of Figure \ref{fig:Commparison_IHT_KZPT} for KZIHT, so this also implies that KZPT performs better than KZIHT. Thus, our numerical experiments evidence the benefits of using KZPT with proper choice of the period $p$ over IHT or KZIHT.  
	\subsection{Reshuffling KZIHT and 
Randomized KZIHT}
	In this section, we compare the reshuffling KZIHT and randomized KZIHT, the original version of KZIHT proposed in \cite{zhang2015iterative}. The row selection rule for randomized KZIHT is sampling with replacement. Recall that we propose to use and analyze reshuffling schemes for KZIHT/KZPT throughout the paper, which is based on sampling without replacement. 
    In the experiment, we use a randomly subsampled Hadamard matrix. The number of measurements is $m=256$, the signal dimension is $n=1024$, and the sparsity levels are $5$ and $10$. In the experiments for Figure \ref{fig:Commparison_KZIHT_sampling}, the step size $\gamma = 1$ (original Kaczmarz setting) is used for the left plots and $\gamma = n/m = 4$ is used for the right. 
    As we observe in Figure \ref{fig:Commparison_KZIHT_sampling}, reshuffling KZIHT converges faster than randomized KZIHT for both settings in our experiments. The right plot shows that the randomized KZIHT actually diverges for the larger step size $\gamma = 4$. This indicates that the reshuffling scheme is superior to KZIHT with sampling with replacement, which is consistent with recent findings about the advantages of reshuffling methods over the traditional SGD for various scenarios \cite{ahn2020sgd, mishchenko2020random, sun2020optimization, gurbuzbalaban2021random}. 
	
	\begin{figure} 
		\centering
		\begin{minipage}{0.47\textwidth}
			\centering
			\includegraphics[width=1 \textwidth]{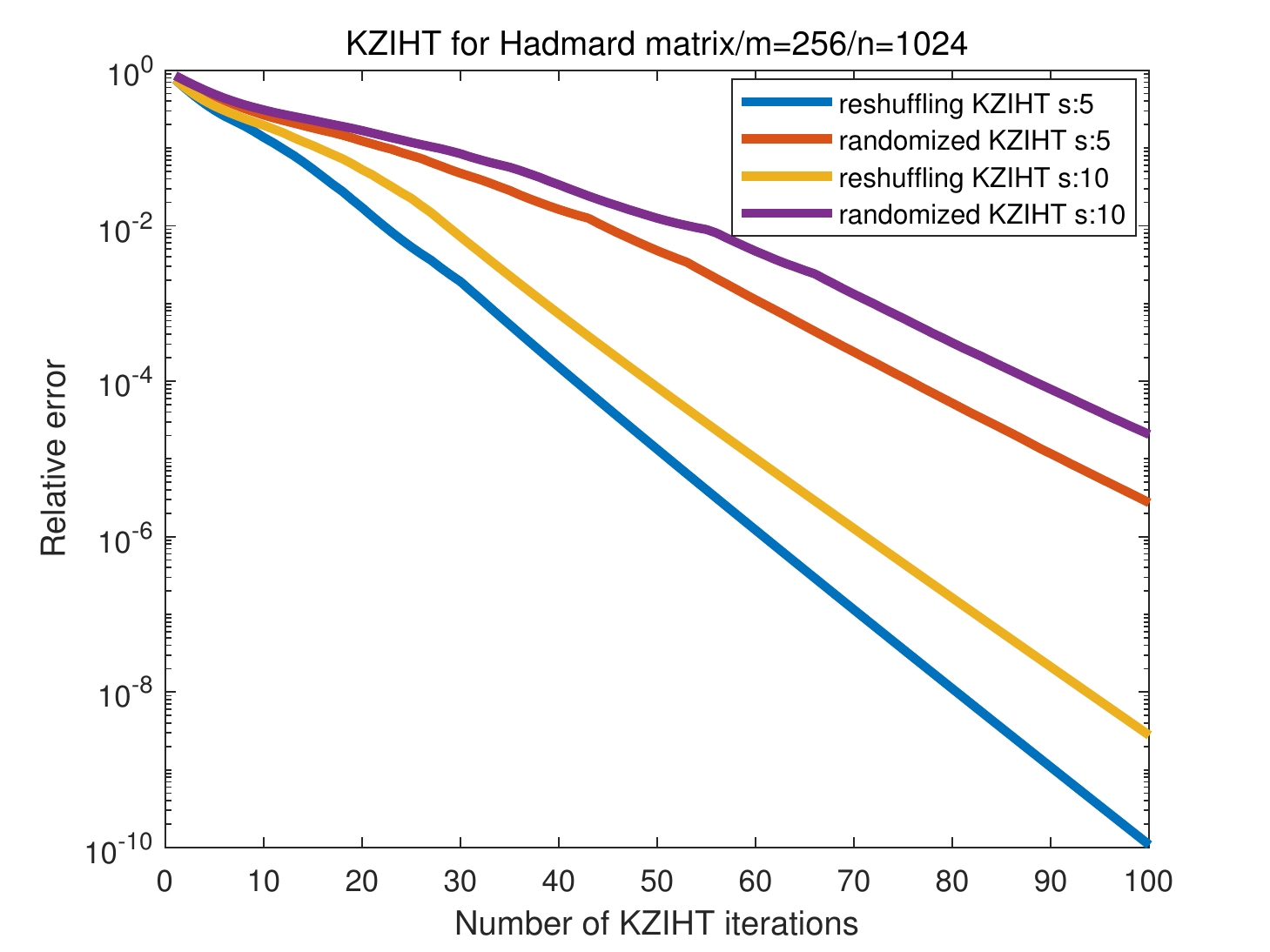}
		\end{minipage}
		\begin{minipage}{0.47\textwidth}
			\centering
			\includegraphics[width=1 \textwidth]{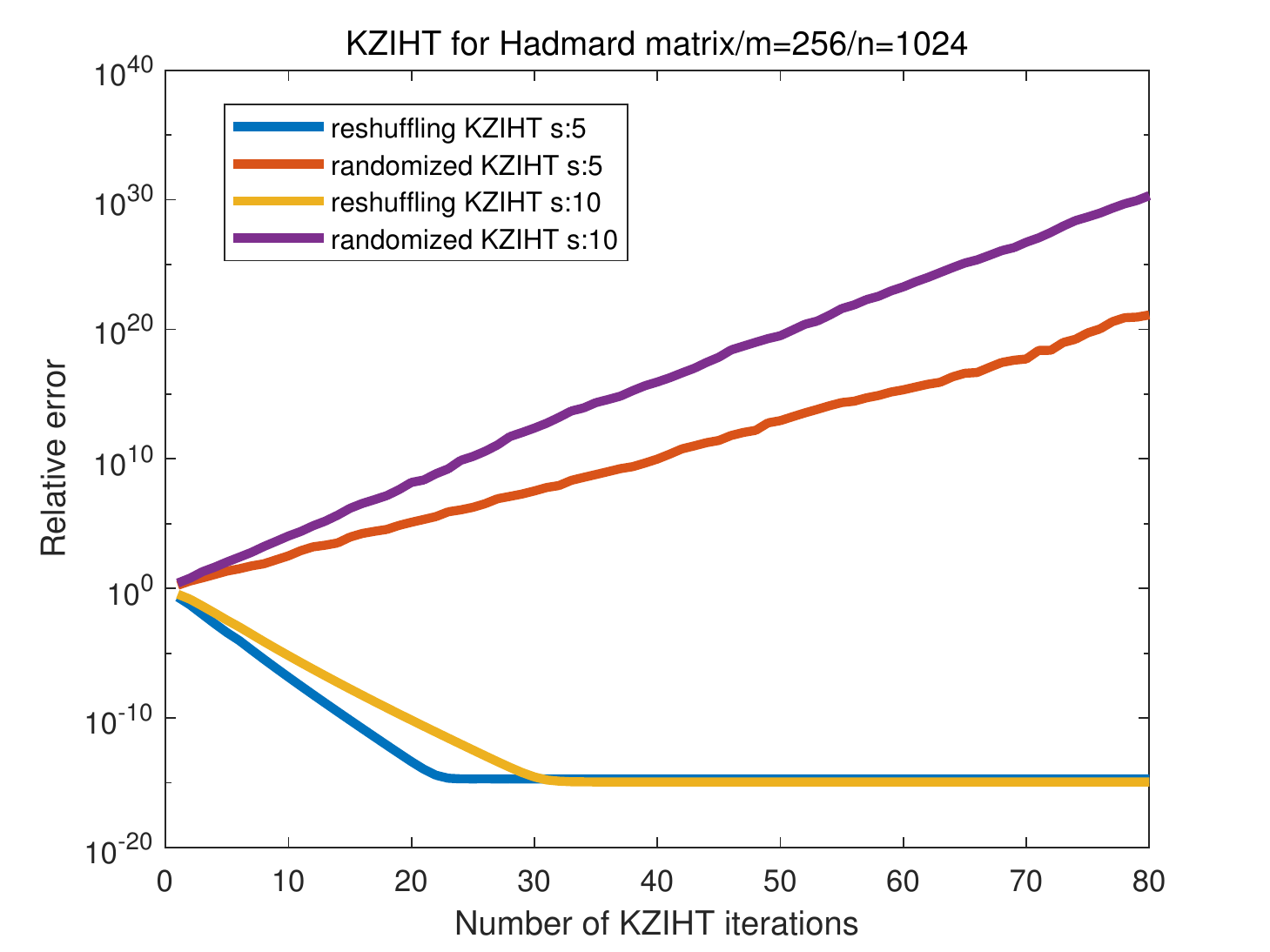}
		\end{minipage}
		\caption[Relative error of KZIHT/KZPT] {Comparison between the relative error curves of reshuffling KZIHT and randomized KZIHT in the number of iterations for various sparsity levels.  
		}
		\label{fig:Commparison_KZIHT_sampling}
	\end{figure}

\section{Additional Related Work}
Nuniti et al. \cite{nutini2016convergence} have studied the convergence of RK based on multi-step analysis, but they have focused on its application to greedy row selection rules to improve the performance of RK, rather than compressed sensing or tresholding-based methods. On the other hand, we analyze KZ-based thresholding methods with shuffling row selection rules to obtain sparse solutions to (underdetermined) linear systems by utilizing the near orthogonal property of random matrices. 

In order to show theoretical guarantees of KZIHT or KZPT for sub-Gaussian random matrices, we rely on the estimates of the matrix product, 
\[
\left(I - \gamma {a_m a_m^T \over \|a_m\|_2^2} \right) \left(I - \gamma {a_{m-1} a_{m-1}^T \over \|a_{m-1}\|_2^2} \right) \cdots \left(I - \gamma {a_1 a_1^T \over \|a_1\|_2^2} \right).
\]
There are interesting recent works about the matrix product of this form, for example, \cite{emme2017limit, huang2021matrix, lai2020recht, recht2012toward} giving concentration bounds for the matrix product. However, these bounds still seem to be too loose to achieve our goal  for mainly two reasons. First, the effective number of terms in the product such that their bounds are applicable should be at least $\Omega(N)$, which is generally not applicable to the compressed sensing setting or underdetermined systems with $m \ll N$. Second, many analyses do not take into account the signs of the terms in the product, so in fact they only offer the same bounds for the product $ \left(I + \gamma {a_m a_m^T \over \|a_m\|_2^2} \right) \left(I + \gamma {a_{m-1} a_{m-1}^T \over \|a_{m-1}\|_2^2} \right) \cdots \left(I + \gamma {a_1 a_1^T \over \|a_1\|_2^2} \right)$, which are not meaningful for our purpose since they would only give convergence rates greater than $1$ in our analysis. 

	
\section{Conclusions}
This paper gives the first theoretical analysis of the convergence of KZIHT and proposes our new method KZPT, which generalizes KZIHT in two ways by introducing periodic thresholding and step sizes. We have shown that KZIHT enjoys linear convergence up to optimal statistical bias for randomly subsampled BOS. Extending our idea for the proof of the KZIHT convergence analysis enables us to show that KZPT converges linearly up to optimal statistical bias for sub-Gaussian random matrices as well. Additionally, we have derived the optimal choice of the thresholding period for the best performance of KZPT for the noiseless case, which indicates that KZPT outperforms KZIHT if the period and step sizes are chosen properly. Various numerical experiments have been conducted to complement our theory and demonstrate the potential advantages of our proposed methods.

\bibliographystyle{plain}
\bibliography{KZPT_Final}
	
\end{document}